\newtheorem{theorem}{Theorem}
\newtheorem{propo}[theorem]{Proposition}
\newcommand{\algorithmfootnote}[2][\footnotesize]{%
  \let\old@algocf@finish\@algocf@finish
  \def\@algocf@finish{\old@algocf@finish
    \leavevmode\rlap{\begin{minipage}{\linewidth}
    #1#2
    \end{minipage}}%
  }%
}
\titlespacing{\section}{0pt}{.5ex}{.5ex}
\begin{document}

%

%

\twocolumn[

\aistatstitle{AUC-based Selective Classification}

\aistatsauthor{Andrea Pugnana
\And 
Salvatore Ruggieri 
}

\aistatsaddress{Scuola Normale Superiore
\And  
University of Pisa
} ]

\begin{abstract}
Selective classification (or classification with a reject option) pairs a classifier with a selection function to determine whether or not a  prediction should be accepted. This framework trades off coverage (probability of accepting a prediction) with predictive performance, typically measured by distributive loss functions.
In many application scenarios, such as credit scoring, performance is instead measured by ranking metrics, such as the Area Under the ROC Curve (AUC). We propose a model-agnostic approach to associate a selection function to a given probabilistic binary classifier. The approach is specifically targeted at optimizing the AUC. We provide both theoretical justifications and a novel algorithm, called AUCROSS, to achieve such a goal. 
Experiments show that our method succeeds in trading-off coverage for AUC, improving over existing selective classification methods targeted at optimizing accuracy.
\end{abstract}
\section{INTRODUCTION}

The predictive performance of a classifier is typically not homogeneous over the data distribution. Identifying sub-populations with low performance is helpful, e.g. for debugging and monitoring purposes. In many socially sensitive application scenarios, we can consider not predicting at all for these sub-populations as an alternative to poor or even harmful predictions. Such applications include credit scoring, curriculum screening, access to public benefits, and medical diagnoses.  
\textit{Selective classification} (or \textit{classification with a reject option}) \citep{DBLP:journals/tit/Chow70,DBLP:conf/icml/Pietraszek05} pairs a classifier with a selection function to determine whether a prediction should be accepted or the classifier should abstain. The selection function assesses the trustworthiness of a prediction.
The literature on selective classification has mainly considered distributive loss functions. However, in many real-world scenarios, such as credit scoring \citep{Engelmann2011}, the quality of a binary probabilistic classifier is concerned with the discriminative power of the ranking induced by its scores \citep{DBLP:journals/tkde/HuangL05}. This is measured by the Area Under the ROC Curve (AUC) \citep{DBLP:journals/prl/Fawcett06} or, equivalently, by the Gini coefficient \citep{Wu14}. Algorithms designed to minimize, e.g., the error rate may not lead to the best possible AUC values \citep{DBLP:conf/nips/CortesM03}. 

In this paper, we first introduce the problem of \textit{AUC-based selective classification}, which consists of inducing a selective classifier which optimizes the AUC over the sub-population of accepted predictions while guaranteeing a minimum probability mass (called \textit{coverage}) of such sub-population. 
We then focus on an instance of the AUC-selective classification problem, where we assume that: \textit{(i)} the classification algorithm is given, including its hyper-parameters, and \textit{(ii)} the selection function abstains on a range of the scores (\textit{score-based selective functions}). 
Assumption \textit{(i)} allows lifting an existing classifier to a selective classifier. In this respect, our approach is model-agnostic, as we do not make assumptions about the classification algorithm. Assumption \textit{(ii)} is widespread in selective classification, supported by theoretical results  \citep{DBLP:journals/jmlr/El-YanivW10}. Such an assumption allows for a theoretical analysis of the AUC-based selective classification problem. 
We devise \textsc{AUCross}, a model-agnostic algorithm for estimating the bounds of score-based selective functions that maximize the AUC for target coverage. The approach is based on a cross-fitting strategy over the training set. Bound estimates are supported by a theoretical analysis of (sufficient) conditions for abstaining that lead to an increase in the AUC. 
Experiments show that the approach performs very close to an oracle score-based selective function (which has access to the true class of instances), and it outperforms existing selective classification approaches targeted at optimizing accuracy.

The paper is organized as follows. Section \ref{sec:rel_work} briefly surveys related work. Section \ref{sec:sel_class}  recalls basic concepts of selective classification and it introduces the AUC-based selective classification problem. Our theoretical approach is presented in Section \ref{sec:method}, while we introduce \textsc{AUCross} algorithm in Section \ref{sec:alg}. Section~\ref{sec:exp} reports experimental results. Finally, we draw conclusions and outline possible extensions. 

\section{RELATED WORK}
\label{sec:rel_work}

The two main models of selective classification are the cost model \citep{DBLP:journals/tit/Chow70} and the the bounded-improvement model \citep{DBLP:conf/icml/Pietraszek05}. In the former, the goal is to minimize the expected cost, assuming a cost for misclassification and a cost for abstention, or a more refined cost based on the confusion matrix \citep{DBLP:journals/prl/Tortorella05}.
In the Bayes optimal selective classifier, the selection function abstains when the posterior probability of the predicted class is below some threshold. Posterior probabilities can be estimated on a validation set by the \textit{plug-in} rule \citep{Herbei06}. 
In the bounded-improvement model, the selection function is evaluated based on the probability mass of the accepted region (\textit{coverage}) and the expected loss over such a region (\textit{selective risk}). One can maximize coverage for a maximum target risk or minimize risk for a minimum target coverage \citep{DBLP:conf/nips/GeifmanE17}. \cite{DBLP:conf/icml/FrancP19} establish the equivalence of cost-based and\,bounded-improvement\,models.

Most approaches for selective classification are model-specific in that they build the classifier and the selection function concurrently. Examples include methods for Support Vector Machines (SVM) \citep{DBLP:conf/svm/FumeraR02}, boosted decision trees \citep{DBLP:conf/nips/CortesDM16}, Deep Neural Networks (DNNs) \citep{DBLP:conf/nips/GeifmanE17,DBLP:conf/icml/GeifmanE19,DBLP:conf/nips/LiuWLSMU19,DBLP:conf/nips/Huang0020}.
See \cite{Hendrickx2021} for a complete overview of existing methods.
Moreover, the definitions of cost or risk in selective classification have been provided using distributive loss functions, where the loss is defined for every prediction in isolation. AUC is a metric about the ranking induced by a classifier, for which the loss is determined for pairs of instances.
To the best of our knowledge, the only work directly addressing AUC 
selective classification is \cite{DBLP:conf/aaai/ShenYG20}. However, 
the selection function is used here to accept or to abstain from ranking \textit{pairs} of instances. The AUC to be optimized is defined as the mean correct order of accepted pairs (an extension of the Mann-Whitney U-statistics): a same instance may appear in both an accepted and a rejected pair. 
We target, instead, either accepting or rejecting predictions for each single instance. The AUC we optimize is the mean correct order of any pair of accepted cases (i.e., the Mann-Whitney U-statistics over the accepted region). 

In this paper, we adhere to the bounded-improvement model, with the AUC as the metric to optimize by abstaining on \textit{single instances}, and we take a model-agnostic view of the problem. Notice that  AUC-based selective classification is orthogonal to the many approaches for optimizing AUC in supervised learning \citep{DBLP:journals/corr/YangYing2022}.

\section{BACKGROUND}
\label{sec:sel_class}

Consider random variables $(\mathbf{X},Y) \in \mathcal{X}\times\mathcal{Y}$, where  $\mathcal{X} \subseteq \mathbb{R}^d$ is a feature space and $\mathcal{Y} = \{ 0, 1, \ldots, n_{\mathcal{Y}}\}$ a finite label space. The joint distribution of $(\mathbf{X},Y) \sim \mathcal{D}$ is unknown, but we can observe one or more datasets of i.i.d.~realizations. 
A \textit{classifier} is a function $h:\mathcal{X}\to\mathcal{Y}$ that maps features to classes, computed from an hypothesis space 
and a dataset (training set). The expected loss over the distribution $R(h)=\int_{\mathcal{X}\times\mathcal{Y}}l(h(\mathbf{x}),y)d\mathcal{D}(\mathbf{x},y) = \mathbb{E}_{\mathcal{D}}[l(h(\mathbf{X}),Y)]$ is called the \textit{risk}, where $l:\mathcal{Y}\times \mathcal{Y} \to \mathbb{R}$ is a loss function. The risk can be estimated starting from a test set $S_n = \{(\mathbf{x}_i,y_i)\}_{i=1}^{n}$ through the \textit{empirical risk} 
$\hat{r}(h|S_n) =
\frac{1}{n}\sum_{i=1}^{n}l(h(\mathbf{x}_i),y_i)$. 
Such a canonical setting is extended to model situations where predictions of classifiers are not sufficiently reliable, and the option of abstaining is preferable. A \textit{selective classifier} is a pair $(h,g)$, where $h$ is a classifier and $g:\mathcal{X}\to\{0,1\}$ is a \textit{selection function}, which determines when to accept/abstain from using $h$:
\begin{equation*}
(h,g)(\mathbf{x})=
        \begin{cases}
      h(\mathbf{x}) & \text{if}\ g(\mathbf{x})=1 \\
      \text{abstain} & \text{otherwise}
    \end{cases}
\end{equation*}
A soft selection approach  \citep{DBLP:conf/nips/GeifmanE17} consists of defining $g$ in terms of a \textit{confidence function} $k_h : \mathcal{X}\to [0,1]$ (the subscript highlights that $k_h$ depends on $h$) and a threshold $\theta$  of minimum confidence for accepting: 
\begin{equation}
\label{eq:selfunction_conf}
g(\mathbf{x}) = \mathbb{1}(k_h(\mathbf{x})>\theta)
\end{equation}
A good confidence function should order instances based on descending loss, i.e., if $k_{h}(\mathbf{x}_i) \leq k_{h}(\mathbf{x}_j)$ then $l(h(\mathbf{x}_i),y_i) \geq l(h(\mathbf{x}_j),y_j)$.
The \textit{coverage} of a selective classifier is $\phi(g)=E_{\mathcal{D}}[g(\mathbf{X})]$, i.e., the expected probability mass of the accepted region. The \textit{selective risk} is risk normalized by coverage:
\begin{equation*}
    R(h,g) = \frac{\mathbb{E}_{\mathcal{D}}[l(h(\mathbf{X}),Y)g(\mathbf{X})]}{\phi(g)} = \mathbb{E}_{\mathcal{D}}[l(h(\mathbf{X}),Y) | g(\mathbf{X})]
\end{equation*}
Empirical coverage and empirical selective risk are respectively defined as follows:\\[-2ex]
\[ \hat{\phi}(g|S_n) = \frac{\sum_{i=1}^n g(\mathbf{x}_i)} {n} \] 
\[ \hat{r}(h,g|S_n) = \frac{\frac{1}{n} \sum_{i=1}^n l(h(\mathbf{x}_i),y_i)g(\mathbf{x}_i)}{ \hat{\phi}(g|S_n)} \]
By defining $S^g_m = \{ (\mathbf{x}_i, y_i) \in S_n \ |\ g(\mathbf{x}_i )=1\}$, the empirical coverage is $\hat{\phi}(g|S_n) = |S^g_m|/n = m/n$ and the empirical selective risk reduces to $\hat{r}(h,g|S_n) = \hat{r}(h|S^g_m)$, namely to the empirical risk over the accepted instances.
The inherent trade-off between risk and coverage is summarized by the \textit{risk-coverage curve}  \citep{DBLP:journals/jmlr/El-YanivW10}.
The selective classification problem can be framed by fixing an upper bound to the selective risk and looking for a selective classifier that maximizes coverage. \cite{DBLP:conf/nips/GeifmanE17} 
show how to convert this framing into an alternative one, where a lower bound $c$ for coverage is fixed (\textit{target coverage}), and then we look for a selective classifier that minimizes selective risk. We adhere to such a formulation. Called $\theta$ the parameter(s) defining $h$ and $g$ (e.g., as in the soft selection approach), the \textit{selective classification problem} is stated as:
\begin{equation*} \label{scp}
 \min_{\theta} R(h,g) \hspace{2ex} 
 \text{s.t.} \quad \phi(g)\geq c
\end{equation*}
Let us now extend the framework to the AUC metric.
We consider binary classes (0/1, negatives/positives) and probabilistic classifiers, where $h(\mathbf{x}) \in [0, 1]$ is an estimate of the probability that $\mathbf{x}$ is positive. Let $\mathcal{D}_1$ and $\mathcal{D}_0$ be the conditional distributions of positives and negatives, respectively.
The AUC can be defined as the probability that a randomly drawn positive receives a higher score than a randomly drawn negative, conditioned to the fact that both are selected according to $g$, i.e.
\begin{align*}
\label{eq:selauc}
    AUC(h,g) = \mathbb{E}_{\mathbf{X}_0 \sim \mathcal{D}_0, \mathbf{X}_1 \sim  \mathcal{D}_1}[\mathbb{1}(h(\mathbf{X}_1) > h(\mathbf{X}_0)) |\\ g(\mathbf{X}_0)=1, g(\mathbf{X}_1)=1]
\end{align*}
The \textit{AUC-selective classification problem} 
can be stated as:
\begin{equation} 
\label{eq:aucsc}
 \max_{\theta} AUC(h,g) \hspace{2ex} 
 \text{s.t.} \quad \phi(g)\geq c
\end{equation}
A good confidence function should order instances based on ascending contribution to the AUC while allowing for controlling the target coverage. Empirical AUC given $S_n$ can be defined by restricting to the set $S^g_m = \{ (\mathbf{x}_i, y_i) \in S_n \ |\ g(\mathbf{x}_i )=1\}$ of accepted instances directly from the definition above by resorting to the Mann-Whitney U-statistic: $\widehat{AUC}(h,g|S_n) = \widehat{AUC}(h|S^g_m) = \frac{1}{m^+}\frac{1}{m^-} \sum_{ (\mathbf{x}_i,1) \in S^g_m} \sum_{(\mathbf{x}_i,0) \in S^g_m} \mathbb{1}(h(\mathbf{x}_i) > h(\mathbf{x}_j))$, where $m^+$ is the number of positives in $S^g_m$, and $m^-$ is the number of negatives in $S^g_m$. An alternative calculation of $\widehat{AUC}$ is the Area Under the Receiver Operating Characteristic (ROC) Curve \citep{DBLP:journals/prl/Fawcett06}.

\section{AUC SELECTIVE CLASSIFICATION}\label{sec:method}

We consider the AUC-selective classification problem for binary probabilistic classifiers. We make the further assumption that the parameters of $h$ are fixed, i.e., $\theta$ in (\ref{eq:aucsc}) includes only the parameters of the selection function $g$.
Let us denote by $H$ a binary probabilistic classifier induction algorithm whose hyper-parameters are fixed. The algorithm induces the classifier $h$ starting from a training set. $h(\mathbf{x})$ is the score assigned by $h$ to an instance $\mathbf{x}$.
We aim to lift $h$ to a selective classifier $(h, g)$ by calculating a selection function $g$ from the following hypothesis space (\textit{score-bounded selection functions}):
\begin{equation}\label{eq:gk}
    g(\mathbf{x}) = \begin{cases}
    0 \quad \text{if } \theta_{l} \leq h(\mathbf{x}) \leq \theta_{u}\\ 
    1 \quad \text{otherwise}
    \end{cases}
\end{equation}
Intuitively, the selective classifier abstains if the score is within the bounds $\theta_l$ and $\theta_u$. 

To identify the bounds of $g$, we consider the following scenario, where we have a sample $S_n=\{\mathbf{x}_{i}, y_{i}\}_{i=1}^{n}$ and a classifier $h$. 
Our objective is to estimate a score-bounded selection function $g$ such that: \textit{(i)} $\hat{\phi}(g|S_{n}) \geq c$; and \textit{(ii)}: $\widehat{AUC}(h, g | S_{n})$ is as large as possible.
Let us define $n = n^+ + n^-$, with $n^+$ (resp., $n^-$) the number of positive (resp., negative) instances in $S_n$. Let $p^+=n^+/n$ be the fraction of positives and let $p^-= 1- p^+$ be the fraction of  negatives. Without loss of generality, we can assume that $\mathbf{x}_1, \ldots, \mathbf{x}_n$ is ordered by non-ascending scores, i.e., $h(\mathbf{x}_i) \geq h(\mathbf{x}_{i+1})$ for $i \in [1, n-1]$.
The rank of an instance is $r(\mathbf{x}_{i})=i$, and its relative rank is $r(\mathbf{x}_{i})/n$. We write $t({\mathbf{x}_{i}})$ to indicate the number of positive instances occurring in ranks from $1$ to $i$. The true positive rate (TPR) at $\mathbf{x}_i$ is the ratio $tpr(\mathbf{x}_{i})=t(\mathbf{x}_{i})/n^+$. We depart from reasoning directly on the definition of the AUC. 
Instead, we consider the linearly related definition of the Gini coefficient \citep{Engelmann2011}:
\begin{equation}\label{eq:g2auc}
    G = 2\cdot AUC - 1
\end{equation}
for which the maximization problem is equivalent. The Gini coefficient (also known as Accuracy Ratio) is defined starting from the Cumulative Accuracy Profile (CAP), which maps relative rank to TPR. The Gini coefficient can be written as $G = A/(A+B)$ where $A$ is the area of the CAP between the diagonal and the line of the classifier $h$, and $B$ is the area between the line of $h$ and the line of a perfect classifier $h^{\star}$. A perfect classifier assigns a score of $1$ to positives and a score of $0$ to negatives; hence it ranks all positives first, then all negatives afterwards. The diagonal line represents the performance of a random classifier. In summary, the Gini coefficient measures the discriminative power of a probabilistic classifier as a fraction of the difference in power between a perfect classifier and a random classifier.
Using the CAP plot and the Gini coefficient will be crucial in proving our results.
Figure \ref{fig:CAP_modified} shows a sample CAP plot where the axes are ranks and true positives instead of their relative counterparts. The Gini coefficient can equivalently be computed as ratio $A n^+ n /( A n^+ n + B n^+ n)$. We observe that $(A+B) n^+ n$ is $n^+ n^+ /2$ (the area of $h^{\star}$ for positives) plus $n^- n^+$ (the area of $h^{\star}$ for negatives) minus $n^+ n/2$ (the area under the diagonal). Hence $A+B = n^-/(2n) = p^-/2$.

Let us define $r'(\mathbf{x}_i)=n-r(\mathbf{x}_i)+1$, i.e. $r'(\mathbf{x}_i)$ is the rank over the non-descending scores. We show next a (sufficient) condition to improve $\widehat{AUC}(h, g | S_{n})$ by abstaining on positive instances.
\begin{figure*}[t!]
\begin{minipage}{.31\textwidth}
    \centering
    \includegraphics[scale=0.18]{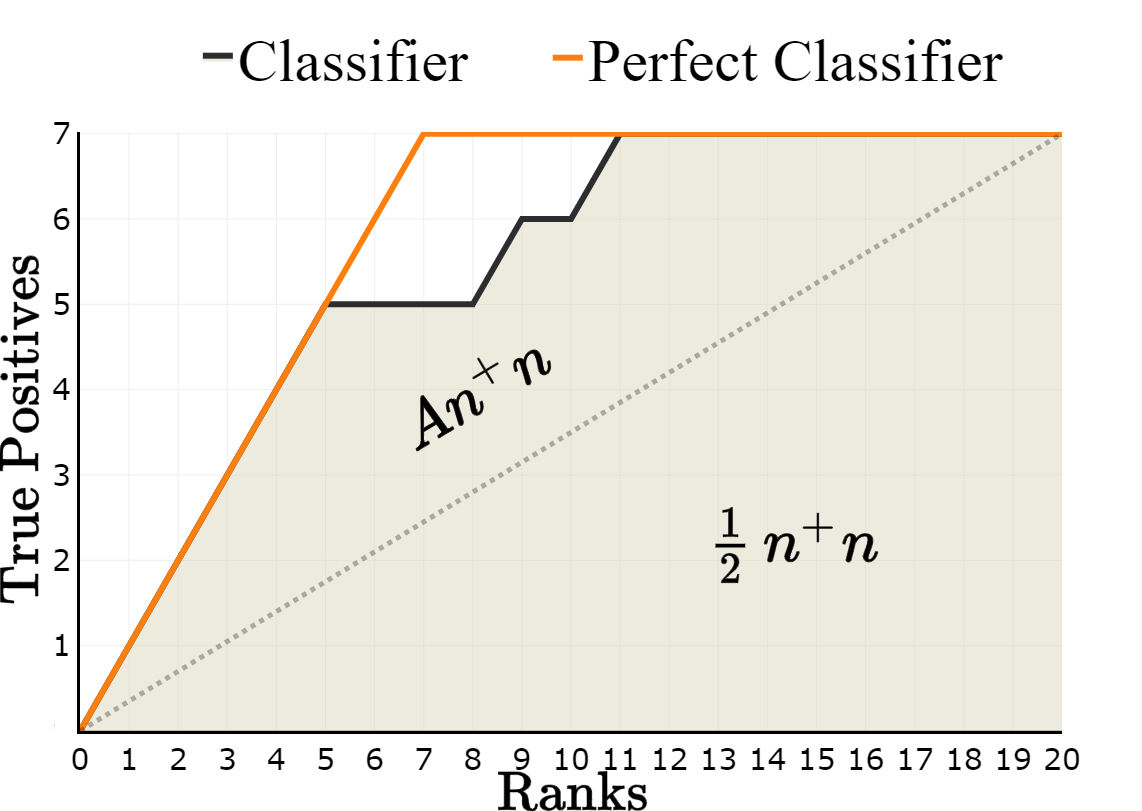}\\[1ex]
    \captionof{figure}{\small{Sample CAP plot with $n=20$ and $n^+=7$.}}
    \label{fig:CAP_modified}
\end{minipage}\hfill
\begin{minipage}{.31\textwidth}
    \centering
    \includegraphics[scale=0.18]{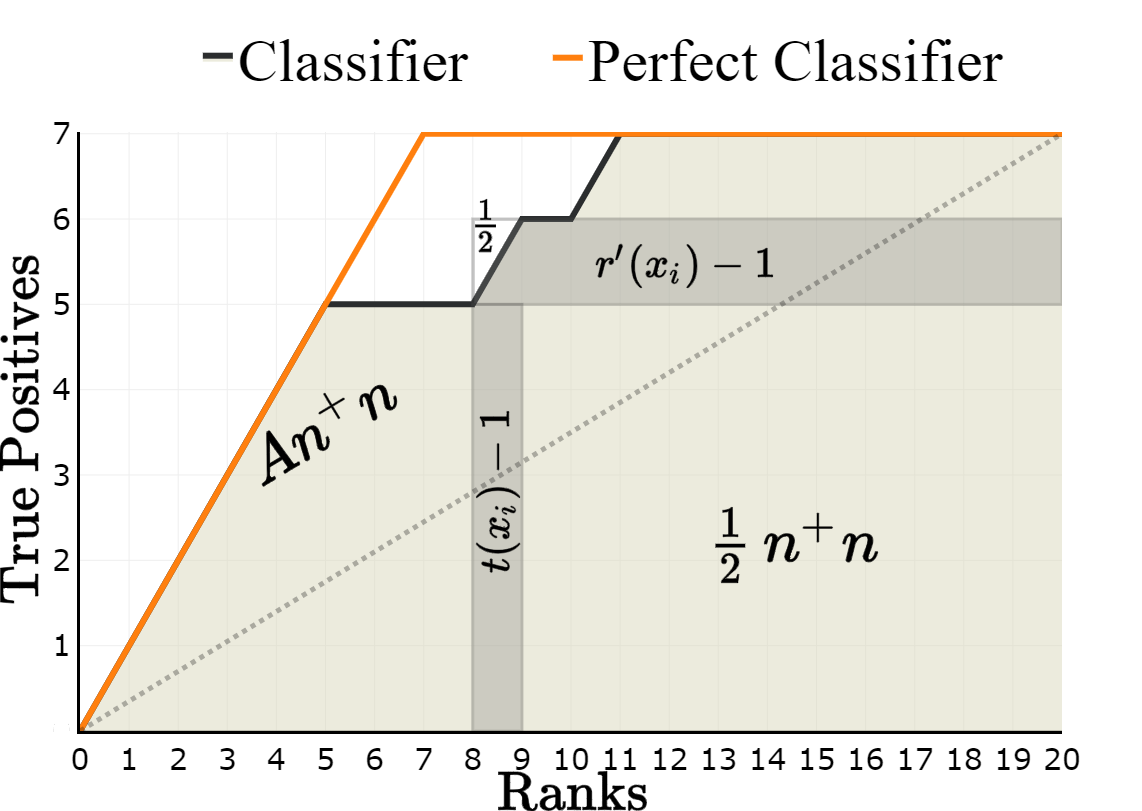}\\[1ex]
    \captionof{figure}{\small{CAP plot before removing a positive instance.}}
    \label{fig:CAP_remove_pos}
\end{minipage}\hfill
\begin{minipage}{.31\textwidth}    \centering
    \includegraphics[scale=0.18]{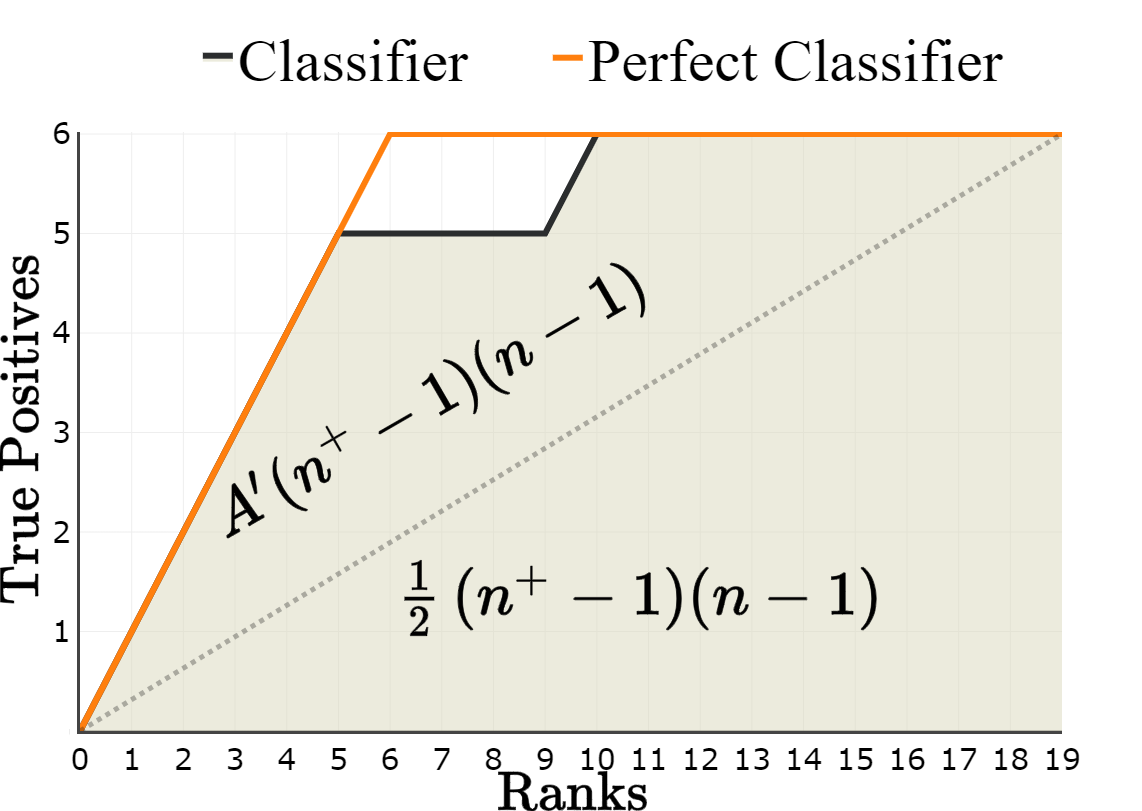}\\[1ex]
    \captionof{figure}{\small{CAP plot after removing a positive instance.}}
    \label{fig:CAP_after_remove_pos}
\end{minipage}
\end{figure*}
\begin{propo}
For any (number of) positive instance $\mathbf{x}_i$ such that $g(\mathbf{x}_i)=1$ and:
\begin{equation}
    \frac{r'(\mathbf{x}_i)}{n} \leq \widehat{AUC}(h, g | S_{n}) \cdot p^-
    \label{eq:pro1cond}
\end{equation}
we have: $\widehat{AUC}(h, g | S_{n}) \leq \widehat{AUC}(h, g' | S_{n})$, where $g'(\mathbf{x}_i)=0$ and $g'(\mathbf{x})=g(\mathbf{x})$ otherwise.
The inequality is strict if at least one such $\mathbf{x}_i$ exists in $S_n$.
\label{pro:positive1}
\end{propo}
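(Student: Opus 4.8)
The plan is to work with the Gini coefficient $G = 2\cdot AUC - 1$ (an increasing reparametrisation of the AUC, by (\ref{eq:g2auc})) and to read off the effect of abstaining on $\mathbf{x}_i$ from the CAP plot of the accepted instances, i.e.\ the transition between Figures~\ref{fig:CAP_remove_pos} and~\ref{fig:CAP_after_remove_pos}. A short computation from $G = A/(A+B)$ and the identity $A+B = p^-/2$ derived for Figure~\ref{fig:CAP_modified} shows that the (unnormalised) area $B$ between the curve of $h$ and that of the perfect classifier $h^{\star}$ is exactly the number of mis-ordered positive/negative pairs, so that $\widehat{AUC}(h,g\,|\,S_n) = 1 - B/(n^+ n^-)$, all counts referring to the accepted region $S^g_m$. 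This is the form in which the argument is cleanest.

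Next I would compute how $B$, $n^+$ and $n^-$ change when we pass from $g$ to $g'$. Abstaining on the positive $\mathbf{x}_i$ leaves $n^-$ unchanged, decreases $n^+$ by one, and deletes from $B$ precisely the mis-ordered pairs involving $\mathbf{x}_i$, namely those with a negative scored above $\mathbf{x}_i$; there are $n^- - b_i$ of them, where $b_i$ is the number of accepted negatives scored below $\mathbf{x}_i$. Geometrically this is exactly the difference visible between Figures~\ref{fig:CAP_remove_pos} and~\ref{fig:CAP_after_remove_pos}: the curve loses its unit up-step at abscissa $r(\mathbf{x}_i)$, the bounding box shrinks by one unit on each axis, and the plateau of $h^{\star}$ keeps its width $n^-$. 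The only fact about $b_i$ that is used is the crude bound $b_i \le r'(\mathbf{x}_i) - 1 < r'(\mathbf{x}_i)$: every accepted negative below $\mathbf{x}_i$ is, in particular, one of the $r'(\mathbf{x}_i)-1$ instances of $S_n$ ranked strictly below $\mathbf{x}_i$ (for tied scores one keeps the fixed order $h(\mathbf{x}_1) \ge \ldots \ge h(\mathbf{x}_n)$ and the half-weight convention; the bound is unaffected).

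Then the remaining step is routine algebra: substituting the updated quantities into $\widehat{AUC}(h,g'\,|\,S_n) = 1 - (B - n^- + b_i)/((n^+-1)n^-)$ and comparing with $1 - B/(n^+ n^-)$ shows that $\widehat{AUC}(h,g\,|\,S_n) \le \widehat{AUC}(h,g'\,|\,S_n)$ is equivalent to $b_i \le n^- \cdot \widehat{AUC}(h,g\,|\,S_n)$. Feeding in $b_i < r'(\mathbf{x}_i)$ and then hypothesis (\ref{eq:pro1cond}) (via $p^- n = n^-$ specialised to the accepted region) finishes the proof, and since $b_i < r'(\mathbf{x}_i)$ is strict it also yields strictness as soon as one such $\mathbf{x}_i$ exists; the ``(number of)'' form follows by iterating, as after each abstention the surviving flagged positives still satisfy an inequality of the same shape against an $\widehat{AUC}$ value that has only increased.

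The step I expect to cost the most care is the CAP-area bookkeeping under the simultaneous contraction of both axes — tracking the additive corrections from the removed unit step and the relabelling of the ranks to the right of $\mathbf{x}_i$ so that the clean criterion $b_i \le n^-\cdot\widehat{AUC}(h,g\,|\,S_n)$ genuinely drops out — together with the smaller point of making sure that the negative count appearing there is the one over $S^g_m$, which is exactly what $p^-$ in (\ref{eq:pro1cond}) is meant to capture.
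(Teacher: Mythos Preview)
Your argument is correct and reaches the same sufficient condition as the paper, but by a different and somewhat more direct route. The paper stays inside the CAP/Gini picture: it writes $G=A/(A+B)$ with $A+B=p^-/2$, computes the new numerator $\bar A$ after deleting one positive via the area bookkeeping in~(\ref{eq:gini_eq}) (the rescaling factor $n^+n/((n^+-1)(n-1))$ being the crux), simplifies $\bar A(n-1)>An$ to $r'(\mathbf{x}_i)/n \le A + p^-/2 + (n^+-t(\mathbf{x}_i))/n$, and then drops the nonnegative term $(n^+-t(\mathbf{x}_i))/n$ to obtain~(\ref{eq:pro1cond}). You instead work with the Mann--Whitney form $\widehat{AUC}=1-B/(n^+n^-)$, read off directly that removing $\mathbf{x}_i$ sends $B\mapsto B-(n^--b_i)$ and $n^+\mapsto n^+-1$, and get the clean equivalence $\widehat{AUC}'\ge\widehat{AUC}\iff b_i\le n^-\,\widehat{AUC}$; the slack you throw away is $r'(\mathbf{x}_i)-1-b_i$, which is exactly the paper's discarded term $n^+-t(\mathbf{x}_i)$ (both equal the number of accepted positives ranked strictly below $\mathbf{x}_i$). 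So the two proofs are algebraically equivalent; yours avoids the two-axis rescaling in~(\ref{eq:gini_eq}) and makes the strictness and the iteration for multiple removals (via $b_j$ unchanged, $n^-$ unchanged, $\widehat{AUC}$ nondecreasing) more transparent, while the paper's version ties the computation to Figures~\ref{fig:CAP_remove_pos}--\ref{fig:CAP_after_remove_pos} and sets up the parallel argument for Proposition~\ref{pro:negative1}. Your caveat that $n$, $n^-$, $p^-$, $r'$ must be read over the accepted region $S^g_m$ is well taken; the paper effectively works under $g\equiv 1$ in the proof and only later specialises explicitly.
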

\begin{proof}[Proof]
By (\ref{eq:g2auc}), we can equivalently show the result for the Gini coefficient. Let $G = A/(A+B)$, and let $\bar{G} = \bar{A}/(\bar{A}+\bar{B})$ be the Gini coefficient after removing (a.k.a., abstaining on) one positive instance $\mathbf{x}_i$. We have $A+B = n^-/(2n)$ and $\bar{A}+\bar{B} = n^-/(2(n-1))$.
Then $\bar{G} > G$ iff:
\begin{equation}
    \label{eq:gg}
    \bar{A} (n-1) > A n
\end{equation}
As $r'(\mathbf{x}_i) = n - r(\mathbf{x}_i) + 1$, we have that $\bar{A}$ is related to $A$ as follows:
\begin{equation}
\resizebox{0.48\textwidth}{!}{$
\bar{A} + \frac{1}{2} = ( A  + \frac{1}{2} - \left[\frac{(r'(\mathbf{x}_i)-1) + (t(\mathbf{x}_i)-1) + \nicefrac{1}{2}}{n^+ n} \right]) \frac{n^+ n}{(n^+-1) (n-1)}$}
\label{eq:gini_eq}
\end{equation}
Using such equality, the condition in (\ref{eq:gg}) can be simplified to (see Appendix for full derivation):
\begin{equation*}
    \label{eq:ub}
    \frac{r'(\mathbf{x}_i)}{n} \leq A + \frac{ p^-}{2} + \frac{(n^+ - t(\mathbf{x}_i))}{n}
\end{equation*}
%
Since $n^+ \geq t(\mathbf{x}_i)$, the inequality above is satisfied if:
\begin{equation}\label{eq:suff}
    \frac{r'(\mathbf{x}_i)}{n}  \leq
    A + \frac{p^-}{2}
\end{equation}
By (\ref{eq:g2auc}) and $G = A/(A+B) = 2 A/p^-$, we have $A + \nicefrac{p^-}{2} = \widehat{AUC}(h, g | S_{n}) \cdot p^-$, and thus the conclusion (\ref{eq:pro1cond}) holds after removing \textit{one} positive instance.
%
Let us then consider the case when we remove $(1-\alpha)n^+$ positive instances, with $\alpha \in [0,1[$. Inequality (\ref{eq:suff}) becomes:
\begin{equation}
    \label{eq:al}
    \frac{r'(\mathbf{x}_i)}{n} \leq  A_{\alpha} \frac{(n^-+\alpha n^+)}{n} + \frac{p^-}{2} 
\end{equation}  
where $A_{\alpha}$ is the numerator of the Gini fraction after removing $(1-\alpha)n^+$ positives.
By repeatedly removing one positive instance at a time, (\ref{eq:gg}) implies that 
$A_{\alpha} (n^-+\alpha n^+) > A n$. Therefore, if (\ref{eq:pro1cond}) holds, then (\ref{eq:al})  holds for any $\alpha$. Therefore, by removing as many as possible positive instances $\mathbf{x}_i$ such that (\ref{eq:pro1cond}) holds, we increase the Gini coefficient, and, a fortiori, the $\widehat{AUC}(h, g' | S_{n})$. \end{proof}

A key step in the proof is equation (\ref{eq:gini_eq}). Let us give some intuitions. Abstaining on a positive instance means removing some areas in the CAP plot, as shown in grey in Figure \ref{fig:CAP_remove_pos}. The vertical grey band consists of $t(\mathbf{x}_i)-1$ cells in the $n  n^+$ grid. The horizontal grey band consists of $r'(\mathbf{x}_i)-1$ cells. In addition, an area of $1/2$ is removed from the cell with the increase of the classifier line.
Finally, after removing those areas, the grid is rescaled from $n  n^+$ to $(n^+ -1)(n-1)$, which provides the rescaling factor in (\ref{eq:gini_eq}). Figure \ref{fig:CAP_after_remove_pos} shows the CAP plot after removing a positive instance.

A similar result holds when removing negative examples. The condition is based on the TPR of the removed instance.
\begin{propo}
For any (number of) negative instance $\mathbf{x}_i$ such that $g(\mathbf{x}_i)=1$ and:
\begin{equation}
    \frac{t(\mathbf{x}_i)}{n^+} \leq \widehat{AUC}(h, g | S_{n}) - \frac{1}{n^+}
    \label{eq:pro2cond}
\end{equation}
we have: $\widehat{AUC}(h, g | S_{n}) \leq \widehat{AUC}(h, g' | S_{n})$, where $g'(\mathbf{x}_i)=0$ and $g'(\mathbf{x})=g(\mathbf{x})$ otherwise.
The inequality is strict if at least one such $\mathbf{x}_i$ exists in $S_n$.
\label{pro:negative1}
\end{propo}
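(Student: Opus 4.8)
The plan is to follow the proof of Proposition~\ref{pro:positive1} almost verbatim, with positives and negatives exchanging roles. By~(\ref{eq:g2auc}) it suffices to argue on the Gini coefficient $G = A/(A+B)$ rather than on $\widehat{AUC}(h,g|S_n)$. First I would treat the abstention on a \emph{single} negative instance $\mathbf{x}_i$ with $g(\mathbf{x}_i)=1$. Removing a negative leaves the number of positives unchanged, so the CAP grid shrinks from $n\,n^+$ to $(n-1)\,n^+$ and the denominator of the Gini fraction passes from $A+B = n^-/(2n) = p^-/2$ to $\bar A + \bar B = (n^- -1)/\bigl(2(n-1)\bigr)$. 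Hence $\bar G > G$ is equivalent to the cross-multiplied inequality $n^-(n-1)\,\bar A > n\,(n^- -1)\,A$, the analogue of~(\ref{eq:gg}).

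The crux, exactly as in the positive case, is the geometric identity expressing $\bar A$ in terms of $A$, i.e.\ the analogue of~(\ref{eq:gini_eq}). Here the cell-counting on the CAP plot is simpler: since a negative instance does not raise the classifier line, abstaining on $\mathbf{x}_i$ merely deletes one flat vertical strip of $t(\mathbf{x}_i)$ cells from the $n\,n^+$ grid --- there is no ``half-cell'' contribution and the remaining curve is neither shifted down nor rescaled vertically --- after which the horizontal side of the grid is rescaled from $n$ to $n-1$. This yields a relation of the form $\bar A\,(n-1) = n\,A + \frac{1}{2} - t(\mathbf{x}_i)/n^+$. Substituting it into the cross-multiplied inequality and simplifying with $G = 2A/p^-$, equivalently $\widehat{AUC}(h,g|S_n) = n\,A/n^- + \frac{1}{2}$, collapses the condition $\bar G > G$ to a bound of the form $t(\mathbf{x}_i)/n^+ < \widehat{AUC}(h,g|S_n)$, which is implied by~(\ref{eq:pro2cond}); the detailed cell count and the remaining algebra I would defer to the Appendix, as is done for~(\ref{eq:gini_eq}). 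I expect this CAP-plot bookkeeping to be the only real obstacle: one must be careful that, for a negative, the removed region is the flat strip below the curve (no increase-of-the-classifier-line cell as in Figure~\ref{fig:CAP_remove_pos}) and that only the $n$-axis, not the $n^+$-axis, is rescaled --- this is precisely what makes the final bound depend on $t(\mathbf{x}_i)/n^+$ rather than on $r'(\mathbf{x}_i)/n$.

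Finally, I would lift the single-instance result to ``any number of'' qualifying negatives by the same induction used at the end of the proof of Proposition~\ref{pro:positive1}: remove them one at a time. Abstaining on one negative changes neither $n^+$ nor $t(\mathbf{x}_j)$ for any other instance $\mathbf{x}_j$, and by the single-instance case it strictly increases $\widehat{AUC}(h,g|S_n)$; therefore every negative satisfying~(\ref{eq:pro2cond}) with respect to the original classifier keeps satisfying it with respect to the current (larger) AUC, so the removals can be chained and the inequality $\widehat{AUC}(h,g|S_n) \le \widehat{AUC}(h,g'|S_n)$ accumulates, strictly whenever at least one such $\mathbf{x}_i$ exists.
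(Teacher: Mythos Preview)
Your proposal is correct and follows essentially the same approach as the paper: it mirrors the proof of Proposition~\ref{pro:positive1}, deriving the analogue of~(\ref{eq:gini_eq}) for a removed negative (where, as you note, there is no half-cell and only the $n$-axis is rescaled), obtaining $\bar A(n-1)=nA+\tfrac{1}{2}-t(\mathbf{x}_i)/n^+$, and then reducing $\bar G>G$ to $t(\mathbf{x}_i)/n^+<\widehat{AUC}(h,g|S_n)$, for which~(\ref{eq:pro2cond}) is sufficient. Your observation that the multi-instance extension is simpler here than in Proposition~\ref{pro:positive1} --- because removing a negative leaves both $n^+$ and every $t(\mathbf{x}_j)$ unchanged while $\widehat{AUC}$ only increases --- is exactly right and makes the induction immediate.
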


In general, the selection functions $g'$ in Prop. \ref{pro:positive1} and Prop. \ref{pro:negative1} are not score-bounded. Let us assume that $g(\mathbf{x})=1$ for all $\mathbf{x}$, hence $\widehat{AUC}(h, g | S_{n}) = \widehat{AUC}(h | S_{n})$.
We can easily lift condition (\ref{eq:pro1cond}) to scores as follows:
\[ h(\mathbf{x}_i) \leq  \hat{q}_n(\widehat{AUC}(h | S_{n}) \cdot p^-) = \theta_u
\] 
where $\hat{q}_n$ is the empirical quantile function over the scores $\{ h(\mathbf{x}_i) \ |\ (\mathbf{x}_i, y_i) \in S_n\}$. As a consequence, $g'$ is score bounded as it boils down to $g'(\mathbf{x}) = 0$ iff $0 \leq h(\mathbf{x}) \leq \theta_u$.
Regarding (\ref{eq:pro2cond}), since TPR is anti-monotonic with the scores, we can restate the condition as follows: 
\[ \theta_l = \hat{q}_{n^+}(\widehat{AUC}(h | S_{n})-\nicefrac{1}{n^+}) \leq h(\mathbf{x}_i) \]
where $\hat{q}_{n^+}$ is the empirical quantile function over the scores of the positives $\{ h(\mathbf{x}_i) \ |\ (\mathbf{x}_i, 1) \in S_n\}$. 
In this case, $g'$ is score bounded as $g'(\mathbf{x}) = 0$ iff $\theta_l \leq h(\mathbf{x}) \leq 1$.

In summary, if $h(\mathbf{x}_i)$ is within the  bounds $\theta_l$ and $\theta_u$, then, irrespective whether $\mathbf{x}_i$ is positive or negative, by abstaining on $\mathbf{x}_i$ we obtain an increase in $\widehat{AUC}$. In summary, we have the following result.

\begin{propo}
Called $\theta_{l} = \hat{q}_{n^+}(\widehat{AUC}(h | S_{n})-\nicefrac{1}{n^+})$ and $\theta_u = \hat{q}_n(\widehat{AUC}(h | S_{n}) \cdot p^-)$, we define $g(x)$ as:
    \[ g(\mathbf{x}) = \begin{cases}
    0 \quad \text{if } \theta_{l} 
    \leq 
    h(\mathbf{x}) 
    \leq 
    \theta_u
    \label{eq:pro3cond}\\ 
    1 \quad \text{otherwise}. \end{cases}\]
Then we have: $\widehat{AUC}(h | S_{n}) \leq \widehat{AUC}(h, g | S_{n})$.
The inequality is strict if at least one  $\mathbf{x}_i$ exists in $S_n$ such that $g(\mathbf{x}_i)=0$.
\label{pro:final}
\end{propo}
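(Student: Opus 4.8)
The plan is to obtain $g$ from the all-accept selection function $g_0$ (where $g_0(\mathbf{x})=1$ for every $\mathbf{x}$) by abstaining on the instances of the band $B = \{\mathbf{x}\in\mathcal{X} : \theta_l \le h(\mathbf{x}) \le \theta_u\}$ one at a time, certifying each single abstention by Proposition~\ref{pro:positive1} (when the removed instance is positive) or Proposition~\ref{pro:negative1} (when it is negative), so that $\widehat{AUC}$ never decreases along the way. First I would dispose of the degenerate case: if no $\mathbf{x}_i\in S_n$ falls in $B$ then $g\equiv g_0$ and the claim is the trivial identity $\widehat{AUC}(h|S_n)=\widehat{AUC}(h,g|S_n)$; so from now on assume $S_n$ has at least one instance in $B$.

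Next I would match the hypotheses of the two earlier propositions to membership in $B$. For $g_0$ we have $\widehat{AUC}(h,g_0|S_n)=\widehat{AUC}(h|S_n)$, and the discussion preceding the statement already shows that in this case condition~(\ref{eq:pro1cond}) for a positive $\mathbf{x}_i$ is equivalent to $h(\mathbf{x}_i)\le \hat{q}_n(\widehat{AUC}(h|S_n)\cdot p^-)=\theta_u$, while condition~(\ref{eq:pro2cond}) for a negative $\mathbf{x}_i$ is equivalent (anti-monotonicity of the TPR in the score) to $h(\mathbf{x}_i)\ge \hat{q}_{n^+}(\widehat{AUC}(h|S_n)-\nicefrac{1}{n^+})=\theta_l$. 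Hence, with respect to $g_0$, every positive of $B$ satisfies the hypothesis of Proposition~\ref{pro:positive1} and every negative of $B$ satisfies that of Proposition~\ref{pro:negative1}, so abstaining on any single instance of $B$ does not decrease $\widehat{AUC}$; this is the ``base case'' of the induction.

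The substantive part is to show that this eligibility is stable under the iteration, i.e., that once we abstain on some instances of $B$, a still-present $\mathbf{x}_i\in B$ keeps satisfying~(\ref{eq:pro1cond}) (if positive) or~(\ref{eq:pro2cond}) (if negative) relative to the current selection function. It is convenient to use the cleared forms $r'(\mathbf{x}_i)\le \widehat{AUC}(h,g|S_n)\cdot n^-$ of~(\ref{eq:pro1cond}) and $t(\mathbf{x}_i)\le \widehat{AUC}(h,g|S_n)\cdot n^+ - 1$ of~(\ref{eq:pro2cond}), and to process $B$ in two stages: first all of its positives, then all of its negatives. Every abstention (i) does not decrease $\widehat{AUC}$, which is exactly what Propositions~\ref{pro:positive1}--\ref{pro:negative1} assert at that step, and (ii) can only decrease $r'(\mathbf{x}_j)$ and $t(\mathbf{x}_j)$ for every remaining $\mathbf{x}_j$, since it deletes one case from the ranking. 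During the positive stage $n^-$ is fixed, $r'(\mathbf{x}_i)$ does not grow and $\widehat{AUC}$ does not shrink, so eligibility under~(\ref{eq:pro1cond}) persists; during the negative stage $n^+$ is fixed, $t(\mathbf{x}_i)$ is untouched by the removal of other negatives and $\widehat{AUC}$ does not shrink, so eligibility under~(\ref{eq:pro2cond}) persists.

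The step I expect to be the main obstacle is the cross-stage check: after the entire set of positives of $B$ has been removed, one must re-establish that every negative of $B$ still satisfies $t(\mathbf{x}_i)\le \widehat{AUC}\cdot n^+ - 1$ for the now-reduced $n^+$ and the now-larger $\widehat{AUC}$ -- here the shrinkage of $n^+$ works against the inequality and must be compensated by the growth of $\widehat{AUC}$ together with the decrease of $t(\mathbf{x}_i)$ produced by having deleted positives ranked above $\mathbf{x}_i$. I would settle this either by a careful accounting along this ordering or, more robustly, by recomputing the CAP areas directly for the simultaneous removal of a set of positives and a set of negatives -- the mixed analogue of equation~(\ref{eq:gini_eq}). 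Once the iteration goes through, the terminal selection function coincides with the $g$ of the statement and satisfies $\widehat{AUC}(h|S_n)\le\widehat{AUC}(h,g|S_n)$; and whenever some $\mathbf{x}_i$ has $g(\mathbf{x}_i)=0$, the first abstention is already strict by the strictness clauses of Propositions~\ref{pro:positive1} and~\ref{pro:negative1}, while all later abstentions are non-decreasing, so the overall inequality is strict.
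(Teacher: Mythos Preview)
Your plan—start from the all-accept $g_0$, translate conditions~(\ref{eq:pro1cond}) and~(\ref{eq:pro2cond}) into the score bounds $\theta_u$ and $\theta_l$, and then invoke Propositions~\ref{pro:positive1} and~\ref{pro:negative1} to justify abstaining on every instance of the band—is exactly the paper's argument: Proposition~\ref{pro:final} is stated without a separate proof, as an immediate ``In summary'' of the discussion that precedes it. Your two-stage ordering and the strictness reasoning match that discussion.

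Where you go further is the cross-stage check: whether, after the positives of $B$ are removed and $n^+$ has shrunk, the negatives of $B$ still satisfy~(\ref{eq:pro2cond}) relative to the new state. The paper does not address this point at all; it simply asserts that membership in $[\theta_l,\theta_u]$ suffices ``irrespective whether $\mathbf{x}_i$ is positive or negative'' and treats the combination as transparent. So the obstacle you flag is a genuine point of rigor, but it is one the paper glosses over rather than resolves by some alternative device—your proposal is already at least as detailed as the published treatment.
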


\section{THE AUCROSS ALGORITHM}
\label{sec:alg}
Alg. \ref{alg:est} shows a procedure to calculate the bounds stated in Prop. \ref{pro:final} starting from empirical scores and true class labels.

\newcommand\mycommfont[1]{\scriptsize\ttfamily\scriptsize{#1}}
\SetCommentSty{mycommfont}

\IncMargin{1.5em}
\begin{algorithm2e}[h!]
\small 
    \caption{EstimateThetasAUC()}
    \label{alg:est}
	\SetKwInOut{Input}{Input}
	\SetKwInOut{Output}{Output}
	\Input{$(\mathbf{s}, \mathbf{y})$ - scores and true class labels
    }
	\Output{$(\theta_l, \theta_u)$ - bounds for AUC-based selective classification\\}
	\BlankLine
        $n, n^+, p^- \leftarrow |\mathbf{y}|, |\mathbf{y} == 1|, 1-{n^+}/{n}$\\
        $\widehat{AUC} \leftarrow AUC.ROC(\mathbf{s}, \mathbf{y})$\tcp*[f]{compute empirical AUC}\\
        $u \leftarrow \lfloor \widehat{AUC} \cdot p^- \cdot n \rfloor $\tcp*[f]{$\theta_{u}$ position}\\
        $\mathbf{tpr} \leftarrow 1 - cum.sum(\mathbf{y}[order(\mathbf{s})])/n^+$\tcp*[f]{compute true positive rates}\\
        $l\leftarrow  {search.sorted}{(\mathbf{tpr}, \widehat{AUC}-1/n^+)}$ \tcp*[f]{$\theta_{l}$ position}\\
        $\mathbf{ss} \leftarrow {sort}(\mathbf{s})$\tcp*[f]{sort scores ascending}\\
        $\theta_{l}, \theta_{u} \leftarrow \mathbf{ss}[l], \mathbf{ss}[u]$\tcp*[f]{bounds of $g$}\\
    \Return{$(\theta_{l}, \theta_{u})$}
\end{algorithm2e}
We devise an approach for lifting $H$ to an AUC-selective classifier from this procedure.
We call this induction algorithm \textsc{AUCross}.
The approach differs from existing methods in several aspects.
First, it aims to determine thresholds $\theta_l$ and $\theta_u$ specifically designed for the AUC selective classification problem  (\ref{eq:aucsc}).
Second, we prevent setting apart a validation set from the training set (as done in state-of-the-art methods) to be able to fit the classifier $h$ on the whole available training set. Since the goal is to estimate quantiles over the (unknown) population of scores of the classifier $h$, we approximate sampling from such a population by using a cross-fitting strategy, as in \cite{PugnanaRuggieri2023}\footnote{Exploiting cross-fitting is a common practice also in other fields, such as the double ML approach \cite{chernozukov2018} for causal inference and the cross-conformal prediction algorithm \cite{DBLP:journals/ml/Vovk13}.}.
Third, existing state-of-the-art methods calibrate the selection function by computing the empirical quantile of scores over a validation set. We adopt a quantile estimator based on subsamples, which improves over the variance of a full-sample quantile estimator. The theoretical backbone of our approach is based on the results by \cite{Knight2003}, reported next for completeness.
\begin{theorem}[\citep{Knight2003}, Theorem 3]\label{teo1}
Given a random sample distributed according to $F$, satisfying some regularity conditions, and $K$ non-overlapping subsamples of it, let  $\hat{q}(\alpha)$ be the empirical $\alpha$-quantile estimator of $F$ over the whole sample, and $\bar{q}(\alpha)$ a weighted average of the empirical quantile estimators of $F$ over the subsamples. 
For $t\in[0,1]$, let us define the linear combination 
$\tilde{q}(\alpha)=t\hat{q}(\alpha)+(1-t)\bar{q}(\alpha)$. The variance of $\tilde{q}(\alpha)$ is minimized for $t=\nicefrac{1}{\sqrt{2}}$, $K=2$ and equally sized subsamples.
\end{theorem}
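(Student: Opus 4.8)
The statement is asymptotic, so the plan is to pass from the estimators to their stochastic expansions around the true $\alpha$-quantile $q_\alpha$ of $F$ and to read off $\mathrm{Var}(\tilde q(\alpha))$ to the order at which the choice of $t$, $K$ and the subsample sizes first matters. Write $f=F'(q_\alpha)>0$; this is where the ``regularity conditions'' enter, namely a positive, smooth density near $q_\alpha$. The Bahadur representation gives, for a sample of size $m$,
\[
\hat q_m(\alpha)-q_\alpha=\frac{\alpha-\hat F_m(q_\alpha)}{f}+R_m,
\]
with a remainder $R_m$ of strictly smaller (Bahadur--Kiefer) order than the linear term. The first step is to apply this separately to the pooled estimator $\hat q(\alpha)$ (sample size $n$) and to each subsample estimator, and then to combine the latter with the averaging weights.

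The key observation --- and the reason the optimum is a nontrivial value such as $1/\sqrt2$ rather than $t=1$ --- is that the linear (influence-function) terms of $\hat q(\alpha)$ and of $\bar q(\alpha)$ \emph{coincide}. Indeed, when the subsamples partition the data and the averaging weights are proportional to the subsample sizes (which, for equally sized subsamples, means equal weights), one has $\sum_k w_k\,\hat F_{m_k}(q_\alpha)=\hat F_n(q_\alpha)$, so both estimators share the term $(\alpha-\hat F_n(q_\alpha))/f$. Consequently $\mathrm{Var}(\tilde q(\alpha))$ is, to leading order $\alpha(1-\alpha)/(nf^2)$, independent of $t$, and the entire dependence on $t$, $K$ and the sizes lives in the remainders. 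This is what makes the problem a genuinely \emph{second-order} one.

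To organize the optimization I would write $\tilde q(\alpha)=\hat q(\alpha)-(1-t)\,D$ with $D=\hat q(\alpha)-\bar q(\alpha)$, a pure remainder quantity whose linear part has just been shown to vanish. Then
\[
\mathrm{Var}(\tilde q(\alpha))=\mathrm{Var}(\hat q(\alpha))-2(1-t)\,\mathrm{Cov}(\hat q(\alpha),D)+(1-t)^2\,\mathrm{Var}(D),
\]
a quadratic in $t$ whose minimizer is $(1-t)^\star=\mathrm{Cov}(\hat q(\alpha),D)/\mathrm{Var}(D)$, with optimal value $\mathrm{Var}(\hat q(\alpha))-\mathrm{Cov}(\hat q(\alpha),D)^2/\mathrm{Var}(D)$. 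The plan is then: (i) compute $\mathrm{Cov}(\hat q(\alpha),D)$ and $\mathrm{Var}(D)$ to the relevant order as functions of $K$ and the subsample sizes $m_1,\dots,m_K$, using the Bahadur--Kiefer expansion (or, equivalently, exact moments of the order statistics via their Beta and joint densities, exploiting that the subsamples are disjoint while the pooled sample contains them); (ii) substitute to obtain $t^\star=t^\star(K,\{m_k\})$; and (iii) minimize the resulting optimal variance jointly over the allocation and over $K$. Maximizing $\mathrm{Cov}(\hat q(\alpha),D)^2/\mathrm{Var}(D)$ should force equal subsample sizes by a symmetry/convexity argument, and comparing across integer $K$ should single out $K=2$, at which point the ratio yields $(1-t)^\star=1-1/\sqrt2$, i.e.\ $t^\star=1/\sqrt2$.

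The main obstacle is step (i): because the leading terms cancel, everything hinges on the precise second-order behaviour of the quantile remainders --- their variance as a function of sample size and, above all, the covariance between the pooled remainder and the disjoint subsample remainders. Controlling these requires the Bahadur--Kiefer asymptotics (or a careful exact order-statistic moment computation) and is exactly where the regularity conditions are used; this is the delicate, calculation-heavy core of the argument, and the part one would either reconstruct in full or import directly from Knight's original derivation.
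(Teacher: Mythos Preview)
The paper does not prove this theorem. It is imported verbatim from \cite{Knight2003} and stated ``for completeness'' as the theoretical backbone of the quantile-estimation step in \textsc{AUCross}; no proof or even proof sketch appears in the paper or its appendix. Consequently there is no paper proof for your proposal to be compared against.

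That said, your sketch is consistent with the one substantive remark the paper does make about the result: immediately after the theorem, the authors note that $\bar q(\alpha)$ is first-order equivalent to $\hat q(\alpha)$ and that the theorem concerns minimization of the variance of the \emph{second-order} term. This is exactly the cancellation you identify via the shared influence-function part, and it correctly explains why the optimum in $t$ is nontrivial. Where your plan remains a plan is step~(i): you acknowledge but do not actually carry out the Bahadur--Kiefer (or exact order-statistic) computation of $\mathrm{Cov}(\hat q(\alpha),D)$ and $\mathrm{Var}(D)$ as functions of $K$ and the $m_k$, which is precisely the content of Knight's argument and what is needed to conclude $t^\star=1/\sqrt{2}$, $K=2$, equal sizes. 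If you want a self-contained proof you must either reproduce those second-order moment calculations or cite Knight directly, as the paper does.
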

The sample quantile $\hat{q}(\alpha)$ is known to be asymptotically normal. The weighted mean of subsample quantiles $\bar{q}(\alpha)$ is first-order equivalent to $\hat{q}(\alpha)$. 
The above theorem states the conditions for minimizing the variance of (the second-order term of) any linear combination of the two estimators.

We report the pseudo-code of \textsc{AUCross} in Alg. \ref{alg:fit}.

\begin{algorithm2e}[h!]
\small 
    \caption{\textsc{AUCross}.fit()}
    \label{alg:fit}
	\SetKwInOut{Input}{Input}
	\SetKwInOut{Output}{Output}
	\Input{$(\mathbf{X}, \mathbf{y})$ - training set,\\
    $H$ - binary probabilistic classifier,\\
    $c$ - target coverage,\\ $K$ - number of folds\\
    }
	\Output{$(h, g)$ - selective classifier}
	\BlankLine
	$\mathit{Lbs}, \mathit{Ubs} \leftarrow [], []$\tcp*[f]{empty lists of  bounds} \\
	$S \leftarrow \text{StratifiedKFold}((\mathbf{X}, \mathbf{y}),K)$\tcp*[f]{stratified $K$-fold partition}\\
	\For(\tcp*[f]{for each fold}){$\mathbf{X}_k, \mathbf{y}_k \in S$}{
	    $(\mathbf{X}'_k, \mathbf{y}'_k) = (\mathbf{X} - \mathbf{X}_k, \mathbf{y}-\mathbf{y}_k)$\tcp*[f]{training data}\\
	    $h_k \leftarrow H.fit(\mathbf{X}'_k, \mathbf{y}'_k)$\tcp*[f]{train $k^{th}$ classifier}\\
	    $\mathbf{s}_k \leftarrow h_k.score(\mathbf{X}_k)$\tcp*[f]{score test data}\\}
	$\mathbf{s} \leftarrow \cup_{k=1}^K \mathbf{s}_k$
	\tcp*[f]{store all the scores}\\
    $n \leftarrow |\mathbf{y}|$  \tcp*[f]{number of instances}\\ 
    $\theta_l, \theta_u \leftarrow \textit{EstimateThetasAUC}(\mathbf{s},\mathbf{y})$\tcp*[f]{bounds for scores over whole training set}\\ 
    $J \leftarrow KFold((\mathbf{s},\mathbf{y}), 2)$ \tcp*[f]{split scores and actual values in two sub-samples} \\
    \For(\tcp*[f]{for each sub-sample}){$(\mathbf{s}_j,\mathbf{y}_j)\in J$}{
        $\theta_{l_j}, \theta_{u_j} \leftarrow \textit{EstimateThetasAUC}(\mathbf{s}_j,\mathbf{y}_j)$
    }
    $\theta_{u^*} \leftarrow \frac{1}{\sqrt{2}}\theta_u +(1-\frac{1}{\sqrt{2}})\sum_{i=1}^2\theta_{u_i}$\tcp*[f]{estimate of $\theta_u$}\\
    $\theta_{l^*}\leftarrow \frac{1}{\sqrt{2}}\theta_l +(1-\frac{1}{\sqrt{2}})\sum_{i=1}^2\theta_{l_i}$\tcp*[f]{estimate of $\theta_l$}\\
    $mid \leftarrow \lfloor n \cdot (\theta_{u^*}+\theta_{l^*})/2 \rfloor$ \tcp*[f]{compute mid point position} \\
    $\delta \leftarrow n \cdot (1-c)/{2} $ \tcp*[f]{half-width of rejection area}\\
    $u'\leftarrow \min\{\lfloor mid+\delta \rfloor, n\}$ \tcp*[f]{upper bound position}\\
    $l'\leftarrow \max\{1,\lfloor mid-\delta\rfloor\}$\tcp*[f]{lower bound position}\\
    $\mathbf{ss} \leftarrow {sort}(\mathbf{s})$\tcp*[f]{sort scores ascending}\\
    $\theta_{l'}, \theta_{u'} \leftarrow \mathbf{ss}[l'], \mathbf{ss}[u']$\tcp*[f]{bounds of $g$}\\
    $h \leftarrow H.fit(\mathbf{X}, \mathbf{y})$\tcp*[f]{train  classifier}\\ 
    $g \leftarrow \text{\textbf{lambda}}\;\mathbf{x}: 1-\mathbb{1}(\theta_{l'} \leq h.score(\mathbf{x}) \leq \theta_{u'})$\\ \tcp*[f]{selection function}\\
    \Return{$(h, g)$}
\end{algorithm2e}

\textsc{AUCross} iterates over a stratified $K$-fold partitioning of the training set (lines 3-6 of Alg. \ref{alg:fit}), where, for each of the folds $k$, we use $K-1$ folds to train a classifier $h_k$ (using $H$) and to predict scores $h_k(\mathbf{X}_k)$ over the  fold $k$ (lines 4-5). 
We store all these predictions in $\mathbf{s}$ (line 7), and we use them to compute quantiles according to Prop. \ref{pro:final} (line 9 and Alg. \ref{alg:est}).
Since $\theta_l$ and $\theta_u$ are quantiles estimates, we can improve their second-order behaviour by randomly splitting the obtained scores and the target variable $(\mathbf{s},\mathbf{y})$ into two parts (line 10) and repeat the estimation of quantiles over subsamples (lines 11-12). We combine the estimates according to Thm. \ref{teo1} to obtain the final quantiles $\theta_{l^*}$ and $\theta_{u^*}$ (lines 13-14).

The quantile estimation methods considered so far, and in particular $\theta_l$ and $\theta_u$ in Prop. \ref{pro:final}, are intended to maximize the AUC independently of the desired coverage $c$. To address the minimum coverage constraint $c$, we centre the rejection area at the midpoint between the instances with score $\theta_{l^*}$ and $\theta_{u^*}$ (line 15). 
Such a mid point $mid$ is the median of the distribution of scores from $\theta_{l^*}$ up to $\theta_{u^*}$. We consider a rejection area $[l', u']$ centered in $mid$ and with width $n \cdot (1-c)$, also checking not to exceed the range $[1, n]$ (lines 16-18).
The final selective classifier $(h,g)$ is then obtained by: \textit{(i)} fitting $H$ on the whole sample to get $h$ (line 21); \textit{(ii)} setting the bounds in the selection function $g$ to the scores $\theta_{l'}$ and $\theta_{u'}$ at the boundaries $l'$ and $u'$ of the rejection area (lines 20 and 22).

We point out that the bounds computed by \textsc{AUCross} may be sup-optimal, either because they do not achieve the minimum target coverage $c$ or do not maximize the AUC. Even when restricting to the class of score-bounded selection functions, the reached AUC may be sub-optimal: for instance, we used (\ref{eq:suff}) as a sufficient but not necessary condition during the proof of Prop. \ref{pro:positive1}. We investigate the experimental performance of \textsc{AUCross} in the next section, also concerning such theoretical caveats.

\section{EXPERIMENTS}

We run experiments on nine real-world datasets (seven tabular datasets and two image datasets). We split available data instances into 75\% for training selective classifiers and 25\% for performance testing. When possible, the split was based on timestamps to allow for out-of-time validation. Otherwise, a stratified random split is performed. A summary of the experimental datasets is reported in Table \ref{tab:data}, including the number of features, size for training and test, and the rate of positives in the test set. For the sake of space, we detail the pre-processing procedures in the Appendix. We only mention here that, since we are interested in a binary classification task, we labelled all the cat images of \textit{CIFAR-10-cat} as the positive class, while images of other classes are considered as negatives. 
\begin{table}[t!]

    \centering
        \caption{Experimental datasets.}
    \resizebox{.48\textwidth}{!}{
\begin{tabular}{c|c|c|c|c}
\textbf{Dataset} & \textbf{\# Features} & \textbf{Training Size} & \textbf{Test Size} & \textbf{Positive Rate} \\
\midrule
\textit{\href{https://archive.ics.uci.edu/ml/datasets/adult}{Adult}} & 55    & 30,162 & 15,060 & .246 \\
\textit{\href{https://www.kaggle.com/wordsforthewise/lending-club}{LendingClub}} & 65    & 1,364,697 & 445,912 & .225 \\
\textit{\href{https://www.kaggle.com/c/GiveMeSomeCredit}{GiveMe}} & 12    & 112,500 & 37,500 & .067 \\
\textit{\href{https://archive.ics.uci.edu/ml/datasets/default+of+credit+card+clients}{UCICredit}} & 23    & 22,500 & 7,500 & .221 \\
\textit{CSDS1} \cite{DBLP:journals/eswa/BarddalLEL20} & 155   & 230,409 & 76,939 & .144 \\
\textit{CSDS2} \cite{DBLP:journals/eswa/BarddalLEL20} & 35    & 37,100 & 12,533 & .018 \\
\textit{CSDS3} \cite{DBLP:journals/eswa/BarddalLEL20} & 144   & 71,177 & 23,288 & .253 \\
\textit{\href{https://www.kaggle.com/competitions/dogs-vs-cats}{CatsVsDogs}} & 64x64 & 20,000 & 5,000 & .500 \\
\textit{\href{https://www.cs.toronto.edu/~kriz/cifar.html}{CIFAR-10-cat}} & 32x32 & 50,000 & 10,000 & .100 \\
\bottomrule
\end{tabular}
}
\label{tab:data}
\end{table}
Experiments were run on a machine with 96 cores equipped with Intel(R) Xeon(R) Gold 6342 CPU @ 2.80GHz and two NVIDIA RTX A6000, OS Ubuntu 20.04.4, programming\,language\,Python\,3.8.12.

\label{sec:exp}


\begin{table*}
\begin{center}
\caption{\small{Absolute deviation in empirical AUC of \textsc{AUCross} w.r.t. an \textsc{Oracle}.\label{tab:auc_dev}}}
\resizebox{.8\textwidth}{!}{
\begin{tabular}{c|ccccccccc}
\textbf{c} & \textbf{Adult} & \textbf{Lending} & \textbf{GiveMe} & \textbf{UCICredit} & \textbf{CSDS1} & \textbf{CSDS2} & \textbf{CSDS3} & \textbf{CatsVsDogs} & \textbf{CIFAR-10-cat} \\
\midrule
.99   & .0003 & .0000 & .0004 & .0008 & .0003 & .0057 & .0001 & .0000 & .0001 \\
.95   & .0011 & .0000 & .0009 & .0008 & .0012 & .0050  & .0005 & .0000 & .0002 \\
.90   & .0022 & .0000 & .0004 & .0007 & .0024 & .0080  & .0011 & .0000 & .0006 \\
.85   & .0033 & .0000 & .0003 & .0013 & .0033 & .0027 & .0017 & .0004 & .0016 \\
.80   & .0033 & .0000 & .0006 & .0007 & .0039 & .0006 & .0020  & .0004 & .0049 \\
.75   & .0037 & .0001 & .0005 & .0001 & .0053 & .0027 & .0023 & .0000 & .0193 \\
\end{tabular}%

}
\end{center}
\end{table*}
\begin{figure}
\centering
\includegraphics[scale=0.3]{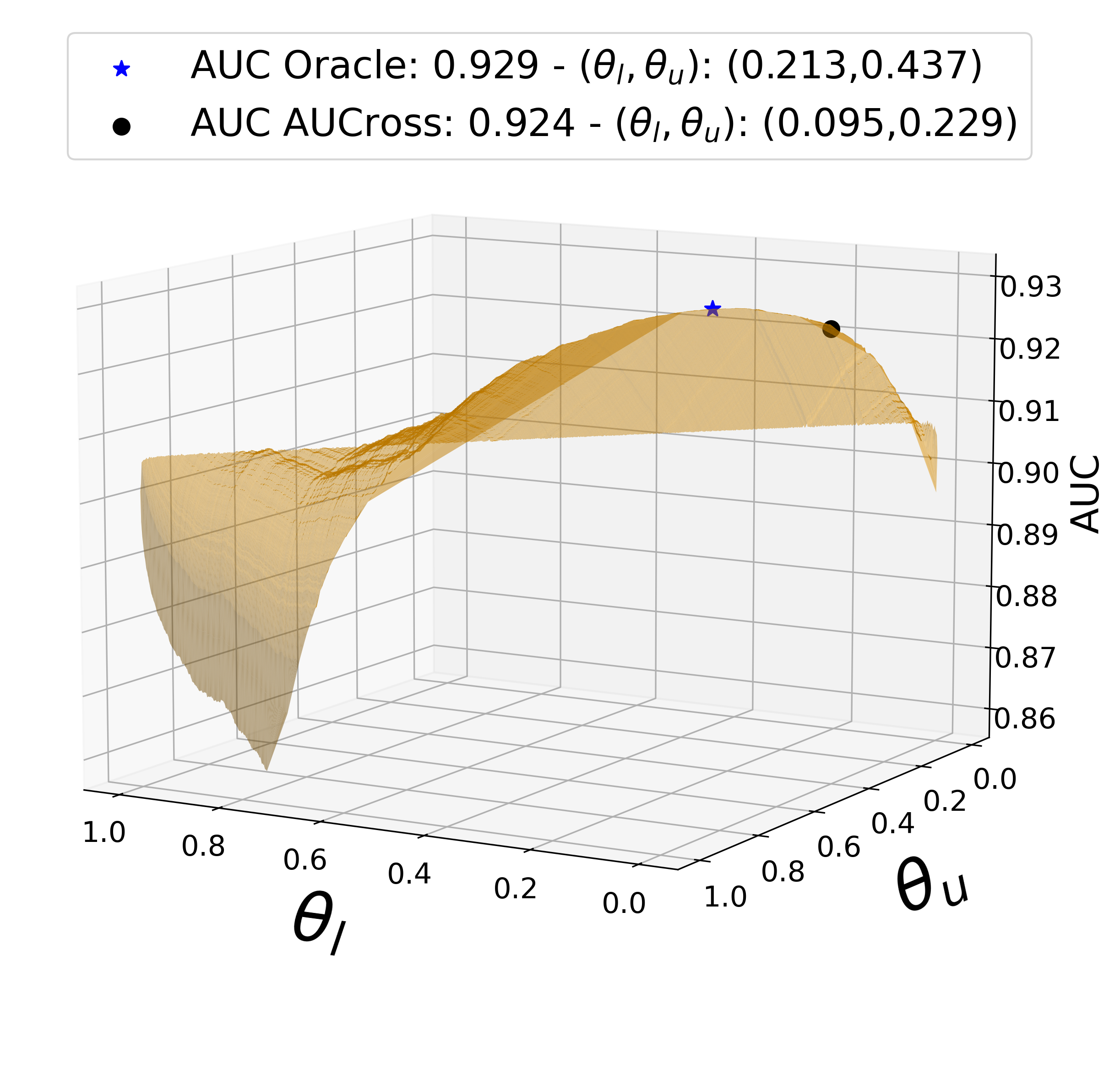}\\[-2ex]
\caption{\small{AUC by varying bounds: Adult and $c=.90$.}}
\label{fig:grid}
\end{figure}

\paragraph{\textsc{AUCross} vs \textsc{Oracle}}The first experiment evaluates how well the \textsc{AUCross} algorithm approximates the optimal solution to the problem of maximizing  $\widehat{AUC}(h, g | S_n)$ s.t. $\hat{\phi}(g|S_n) \geq c$ over the test set $S_n$, considering the family of score-bounded selection functions. For tabular data, $h$ is fixed to a LightGBM classifier \citep{DBLP:conf/nips/KeMFWCMYL17} with default hyperparameters. For image data, we used a VGG architecture as a base classifier \citep{DBLP:journals/corr/SimonyanZ14a}, trained with cross-entropy loss. 
We enumerate all the score-bounded selection functions $g(\mathbf{x}) = 0$ iff $\theta_l \leq h(\mathbf{x}) \leq \theta_u  $ by varying the bounds $\theta_l, \theta_u \in \{ h(\mathbf{x}_i) \ |\ (\mathbf{x}_i, y) \in S_n \}$ such that $\hat{\phi}(g|S_n) \geq c$. The \textsc{Oracle} approach chooses the function $g$ which maximizes $\widehat{AUC}(h, g | S_n)$. It requires knowing the true class of instances in $S_n$; hence it is not feasible in practice. Table~\ref{tab:auc_dev} shows the difference between the empirical AUC ($\widehat{AUC}$) achieved by the \textsc{Oracle} and the one of \textsc{AUCross} on the test sets of the experimental datasets (the value of the empirical AUC for \textsc{AUCross} is reported in Table~\ref{tab:metrics} and discussed later on). 
We notice that \textsc{AUCross} reaches the \textsc{Oracle} performance for the \textit{LendingClub} and \textit{CatsVsDogs} datasets.
For the datasets \textit{Adult}, \textit{CSDS1}, \textit{CSDS3} and \textit{CIFAR-10-cat}, there are  larger violations for smaller $c$'s, while we do not see such a trend for \textit{GiveMe}, \textit{UCICredit} and \textit{CSDS2}. This suggests that the strategy of centring the rejection area at a midpoint (line 15 of Alg. \ref{alg:fit}) might be sub-optimal. Figure \ref{fig:grid} shows the performance over the space of bounds in a specific case, highlighting the bounds of \textsc{Oracle} and \textsc{AUCross}. 

\paragraph{\textsc{AUCross} vs baselines} We compare the performance of  \textsc{AUCross} to a few baselines. The first one is a bounded-improvement version of the plug-in rule \citep{Herbei06} (\textsc{PlugIn}). \textsc{PlugIn} uses softmax  as confidence function, i.e.,~in the binary case, $g(x)=\mathbb{1}(\max\{h(x),1-h(x)\}>\theta)$, and it estimates the $\theta$ parameter as the $(1-c)$ quantile on a validation set. Such an approach requires splitting the training set into a dataset for building the classifier and a validation set for estimating $\theta$ (10\% of the original dataset in our experiments). In contrast, \textsc{AUCross} allows for building the classifier using the whole training set.
Second, we consider two hybrid versions of \textsc{AUCross} and \textsc{PlugIn}: (1) a plug-in rule specialized for AUC (\textsc{PlugInAUC}), i.e., that uses a score-bounded selection function by estimating bounds as in Alg.~\ref{alg:est} on a validation set; (2)  a cross-fitting version of \textsc{PlugIn} (called \textsc{SCross} by \cite{PugnanaRuggieri2023}), where the selection function is softmax and the estimation of the bounds exploits the second-order improvement of
of Thm. \ref{teo1}.
\begin{table*}[ht!]
\centering
\caption{\small{Performance metrics (1,000 bootstrap runs over the test set, results as mean $\pm$ stdev). $V$ (for violation) is the absolute difference between the empirical coverage and the target coverage $c$.}}
\resizebox{1.01\textwidth}{!}{%
\begin{tabular}{c|c|>{\centering\arraybackslash}p{1.75cm}>{\centering\arraybackslash}p{1.71cm}>{\centering\arraybackslash}p{1.92cm}ccc|>{\centering\arraybackslash}p{1.75cm}>{\centering\arraybackslash}p{1.71cm}>{\centering\arraybackslash}p{1.92cm}ccc}
\multicolumn{1}{c}{} & \multicolumn{1}{c}{} & \multicolumn{6}{c}{\textbf{Empirical Coverage }} & \multicolumn{6}{c}{\textbf{Selective AUC}} \\
      & \boldmath{}\textbf{$c$}\unboldmath{} & \textbf{\textsc{AUCross}} & \textbf{\textsc{PlugIn}} & \textbf{\textsc{PlugInAUC}} & \textbf{\textsc{SCross}} & \textbf{\textsc{SelNet}} & \textbf{\textsc{SAT}} & \textbf{\textsc{AUCross}} & \textbf{\textsc{PlugIn}} & \textbf{\textsc{PlugInAUC}} & \textbf{\textsc{SCross}} & \textbf{\textsc{SelNet}} & \textbf{\textsc{SAT}} \\
\midrule
\multirow{6}{*}{\rotatebox[origin=c]{90}{\textbf{Adult}}}  & .99   & \boldmath{}\textbf{.989 $\pm$ .001}\unboldmath{} & .992 $\pm$ .001 & .991 $\pm$ .001 & .993 $\pm$ .001 & .985 $\pm$ .001 & .998 $\pm$ .001 & \boldmath{}\textbf{.929 $\pm$ .003}\unboldmath{} & .928 $\pm$ .003 & .928 $\pm$ .003 & .928 $\pm$ .003 & .901 $\pm$ .003 & .902 $\pm$ .003 \\
      & .95   & \boldmath{}\textbf{.950 $\pm$ .002}\unboldmath{} & .947 $\pm$ .002 & .950 $\pm$ .002 & \boldmath{}\textbf{.950 $\pm$ .002}\unboldmath{} & .954 $\pm$ .002 & .984 $\pm$ .002 & \boldmath{}\textbf{.935 $\pm$ .003}\unboldmath{} & \boldmath{}\textbf{.935 $\pm$ .003}\unboldmath{} & .934 $\pm$ .003 & .935 $\pm$ .003 & .907 $\pm$ .003 & .903 $\pm$ .003 \\
      & .90   & .896 $\pm$ .003 & \boldmath{}\textbf{.900 $\pm$ .003}\unboldmath{} & .898 $\pm$ .003 & .904 $\pm$ .003 & .897 $\pm$ .003 & .966 $\pm$ .002 & \boldmath{}\textbf{.943 $\pm$ .002}\unboldmath{} & .941 $\pm$ .003 & .942 $\pm$ .003 & .941 $\pm$ .003 & .917 $\pm$ .003 & .904 $\pm$ .003 \\
      & .85   & .849 $\pm$ .003 & .846 $\pm$ .003 & .850 $\pm$ .003 & \boldmath{}\textbf{.850 $\pm$ .003}\unboldmath{} & .772 $\pm$ .004 & .952 $\pm$ .002 & \boldmath{}\textbf{.950 $\pm$ .002}\unboldmath{} & .947 $\pm$ .003 & .949 $\pm$ .002 & .948 $\pm$ .003 & .848 $\pm$ .005 & .904 $\pm$ .003 \\
      & .80   & .797 $\pm$ .004 & .794 $\pm$ .004 & .793 $\pm$ .004 & \boldmath{}\textbf{.802 $\pm$ .004}\unboldmath{} & .793 $\pm$ .004 & .934 $\pm$ .003 & \boldmath{}\textbf{.958 $\pm$ .002}\unboldmath{} & .952 $\pm$ .003 & .957 $\pm$ .002 & .953 $\pm$ .003 & .924 $\pm$ .004 & .904 $\pm$ .003 \\
      & .75   & \boldmath{}\textbf{.750 $\pm$ .004}\unboldmath{} & .743 $\pm$ .004 & .744 $\pm$ .004 & .746 $\pm$ .004 & .736 $\pm$ .004 & .919 $\pm$ .003 & \boldmath{}\textbf{.963 $\pm$ .002}\unboldmath{} & .958 $\pm$ .003 & .962 $\pm$ .002 & .958 $\pm$ .003 & .917 $\pm$ .004 & .904 $\pm$ .003 \\
\midrule
\midrule
\multirow{6}{*}{\rotatebox[origin=c]{90}{\textbf{Lending}}}  & .99   & .996 $\pm$ .001 & .995 $\pm$ .001 & .997 $\pm$ .001 & \boldmath{}\textbf{.994 $\pm$ .001}\unboldmath{} & .997 $\pm$ .001 & .073 $\pm$ .001 & .983 $\pm$ .001 & \boldmath{}\textbf{.984 $\pm$ .001}\unboldmath{} & .984 $\pm$ .001 & .983 $\pm$ .001 & .973 $\pm$ .001 & .664 $\pm$ .004 \\
      & .95   & .980 $\pm$ .001 & .970 $\pm$ .001 & .980 $\pm$ .001 & \boldmath{}\textbf{.969 $\pm$ .001}\unboldmath{} & .985 $\pm$ .001 & .073 $\pm$ .001 & \boldmath{}\textbf{.985 $\pm$ .001}\unboldmath{} & \boldmath{}\textbf{.985 $\pm$ .001}\unboldmath{} & \boldmath{}\textbf{.985 $\pm$ .001}\unboldmath{} & .984 $\pm$ .001 & .973 $\pm$ .001 & .664 $\pm$ .004 \\
      & .90   & .959 $\pm$ .001 & \boldmath{}\textbf{.939 $\pm$ .001}\unboldmath{} & .958 $\pm$ .001 & \boldmath{}\textbf{.938 $\pm$ .001}\unboldmath{} & .928 $\pm$ .001 & .073 $\pm$ .001 & \boldmath{}\textbf{.987 $\pm$ .001}\unboldmath{} & .986 $\pm$ .001 & \boldmath{}\textbf{.987 $\pm$ .001}\unboldmath{} & .986 $\pm$ .001 & .981 $\pm$ .001 & .664 $\pm$ .004 \\
      & .85   & .936 $\pm$ .001 & .909 $\pm$ .001 & .935 $\pm$ .001 & \boldmath{}\textbf{.908 $\pm$ .001}\unboldmath{} & .866 $\pm$ .001 & .073 $\pm$ .001 & \boldmath{}\textbf{.989 $\pm$ .001}\unboldmath{} & .988 $\pm$ .001 & \boldmath{}\textbf{.989 $\pm$ .001}\unboldmath{} & .987 $\pm$ .001 & .977 $\pm$ .001 & .664 $\pm$ .004 \\
      & .80   & .912 $\pm$ .001 & .880 $\pm$ .001 & .911 $\pm$ .001 & .879 $\pm$ .001 & \boldmath{}\textbf{.850 $\pm$ .001}\unboldmath{} & .073 $\pm$ .001 & .990 $\pm$ .001 & .989 $\pm$ .001 & \boldmath{}\textbf{.991 $\pm$ .001}\unboldmath{} & .989 $\pm$ .001 & .991 $\pm$ .001 & .664 $\pm$ .004 \\
      & .75   & .886 $\pm$ .001 & .850 $\pm$ .001 & .884 $\pm$ .001 & \boldmath{}\textbf{.849 $\pm$ .001}\unboldmath{} & .880 $\pm$ .001 & .073 $\pm$ .001 & \boldmath{}\textbf{.992 $\pm$ .001}\unboldmath{} & .990 $\pm$ .001 & \boldmath{}\textbf{.992 $\pm$ .001}\unboldmath{} & .990 $\pm$ .001 & .976 $\pm$ .001 & .664 $\pm$ .004 \\
\midrule
\midrule
\multirow{6}{*}{\rotatebox[origin=c]{90}{\textbf{GiveMe}}}  & .99   & \boldmath{}\textbf{.990 $\pm$ .001}\unboldmath{} & \boldmath{}\textbf{.990 $\pm$ .001}\unboldmath{} & \boldmath{}\textbf{.990 $\pm$ .001}\unboldmath{} & \boldmath{}\textbf{.990 $\pm$ .001}\unboldmath{} & \boldmath{}\textbf{.944 $\pm$ .002}\unboldmath{} & .746 $\pm$ .003 & .868 $\pm$ .004 & .861 $\pm$ .004 & \boldmath{}\textbf{.869 $\pm$ .004}\unboldmath{} & .859 $\pm$ .004 & .747 $\pm$ .007 & .730 $\pm$ .006 \\
      & .95   & \boldmath{}\textbf{.950 $\pm$ .002}\unboldmath{} & .949 $\pm$ .002 & .947 $\pm$ .002 & \boldmath{}\textbf{.950 $\pm$ .002}\unboldmath{} & .944 $\pm$ .002 & .727 $\pm$ .003 & .874 $\pm$ .004 & .824 $\pm$ .006 & \boldmath{}\textbf{.875 $\pm$ .004}\unboldmath{} & .826 $\pm$ .006 & .748 $\pm$ .007 & .729 $\pm$ .006 \\
      & .90   & .898 $\pm$ .002 & .903 $\pm$ .002 & .897 $\pm$ .002 & .902 $\pm$ .002 & \boldmath{}\textbf{.900 $\pm$ .002}\unboldmath{} & .705 $\pm$ .003 & \boldmath{}\textbf{.883 $\pm$ .004}\unboldmath{} & .784 $\pm$ .007 & \boldmath{}\textbf{.883 $\pm$ .004}\unboldmath{} & .784 $\pm$ .007 & .708 $\pm$ .008 & .728 $\pm$ .007 \\
      & .85   & .848 $\pm$ .002 & .856 $\pm$ .002 & .849 $\pm$ .002 & .852 $\pm$ .002 & \boldmath{}\textbf{.848 $\pm$ .002}\unboldmath{} & .699 $\pm$ .003 & \boldmath{}\textbf{.890 $\pm$ .004}\unboldmath{} & .759 $\pm$ .008 & \boldmath{}\textbf{.890 $\pm$ .004}\unboldmath{} & .757 $\pm$ .008 & .658 $\pm$ .009 & .728 $\pm$ .007 \\
      & .80   & .795 $\pm$ .003 & .807 $\pm$ .003 & .801 $\pm$ .003 & \boldmath{}\textbf{.800 $\pm$ .003}\unboldmath{} & .802 $\pm$ .003 & .694 $\pm$ .003 & \boldmath{}\textbf{.898 $\pm$ .004}\unboldmath{} & .744 $\pm$ .009 & .897 $\pm$ .004 & .743 $\pm$ .009 & .677 $\pm$ .009 & .727 $\pm$ .007 \\
      & .75   & .748 $\pm$ .003 & .756 $\pm$ .003 & .750 $\pm$ .003 & .748 $\pm$ .003 & \boldmath{}\textbf{.754 $\pm$ .003}\unboldmath{} & .689 $\pm$ .003 & \boldmath{}\textbf{.904 $\pm$ .004}\unboldmath{} & .738 $\pm$ .010 & \boldmath{}\textbf{.904 $\pm$ .004}\unboldmath{} & .734 $\pm$ .010 & .676 $\pm$ .010 & .727 $\pm$ .007 \\
\midrule
\midrule
\multirow{6}{*}{\rotatebox[origin=c]{90}{\textbf{UCICredit}}}  & .99   & .988 $\pm$ .002 & .993 $\pm$ .002 & .993 $\pm$ .001 & .993 $\pm$ .001 & \boldmath{}\textbf{.989 $\pm$ .002}\unboldmath{} & 1.000$\pm$ .000 & .772 $\pm$ .008 & .772 $\pm$ .007 & \boldmath{}\textbf{.774 $\pm$ .007}\unboldmath{} & .770 $\pm$ .008 & .765 $\pm$ .007 & .768 $\pm$ .007 \\
      & .95   & .944 $\pm$ .003 & .948 $\pm$ .003 & .946 $\pm$ .003 & \boldmath{}\textbf{.951 $\pm$ .003}\unboldmath{} & .909 $\pm$ .004 & 1.000$\pm$ .000 & .778 $\pm$ .008 & .769 $\pm$ .008 & \boldmath{}\textbf{.780 $\pm$ .007}\unboldmath{} & .765 $\pm$ .008 & .718 $\pm$ .009 & .768 $\pm$ .007 \\
      & .90   & .896 $\pm$ .004 & \boldmath{}\textbf{.897 $\pm$ .004}\unboldmath{} & .900 $\pm$ .004 & .906 $\pm$ .004 & .904 $\pm$ .004 & .994 $\pm$ .001 & .785 $\pm$ .008 & .758 $\pm$ .008 & \boldmath{}\textbf{.787 $\pm$ .008}\unboldmath{} & .754 $\pm$ .009 & .707 $\pm$ .009 & .769 $\pm$ .007 \\
      & .85   & .840 $\pm$ .005 & \boldmath{}\textbf{.851 $\pm$ .005}\unboldmath{} & .846 $\pm$ .005 & .848 $\pm$ .005 & .855 $\pm$ .005 & .994 $\pm$ .001 & .792 $\pm$ .008 & .746 $\pm$ .009 & \boldmath{}\textbf{.795 $\pm$ .008}\unboldmath{} & .734 $\pm$ .009 & .668 $\pm$ .009 & .769 $\pm$ .007 \\
      & .80   & .789 $\pm$ .005 & \boldmath{}\textbf{.803 $\pm$ .005}\unboldmath{} & \boldmath{}\textbf{.797 $\pm$ .005}\unboldmath{} & \boldmath{}\textbf{.797 $\pm$ .005}\unboldmath{} & .810 $\pm$ .005 & .994 $\pm$ .001 & .800 $\pm$ .008 & .722 $\pm$ .010 & \boldmath{}\textbf{.802 $\pm$ .008}\unboldmath{} & .710 $\pm$ .010 & .644 $\pm$ .010 & .769 $\pm$ .007 \\
      & .75   & .737 $\pm$ .006 & .751 $\pm$ .006 & .739 $\pm$ .006 & \boldmath{}\textbf{.750 $\pm$ .005}\unboldmath{} & .759 $\pm$ .005 & .994 $\pm$ .001 & .809 $\pm$ .008 & .695 $\pm$ .011 & \boldmath{}\textbf{.810 $\pm$ .008}\unboldmath{} & .679 $\pm$ .011 & .620 $\pm$ .011 & .769 $\pm$ .007 \\
\midrule
\midrule
\multirow{6}{*}{\rotatebox[origin=c]{90}{\textbf{CSDS1}}}  & .99   & \boldmath{}\textbf{.990 $\pm$ .001}\unboldmath{} & .980 $\pm$ .001 & .991 $\pm$ .001 & .980 $\pm$ .001 & .985 $\pm$ .001 & .984 $\pm$ .001 & \boldmath{}\textbf{.686 $\pm$ .003}\unboldmath{} & .675 $\pm$ .003 & \boldmath{}\textbf{.686 $\pm$ .003}\unboldmath{} & .675 $\pm$ .003 & .671 $\pm$ .003 & .669 $\pm$ .003 \\
      & .95   & \boldmath{}\textbf{.950 $\pm$ .001}\unboldmath{} & .918 $\pm$ .001 & .949 $\pm$ .001 & .917 $\pm$ .001 & .924 $\pm$ .001 & .937 $\pm$ .001 & \boldmath{}\textbf{.689 $\pm$ .003}\unboldmath{} & .654 $\pm$ .004 & \boldmath{}\textbf{.689 $\pm$ .003}\unboldmath{} & .655 $\pm$ .004 & .652 $\pm$ .004 & .654 $\pm$ .004 \\
      & .90   & \boldmath{}\textbf{.898 $\pm$ .002}\unboldmath{} & .853 $\pm$ .002 & .898 $\pm$ .002 & .849 $\pm$ .002 & .863 $\pm$ .002 & .861 $\pm$ .002 & \boldmath{}\textbf{.693 $\pm$ .003}\unboldmath{} & .640 $\pm$ .004 & \boldmath{}\textbf{.694 $\pm$ .003}\unboldmath{} & .641 $\pm$ .004 & .638 $\pm$ .004 & .635 $\pm$ .004 \\
      & .85   & \boldmath{}\textbf{.849 $\pm$ .002}\unboldmath{} & .792 $\pm$ .002 & .845 $\pm$ .002 & .790 $\pm$ .002 & .800 $\pm$ .002 & .803 $\pm$ .002 & .697 $\pm$ .003 & .631 $\pm$ .004 & \boldmath{}\textbf{.698 $\pm$ .003}\unboldmath{} & .631 $\pm$ .004 & .630 $\pm$ .004 & .625 $\pm$ .004 \\
      & .80   & \boldmath{}\textbf{.799 $\pm$ .002}\unboldmath{} & .737 $\pm$ .002 & .795 $\pm$ .002 & .736 $\pm$ .002 & .740 $\pm$ .002 & .739 $\pm$ .002 & .702 $\pm$ .004 & .624 $\pm$ .004 & \boldmath{}\textbf{.703 $\pm$ .003}\unboldmath{} & .620 $\pm$ .004 & .618 $\pm$ .004 & .616 $\pm$ .004 \\
      & .75   & \boldmath{}\textbf{.748 $\pm$ .002}\unboldmath{} & .681 $\pm$ .002 & .747 $\pm$ .002 & .682 $\pm$ .002 & .686 $\pm$ .002 & .687 $\pm$ .002 & .706 $\pm$ .004 & .613 $\pm$ .005 & \boldmath{}\textbf{.707 $\pm$ .004}\unboldmath{} & .616 $\pm$ .005 & .609 $\pm$ .005 & .608 $\pm$ .005 \\
\midrule
\midrule
\multirow{6}{*}{\rotatebox[origin=c]{90}{\textbf{CSDS2}}}  & .99   & \boldmath{}\textbf{.990 $\pm$ .001}\unboldmath{} & .984 $\pm$ .002 & .991 $\pm$ .001 & .983 $\pm$ .002 & .980 $\pm$ .002 & 1.000$\pm$ .000 & .616 $\pm$ .020 & .585 $\pm$ .020 & .587 $\pm$ .020 & .616 $\pm$ .020 & .607 $\pm$ .020 & \boldmath{}\textbf{.623 $\pm$ .019}\unboldmath{} \\
      & .95   & \boldmath{}\textbf{.952 $\pm$ .002}\unboldmath{} & .927 $\pm$ .003 & .954 $\pm$ .002 & .916 $\pm$ .003 & .894 $\pm$ .003 & 1.000$\pm$ .000 & \boldmath{}\textbf{.619 $\pm$ .020}\unboldmath{} & .572 $\pm$ .021 & .589 $\pm$ .020 & .581 $\pm$ .022 & .587 $\pm$ .022 & .623 $\pm$ .019 \\
      & .90   & \boldmath{}\textbf{.904 $\pm$ .003}\unboldmath{} & .851 $\pm$ .004 & .907 $\pm$ .003 & .834 $\pm$ .004 & .805 $\pm$ .004 & .880 $\pm$ .003 & \boldmath{}\textbf{.620 $\pm$ .020}\unboldmath{} & .566 $\pm$ .023 & .590 $\pm$ .021 & .570 $\pm$ .023 & .560 $\pm$ .024 & .588 $\pm$ .022 \\
      & .85   & \boldmath{}\textbf{.860 $\pm$ .004}\unboldmath{} & .773 $\pm$ .004 & .861 $\pm$ .004 & .761 $\pm$ .004 & .712 $\pm$ .005 & .817 $\pm$ .004 & \boldmath{}\textbf{.631 $\pm$ .021}\unboldmath{} & .552 $\pm$ .024 & .590 $\pm$ .021 & .572 $\pm$ .025 & .562 $\pm$ .026 & .589 $\pm$ .023 \\
      & .80   & \boldmath{}\textbf{.811 $\pm$ .004}\unboldmath{} & .715 $\pm$ .005 & .818 $\pm$ .004 & .687 $\pm$ .005 & .663 $\pm$ .005 & .756 $\pm$ .004 & \boldmath{}\textbf{.631 $\pm$ .021}\unboldmath{} & .532 $\pm$ .025 & .592 $\pm$ .021 & .555 $\pm$ .026 & .547 $\pm$ .027 & .582 $\pm$ .024 \\
      & .75   & \boldmath{}\textbf{.760 $\pm$ .004}\unboldmath{} & .651 $\pm$ .005 & .776 $\pm$ .004 & .620 $\pm$ .005 & .564 $\pm$ .005 & .697 $\pm$ .005 & \boldmath{}\textbf{.635 $\pm$ .021}\unboldmath{} & .536 $\pm$ .027 & .592 $\pm$ .021 & .541 $\pm$ .028 & .579 $\pm$ .029 & .575 $\pm$ .026 \\
\midrule
\midrule
\multirow{6}{*}{\rotatebox[origin=c]{90}{\textbf{CSDS3}}}  & .99   & \boldmath{}\textbf{.991 $\pm$ .001}\unboldmath{} & .992 $\pm$ .001 & .992 $\pm$ .001 & \boldmath{}\textbf{.991 $\pm$ .001}\unboldmath{} & .988 $\pm$ .001 & .992 $\pm$ .001 & \boldmath{}\textbf{.851 $\pm$ .003}\unboldmath{} & .850 $\pm$ .003 & .850 $\pm$ .003 & .850 $\pm$ .003 & .844 $\pm$ .003 & .844 $\pm$ .003 \\
      & .95   & \boldmath{}\textbf{.948 $\pm$ .002}\unboldmath{} & .957 $\pm$ .002 & .960 $\pm$ .002 & .955 $\pm$ .002 & .942 $\pm$ .002 & .951 $\pm$ .002 & \boldmath{}\textbf{.858 $\pm$ .003}\unboldmath{} & .852 $\pm$ .003 & .855 $\pm$ .003 & .853 $\pm$ .003 & .851 $\pm$ .003 & .849 $\pm$ .003 \\
      & .90   & \boldmath{}\textbf{.901 $\pm$ .002}\unboldmath{} & .909 $\pm$ .002 & .909 $\pm$ .002 & .911 $\pm$ .002 & .913 $\pm$ .002 & .902 $\pm$ .002 & \boldmath{}\textbf{.865 $\pm$ .003}\unboldmath{} & .855 $\pm$ .003 & .863 $\pm$ .003 & .856 $\pm$ .004 & .850 $\pm$ .004 & .855 $\pm$ .004 \\
      & .85   & \boldmath{}\textbf{.849 $\pm$ .003}\unboldmath{} & .859 $\pm$ .003 & .856 $\pm$ .003 & .865 $\pm$ .003 & .867 $\pm$ .003 & .856 $\pm$ .003 & \boldmath{}\textbf{.873 $\pm$ .003}\unboldmath{} & .857 $\pm$ .004 & .871 $\pm$ .003 & .858 $\pm$ .004 & .851 $\pm$ .004 & .858 $\pm$ .004 \\
      & .80   & \boldmath{}\textbf{.805 $\pm$ .003}\unboldmath{} & .808 $\pm$ .003 & .811 $\pm$ .003 & .818 $\pm$ .003 & .796 $\pm$ .003 & .819 $\pm$ .003 & \boldmath{}\textbf{.880 $\pm$ .003}\unboldmath{} & .859 $\pm$ .004 & .878 $\pm$ .003 & .859 $\pm$ .004 & .859 $\pm$ .004 & .858 $\pm$ .004 \\
      & .75   & .759 $\pm$ .003 & \boldmath{}\textbf{.758 $\pm$ .003}\unboldmath{} & .765 $\pm$ .003 & .769 $\pm$ .003 & .765 $\pm$ .003 & .776 $\pm$ .003 & \boldmath{}\textbf{.887 $\pm$ .003}\unboldmath{} & .858 $\pm$ .004 & .884 $\pm$ .003 & .859 $\pm$ .004 & .843 $\pm$ .004 & .853 $\pm$ .004 \\
\midrule
\midrule
\multirow{6}{*}{\rotatebox[origin=c]{90}{\textbf{CatsVsDogs}}}  & .99   & .992 $\pm$ .002 & \boldmath{}\textbf{.990 $\pm$ .002}\unboldmath{} & .991 $\pm$ .002 & .992 $\pm$ .002 & .991 $\pm$ .002 & .993 $\pm$ .002 & .984 $\pm$ .003 & \boldmath{}\textbf{.985 $\pm$ .002}\unboldmath{} & \boldmath{}\textbf{.985 $\pm$ .002}\unboldmath{} & .985 $\pm$ .002 & .979 $\pm$ .003 & \boldmath{}\textbf{.985 $\pm$ .002}\unboldmath{} \\
      & .95   & .962 $\pm$ .003 & .965 $\pm$ .003 & \boldmath{}\textbf{.954 $\pm$ .004}\unboldmath{} & .959 $\pm$ .003 & \boldmath{}\textbf{.954 $\pm$ .003}\unboldmath{} & .959 $\pm$ .003 & .986 $\pm$ .003 & .985 $\pm$ .002 & \boldmath{}\textbf{.987 $\pm$ .002}\unboldmath{} & .987 $\pm$ .003 & \boldmath{}\textbf{.987 $\pm$ .002}\unboldmath{} & .987 $\pm$ .002 \\
      & .90   & .920 $\pm$ .004 & .910 $\pm$ .004 & \boldmath{}\textbf{.907 $\pm$ .005}\unboldmath{} & .917 $\pm$ .004 & .891 $\pm$ .005 & .910 $\pm$ .005 & .987 $\pm$ .003 & .984 $\pm$ .002 & .988 $\pm$ .002 & .988 $\pm$ .003 & \boldmath{}\textbf{.990 $\pm$ .002}\unboldmath{} & .989 $\pm$ .002 \\
      & .85   & .883 $\pm$ .005 & .854 $\pm$ .005 & .858 $\pm$ .005 & .871 $\pm$ .005 & \boldmath{}\textbf{.852 $\pm$ .006}\unboldmath{} & .862 $\pm$ .005 & .987 $\pm$ .003 & .984 $\pm$ .002 & .989 $\pm$ .002 & .988 $\pm$ .003 & \boldmath{}\textbf{.990 $\pm$ .002}\unboldmath{} & .990 $\pm$ .002 \\
      & .80   & .841 $\pm$ .006 & \boldmath{}\textbf{.797 $\pm$ .006}\unboldmath{} & .805 $\pm$ .006 & .807 $\pm$ .006 & .809 $\pm$ .006 & .818 $\pm$ .006 & .987 $\pm$ .003 & .985 $\pm$ .002 & .990 $\pm$ .002 & .989 $\pm$ .003 & \boldmath{}\textbf{.991 $\pm$ .002}\unboldmath{} & .990 $\pm$ .002 \\
      & .75   & .781 $\pm$ .006 & .740 $\pm$ .007 & \boldmath{}\textbf{.749 $\pm$ .007}\unboldmath{} & .744 $\pm$ .007 & .765 $\pm$ .007 & .768 $\pm$ .007 & .987 $\pm$ .003 & .983 $\pm$ .002 & .990 $\pm$ .002 & .988 $\pm$ .003 & \boldmath{}\textbf{.992 $\pm$ .002}\unboldmath{} & .990 $\pm$ .002 \\
\midrule
\midrule
\multirow{6}{*}{\rotatebox[origin=c]{90}{\textbf{CIFAR10-CAT}}}  & .99   & 1.000$\pm$ .001 & .991 $\pm$ .001 & .989 $\pm$ .002 & .992 $\pm$ .001 & .992 $\pm$ .001 & \boldmath{}\textbf{.990 $\pm$ .002}\unboldmath{} & .890 $\pm$ .009 & .915 $\pm$ .007 & .909 $\pm$ .008 & \boldmath{}\textbf{.922 $\pm$ .007}\unboldmath{} & .896 $\pm$ .008 & .893 $\pm$ .008 \\
      & .95   & .998 $\pm$ .001 & \boldmath{}\textbf{.950 $\pm$ .003}\unboldmath{} & .948 $\pm$ .003 & .953 $\pm$ .003 & .956 $\pm$ .003 & .952 $\pm$ .003 & .889 $\pm$ .009 & .895 $\pm$ .009 & \boldmath{}\textbf{.911 $\pm$ .007}\unboldmath{} & .906 $\pm$ .009 & .902 $\pm$ .008 & .863 $\pm$ .011 \\
      & .90   & .994 $\pm$ .001 & \boldmath{}\textbf{.899 $\pm$ .004}\unboldmath{} & .898 $\pm$ .003 & .903 $\pm$ .003 & .904 $\pm$ .003 & .913 $\pm$ .003 & .889 $\pm$ .009 & .896 $\pm$ .009 & \boldmath{}\textbf{.913 $\pm$ .007}\unboldmath{} & .851 $\pm$ .013 & .899 $\pm$ .010 & .760 $\pm$ .017 \\
      & .85   & .987 $\pm$ .002 & .849 $\pm$ .004 & \boldmath{}\textbf{.851 $\pm$ .004}\unboldmath{} & .886 $\pm$ .004 & .863 $\pm$ .004 & .854 $\pm$ .004 & .888 $\pm$ .009 & .898 $\pm$ .009 & \boldmath{}\textbf{.916 $\pm$ .007}\unboldmath{} & .815 $\pm$ .016 & .676 $\pm$ .022 & .574 $\pm$ .024 \\
      & .80   & .969 $\pm$ .002 & .796 $\pm$ .005 & \boldmath{}\textbf{.799 $\pm$ .004}\unboldmath{} & .882 $\pm$ .004 & .806 $\pm$ .005 & .803 $\pm$ .005 & .885 $\pm$ .009 & .906 $\pm$ .008 & \boldmath{}\textbf{.919 $\pm$ .007}\unboldmath{} & .804 $\pm$ .017 & .359 $\pm$ .026 & .572 $\pm$ .024 \\
      & .75   & .945 $\pm$ .003 & \boldmath{}\textbf{.749 $\pm$ .005}\unboldmath{} & \boldmath{}\textbf{.749 $\pm$ .005}\unboldmath{} & .879 $\pm$ .004 & .763 $\pm$ .005 & .759 $\pm$ .005 & .870 $\pm$ .011 & .905 $\pm$ .008 & \boldmath{}\textbf{.921 $\pm$ .007}\unboldmath{} & .797 $\pm$ .018 & .408 $\pm$ .028 & .575 $\pm$ .025 \\
\midrule
\midrule
      & \#    & 21/54 & 12/54 & 8/54  & 15/54 & 8/54  & 1/54  & 28/54 & 4/54  & 29/54 & 6/54  & 5/54  & 2/54 \\
\midrule
      & $V$     & .026  $\pm$ .046 & .021  $\pm$ .029 & .013  $\pm$ .026 & .028  $\pm$ .036 & .029  $\pm$ .041 & .171  $\pm$ .251 &       &       &       &       &       &  \\
\end{tabular}%

}
    \label{tab:metrics}

\end{table*}
%
These two variants allow us to evaluate the relative contributions of the two key elements of our approach: the estimation of AUC-specific bounds (Prop. \ref{pro:final}), and the cross fitting approach paired with the second-order quantile estimation strategy (Thm. \ref{teo1}).
We also consider two state-of-the-art methods for selective classification, namely the SelectiveNet  \citep{DBLP:conf/icml/GeifmanE19} (\textsc{SelNet}), and the Self Adaptive Training  \citep{DBLP:conf/nips/Huang0020} (\textsc{SAT}) methods.
The approach from \cite{DBLP:conf/aaai/ShenYG20} is not included in the baselines since it rejects the comparison of \textit{pairs} of instances, while we focus on the rejection of predictions on instances in isolation.

For tabular data, \textsc{SelNet} and \textsc{SAT} are built using a ResNet structure \citep{DBLP:conf/nips/GorishniyRKB21}, while \textsc{AUCross}, \textsc{PlugIn}, \textsc{PlugInAUC} and \textsc{SCross} are based on a LightGBM classifier. We report in the Appendix results for other base classifiers.
For \textit{CatsVsDogs}, a VGG architecture \citep{DBLP:journals/corr/SimonyanZ14a} is used for all the methods. 
For all the DNN approaches, we set 300 epochs in training, Stochastic Gradient Descent as an optimizer, a learning rate of $.1$ decreased by a factor $.5$ every $25$ epochs, as in the original papers. All the parameters of base classifiers are left as the default ones.
See the Appendix for details.
%

%

Table \ref{tab:metrics} shows performance metrics (mean $\pm$ stdev) evaluated on 1,000 bootstrap runs of the test set: the empirical coverage 
$\hat{\phi}(g|S_n)$ and the empirical AUC $\widehat{AUC}(h,g|S_n)$. 
Resorting to bootstrap allows for calculating standard errors of the metrics\footnote{Other approaches include confidence intervals for AUC  \citep{10.2307/2531595,DBLP:conf/nips/CortesM04,Goncalves2014} or direct estimation of its variance \citep{Cleves2002}.} \citep{rajkomar2018scalable}. 
Moreover, it allows for quantifying the generalizability of the methods to perturbations of  the test set.
Additional results on the selective accuracy and the positive rate\footnote{Positive rate in the accepted region, compared to the positive rate in the overall test set, measures fairness of the selection function w.r.t. the class labels.} metrics are reported in the Appendix. 
Results in Table \ref{tab:metrics} show that \textsc{AUCross} achieves an empirical coverage close to the target $c$ in most of the cases for tabular data. The most significant violation occurs for \textit{LendingClub}, where also all the other methods fail to reach the target coverage. In particular, \textsc{SAT} performs poorly on such a dataset. The other largest violation occurs for \textit{CIFAR-10-cat}. Here, also the other cross-fitting-based algorithm (\textsc{SCross}) fails to reach the target one. Interestingly, the runner-up method w.r.t. coverage is \textsc{PlugInAUC}, which has the smallest average violation. A striking case is the \textit{CSDS2} dataset, which is the most imbalanced one with a positive rate of 1.8\%. Here, both \textsc{AUCross} and \textsc{PlugInAUC} have small violations, while all other approaches have larger ones. We argue that this is due to the bounds determined by 
Prop. \ref{pro:positive1} and Prop. \ref{pro:negative1}, which are specific for positives and negatives, respectively, while the other methods calibrate the selection function independently of the class labels.
Regarding the empirical AUC, \textsc{AUCross} and \textsc{PlugInAUC} outperform the other methods in most cases. For unbalanced datasets such as for \textit{GiveMe}, \textit{CSDS1} and \textit{CSDS2}, the empirical AUC drops for smaller coverages, contrarily to what is expected. Such behaviour does not occur for those methods that estimate bounds specifically for the AUC, i.e. \textsc{AUCross} and \textsc{PlugInAUC}.
We show in the Appendix that a trade-off exists in terms of selective accuracy, as both \textsc{AUCross} and \textsc{PlugInAUC} are outperformed concerning the accuracy over the accepted region. This is not surprising since they are optimizing the AUC metric, and it is well-known that accuracy and AUC cannot be jointly optimized \citep{DBLP:conf/nips/CortesM03}. The results in the Appendix also show that optimizing accuracy compromises the positive rate, while our approach is fairer in this sense. 

Finally, consider the running time performances (see the Appendix for details). \textsc{AUCross} and \textsc{SCross} require $K$ executions of the base classifier (loop at lines 3-6 of Alg. \ref{alg:fit}), while all other baselines train a single (selective) classifier. On tabular data, using efficient base classifiers results in a reasonably low total running time. Those two strategies are instead computationally expensive for image data, where DNN base classifiers are adopted. However, we observe that \textsc{PlugInAUC} exhibits performances comparable to  \textsc{SelNet} and \textsc{SAT} for \textit{CatsVsDogs}, and it outperforms them in the case of \textit{CIFAR-10-cat}. In summary, we recommend using \textsc{AUCross} for tabular datasets and \textsc{PlugInAUC} for image datasets.


\section{CONCLUSIONS}
\label{sec:end}

Selective classification 
can help prevent poor or even harmful decisions by abstaining from making an output, possibly demanding the decision to a human. We have extended the selective classification framework to a widely used classifier evaluation metric, the Area Under the ROC Curve. Through an analytic characterization, we devised methods for computing an estimator of the best score-bounded selection function. These methods are effective and outperform existing approaches designed for optimizing accuracy.
%

Limitations that require future work include the following. 
First, the approach should be extended to multi-class classification, for which the notion of AUC has been considered, e.g., the Volume Under the Surface \citep{DBLP:conf/ecml/FerriHS03}, the average of pairwise binary AUCs (the M metric) \citep{DBLP:journals/ml/HandT01}, or the AUC-$\mu$ \citep{DBLP:conf/icml/KleimanP19}. 
Second, since selective classification might amplify unfair decisions \citep{DBLP:conf/iclr/JonesSKKL21}, we intend to study how to account for fairness metrics in the context of AUC-based selective classification.
Third, as suggested by our experimental results, we could better determine the bounds for a target coverage $c$, reconsidering the choice of centring the rejection area in the midpoint of the bounds of Prop. \ref{pro:final}. As shown in Table \ref{tab:auc_dev}, this is especially relevant for large $c$'s. Lastly, 
AUC-based selective classification can be extended to the metric of weighted AUC \citep{DBLP:journals/prl/Fawcett06a,Keilwagen14}, where instances are weighted by importance.

\subsubsection*{Reproducibility} Data and source code can be downloaded from \href{https://github.com/andrepugni/AUCbasedSelectiveClassification}{https://\-github.com/andrepugni/AUCbasedSelectiveClassification}.

\subsubsection*{Acknowledgements}

Work  supported by the XAI project, funded by the European Union's Horizon 2020 Excellent Science European Research Council (ERC) programme under g.a. No. 834756. 
Views and opinions expressed are however those of the authors only and do not necessarily reflect those of the EU. Neither the EU nor the granting authority can be held responsible for them.

\bibliography{biblio.bib}

\onecolumn
\aistatstitle{Supplementary Materials for AUC-based selective classification}
\appendix
\section{APPENDIX}
\subsection{Proofs}

\textbf{Proposition 1.}\\[-5ex]
\begin{proof} The missing part of the proof consists of the following equivalence:
\begin{eqnarray*}
\bar{G} > G & \mbox{iff} & \bar{A} (n-1) > A n \\
& \mbox{iff} & \{ \mbox{\ by}\ (\ref{eq:gini_eq})\ \}\\
& \mbox{iff} & (( A  + \frac{1}{2} - \left[\frac{(r'(\mathbf{x}_i)-1) + (t(\mathbf{x}_i)-1) + \nicefrac{1}{2}}{n^+ n} \right]) \frac{n^+ n}{(n^+-1) (n-1)} - \frac{1}{2})(n-1)  > A n \\
& \mbox{iff} &  ( A  + \frac{1}{2} - \frac{(r'(\mathbf{x}_i)+t(\mathbf{x}_i)-1.5)}{n^+ n} ) \frac{n^+ n}{(n^+-1)} - \frac{(n-1)}{2} > A n \\
& \mbox{iff} &  A \frac{n^+ n}{(n^+-1)} + \frac{n^+ n}{2(n^+-1)} - \frac{(r'(\mathbf{x}_i)+t(\mathbf{x}_i)-1.5)}{(n^+-1)}  - \frac{(n-1)}{2} > A n  \\
& \mbox{iff} &  A \frac{n}{(n^+-1)} + \frac{n^+ n}{2(n^+-1)} - \frac{(r'(\mathbf{x}_i)+t(\mathbf{x}_i)-1.5)}{(n^+-1)}  - \frac{(n-1)}{2} > 0  \\
& \mbox{iff} &  A \frac{n}{(n^+-1)} + \frac{n^+ n - (n-1)(n^+-1)}{2(n^+-1)} - \frac{(r'(\mathbf{x}_i)+t(\mathbf{x}_i)-1.5)}{(n^+-1)} > 0 \\
& \mbox{iff} & A n + \frac{ n + n^+ -1}{2} - (r'(\mathbf{x}_i)+t(\mathbf{x}_i)-1.5) > 0 \\
& \mbox{iff} &  A n + \frac{ n + n^+}{2} -t(\mathbf{x}_i) > r'(\mathbf{x}_i)-1 \\
& \mbox{iff} &  A n + \frac{ n + n^+}{2} - t(\mathbf{x}_i) \geq r'(\mathbf{x}_i) \\
& \mbox{iff} &  A n + \frac{ n - n^+}{2} + (n^+ - t(\mathbf{x}_i)) \geq r'(\mathbf{x}_i)\\
& \mbox{iff} &  \frac{r'(\mathbf{x}_i)}{n} \leq A + \frac{ p^-}{2} + \frac{(n^+ - t(\mathbf{x}_i))}{n}\\
\end{eqnarray*}
\end{proof}

\newpage
\textbf{Proposition 2.}\\[-5ex]
\begin{proof}
We can equivalently show the result for the Gini coefficient. Let $G = A/(A+B)$, and let $\bar{G} = \bar{A}/(\bar{A}+\bar{B})$ be the Gini coefficient after removing (a.k.a., abstaining on) one negative instance $\mathbf{x}_i$.
We have $\bar{A}+\bar{B} = (n^--1)/(2(n-1))$. 
Then $\bar{G} > G$ iff 
\begin{equation}
    \label{eq:gg2}
    \bar{A} (n-1) n^- > A n (n^--1)
\end{equation}
As in Proposition 1, we can link the area $\bar{A}$ after the removal of the negative instance $\mathbf{x}_i$ to the original area $A$ as follows:
\[ \bar{A} + \frac{1}{2} =  ( A  + \frac{1}{2} - \frac{t(\mathbf{x}_i)}{n^+ n} ) \frac{n}{n-1} \]
As for the positive case, this can be intuitively understood by looking at Figure \ref{fig:CAP_remove_neg}, where the grey area highlights the loss from the original CAP plot after we remove the negative instance. Then we rescale to account for the new number of instances, as in Figure \ref{fig:CAP_after_remove_neg}.
Then:
\begin{eqnarray*}
\bar{G} > G & \mbox{iff} &(( A  + \frac{1}{2} - \frac{t(\mathbf{x}_i)}{n^+ n} ) \frac{n}{n-1} - \frac{1}{2})(n-1) n^- > A n (n^--1)\\
& \mbox{iff} & (A  + \frac{1}{2} - \frac{t(\mathbf{x}_i)}{n^+ n} ) n n^- - \frac{(n-1) n^-}{2} > A n (n^--1) \\
& \mbox{iff} & An n^-  + \frac{n n^-}{2} - \frac{t(\mathbf{x}_i)}{n^+} n^- - \frac{(n-1) n^-}{2} > A n n^- -An \\
& \mbox{iff} & An + \frac{n n^-}{2} - \frac{(n-1) n^-}{2} > \frac{t(\mathbf{x}_i)}{n^+} n^- \\
& \mbox{iff} & A\frac{n}{n^-} + \frac{1}{2} > \frac{t(\mathbf{x}_i)}{n^+}  \\
& \mbox{iff} & A\frac{n}{n^-} + \frac{1}{2} - \frac{1}{n^+} \geq \frac{t(\mathbf{x}_i)}{n^+}  
\end{eqnarray*}
Since $\widehat{AUC}(h, g' | S_{n}) = (G+1)/2 = (2 A/p^- + 1)/2  = A \nicefrac{n}{n^-} + \nicefrac{1}{2}$, and 
\begin{equation}
    \frac{t(\mathbf{x}_i)}{n^+} \leq \widehat{AUC}(h, g | S_{n}) - \frac{1}{n^+}
    \label{eq:pro2cond2}
\end{equation}
is assumed to hold, 
we have the $\bar{G} > G$ (a.k.a., the conclusion of Proposition 2) after removing \textit{one} negative instance. Moreover, if (\ref{eq:pro2cond2}) holds for a second negative instance $\mathbf{x}_i$, then:
\[     \frac{t(\mathbf{x}_i)}{n^+} \leq \widehat{AUC}(h, g | S_{n}) - \frac{1}{n^+} \leq \widehat{AUC}(h, g' | S_{n}) - \frac{1}{n^+} \]
where $g'$ abstains on the first negative instance. In fact, $\bar{G} > G$ implies that $\widehat{AUC}(h, g | S_{n}) < \widehat{AUC}(h, g' | S_{n}) $. Therefore, we can iterate the conclusion that the Gini coefficient (or, equivalently, empirical AUC) increases by abstaining on any number of negative instances that satisfy the assumption of Proposition 2.
\end{proof}

\begin{figure*}
\begin{minipage}{.47\textwidth}
    \centering
    \includegraphics[scale=.2]{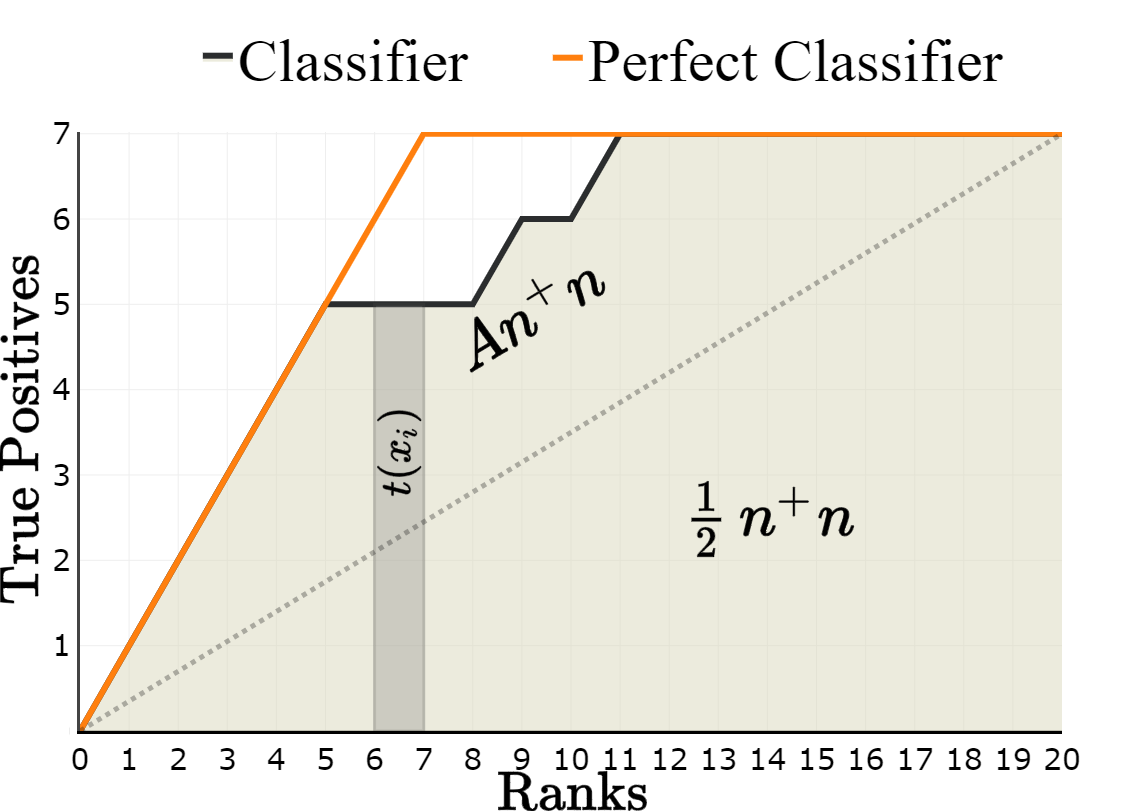}\\[1ex]
    \captionof{figure}{\small{CAP plot before removing a negative instance.}}
    \label{fig:CAP_remove_neg}
\end{minipage}\hfill
\begin{minipage}{.47\textwidth}    \centering
    \includegraphics[scale=.2]{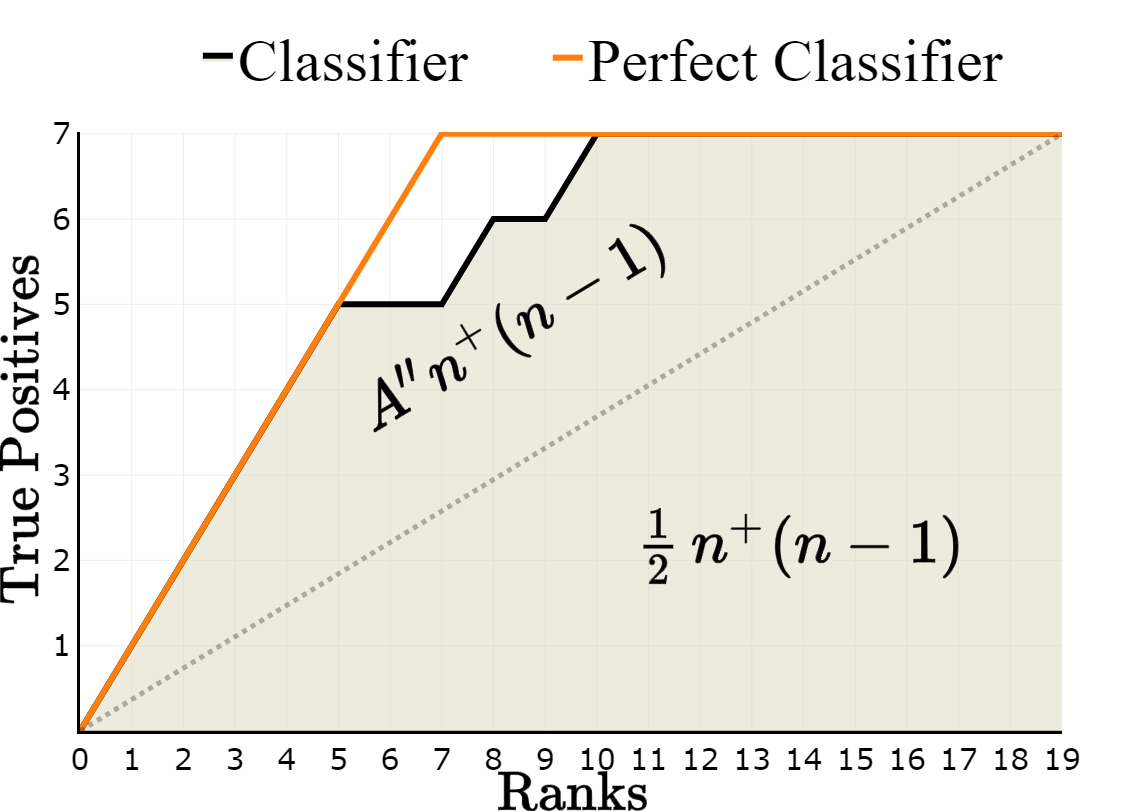}\\[1ex]
    \captionof{figure}{\small{CAP plot after removing a negative instance.}}
    \label{fig:CAP_after_remove_neg}
\end{minipage}
\end{figure*}

\subsection{Datasets description}

\textit{\href{https://archive.ics.uci.edu/ml/datasets/adult}{Adult}} is an extract from the 1994 US Census, with class the binarization of income into $\leq 50K$ and $>50K$. The final training set contains 30,162 instances and 55 features after one-hot encoding. The test set size is 15,060.

\textit{\href{https://www.kaggle.com/wordsforthewise/lending-club}{LendingClub}} regards repaying a loan obtained by an online platform. We used a temporal split to build the final training set (1,364,697 instances) and the test set (445,912 instances). The dataset has 65 features.

The \textit{\href{https://www.kaggle.com/c/GiveMeSomeCredit}{GiveMe}} dataset aims at predicting the financial distress of a borrower within two years. The training and the test set were obtained by stratified random sampling, and they contain 12 features, and 112,500 and 37,500 instances respectively.

\textit{\href{https://archive.ics.uci.edu/ml/datasets/default+of+credit+card+clients}{UCICredit}} regards credit card defaults in Taiwan.  
This dataset from \cite{Dua:2019} 
concerns whether or not a credit card holder will default in the next six months \citep{DBLP:journals/eswa/YehL09a}. Training and test sets were obtained by stratified random sampling. The training set includes 22,500 instances (7,500 for the test set) and 23 features.

\textit{CSDS1}, \textit{CSDS2} and \textit{CSDS3} - from \cite{DBLP:journals/eswa/BarddalLEL20} - regard predicting defaults in repaying a loan: within six months for \textit{CSDS1} (data span over 15 months), within 2 months for \textit{CSDS2} (data span over 25 months), and within three months for \textit{CSDS3} (data span over 16 months). Training set and test set were divided through a timestamp variable.
The training set of \textit{CSDS1} consists of 230,409 instances and 155 features (test set size is 76,939). For \textit{CSDS2}, the training set contains 37,100 instances and 35 features (test set size is 12,533). For \textit{CSDS3}, the training set contains 71,177 instances and 144 features (test set size 23,288).

The \textit{\href{https://www.kaggle.com/competitions/dogs-vs-cats}{CatsVsDogs}} dataset is a collection of cats and dogs images. The task here is to distinguish between the two species. The training and test sets were obtained as described in \cite{DBLP:conf/nips/LiuWLSMU19}. The training set contains 20,000 images, each one of 64x64 pixels. The test set consists of 5,000 images.

Finally we considered the image dataset \textit{\href{https://www.cs.toronto.edu/~kriz/cifar.html}{CIFAR-10-cat}} from \cite{Krizhevsky09learningmultiple}. For each of the 10 class labels, we have 5,000 32x32 images in the training set and 1,000 in the test set. We transformed it into a binary classification task by using the \textit{cat} label as the positive class.

\subsection{Models}

\paragraph{\textsc{AUCross} and \textsc{PlugInAUC}.}
In the main paper we used as base classifier a \textsc{LightGBM} classifier with default parameters. In Tables \ref{tab:class_cov}-\ref{tab:class_pos} we provide also results for a Logistic Regression and a Random Forest Classifier from $\textit{sklearn}$ package with default parameters, a ResNet implementation with default parameters from \textit{rtdl} \citep{DBLP:conf/nips/GorishniyRKB21} and a XGBoost from $\textit{xgboost}$ package with default parameters.

\paragraph{\textsc{PlugIn} and \textsc{SCross}.}
As for \textsc{AUCross}, we considered a \textsc{LightGBM} classifier with default parameters.

\paragraph{\textsc{SelNet}.}
\begin{figure}
    \centering
    \includegraphics[scale=.2]{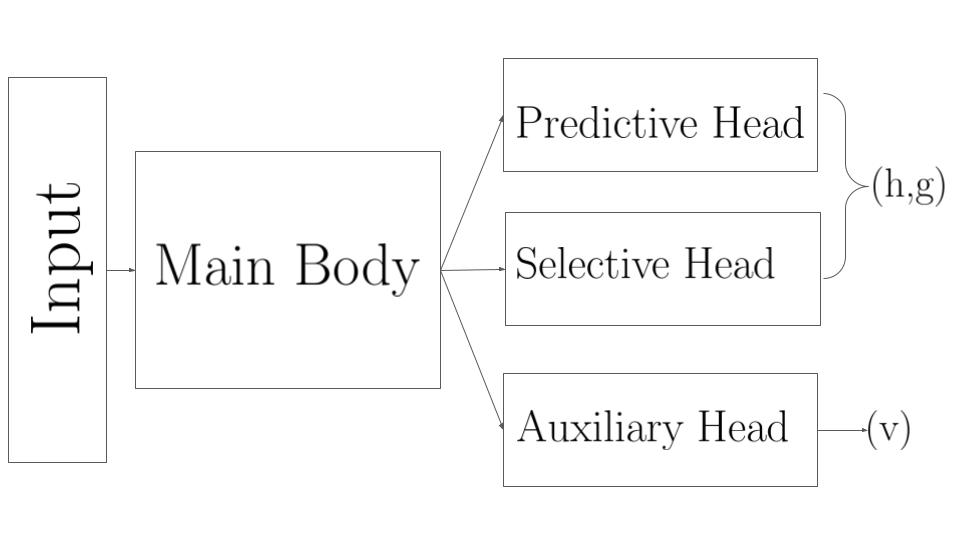}
    \caption{Scheme for Selective Net architecture.}
    \label{fig:SelNetstr}
\end{figure}
Selective Net is a selective model $(h,g)$ that optimizes at the same time both $h(\mathbf{x})$ and $g(\mathbf{x})$.
Its schema is summarized in Figure \ref{fig:SelNetstr}.
The architecture is based on four distinct parts:
the main body, the predictive head, the selective head and the auxiliary head.
The input is initially processed by the main body: it consists of deep layers that are shared by all the three heads. Any type of architecture can be used in this part (e.g., convolutional layers, linear layers, recurrent layers ecc.). The predictive head provides the final prediction $h(\mathbf{x})$; the selective head outputs the selective function $g(\mathbf{x})$; the auxiliary head is used to ensure that the main body part is exposed to all training instances, i.e., it is used to avoid that \textsc{SelNet} overfits on the accepted instances.
We used $\textit{pytorch}$ to model \textsc{SelNet}. For tabular datasets, we built the main body part using ResNet with default parameters provided by $\textit{rtdl}$ \citep{DBLP:conf/nips/GorishniyRKB21}, as authors claim that ResNet is a valid baseline on tabular data.
For images, we used as the main body the VGG16 architecture \citep{DBLP:journals/corr/SimonyanZ14a} as done in the original paper \citep{DBLP:conf/icml/GeifmanE19}.
We then added the classification head, the selection head and the auxiliary head following \textsc{SelNet} paper.
For CSDS1, Lending and GiveMe the prediction head and the auxiliary head were made by a first linear layer with 512 nodes followed by a batch normalization layer and ReLu activation; a second layer with 256 nodes, batch normalization and ReLu activation; a final dense layer with 128 nodes ending with two nodes and softmax activation.
For the other datasets the classification and auxiliary heads were made by a single linear layer with 128 nodes and a final softmax activation.
We built selective heads using a linear layer with 128 nodes, batch normalization, relu activation and another 64-node linear layer ending with a single node and Sigmoid activation.
All the models are available in the code \href{https://www.dropbox.com/sh/zwtskpq5f4tuuh0/AABEWccp0In_KqRaCSiqRGBPa?dl=0}{here}.
We point out that the lack of a clear design methodology of the SelectiveNet structure for a given dataset is a major drawback of \textsc{SelNet} compared to the flexibility our model-agnostic method. 
Models were trained, for an expected coverage $c$, using the same loss function as in \cite{DBLP:conf/icml/GeifmanE19}:
\begin{equation}
    \mathcal{L}=\alpha \frac{\frac{1}{n} \sum_{i=1}^n l(h(\mathbf{x}_i),y_i)g(\mathbf{x}_i)}{ \hat{\phi}(g|S_n)} +\lambda(\max(0, c-\hat{\phi}(g|S_n)))^2 +(1-\alpha)\frac{1}{n}\sum_{i=1}^{n}l(v(\mathbf{x}_i),y_i),
    \label{eq:sel_loss}
\end{equation}
where $h(\mathbf{x}_i)$ is the classification head prediction, $l(h(\mathbf{x}_i), y_i)$ is the cross entropy loss, $g(\mathbf{x}_i)$ is the selection head output over instance $i$, $v(\mathbf{x}_i)$ is the auxiliary head prediction and $\hat{\phi}(g|S_n) = \nicefrac{\sum_{i=1}^n g(\mathbf{x}_i)} {n}$ is the empirical coverage.
Both parameters $\lambda$ and $\alpha$ are set as in \cite{DBLP:conf/icml/GeifmanE19} to $\lambda = 32$ and $\alpha = .5$.
The batch size was $512$ for Lending; $128$ for GiveMe, CSDS1, CSDS2, CSDS3 and image data; $32$ for Adult and UCICredit.
The learning procedure was run for 300 epochs with \ref{eq:sel_loss} as loss and it used as an optimizer Stochastic Gradient Descent setting $\textit{learning-rate=.1, momentum=.9,Nesterov=True}$ and a decay of $.5$ every 25 epochs as in the original paper. The training was performed over 90\% of training set instances while we used the remaining 10\% to calibrate the selective head.

\paragraph{SAT.}
Let us consider the standard binary classification problem where the classifier $h$ can produce a score for instance $i$ belonging to class $0$ or $1$\footnote{The problem is symmetrical in the binary setting.}, i.e. in our main paper $s(\mathbf{x}_i)=s_1(\mathbf{x}_i)$.
\textsc{SAT} \cite{DBLP:conf/nips/Huang0020} introduces an extra class $v$ (representing abstention) during training and replace the confidence function with the score for the additional class $v$. This allows for training a selective classifier in an end-to-end fashion. 
Given a batch of data pairs $\{(\mathbf{x}_i
, y_i)\}$ of size $M$, the model score $s_{a}(\mathbf{x}_i)$ for class $a$, and its exponential moving average $t_i$ for each sample, we optimize the classifier $h$ by minimizing:
\begin{equation}
    \mathcal{L}(h_\theta) = -\frac{1}{M}\sum_{i=1}^M
[t_{i,y_i}\log(s_{y_i}(\mathbf{x}_i)) + (1-t_{i,y_i})\log{s_v(\mathbf{x}_i)}]
\label{eq:sat_loss}
\end{equation}
where $s_{y_i}(\mathbf{x}_i)$ denotes the score attributed by the classifier to the true class of instance $i$. This loss is a composition of two terms: the first one measures the standard cross-entropy loss between prediction and original label $y_i$; the second term acts as the selection function and identifies uncertain samples in the dataset. The value $t_{i,y_i}$ trades-off these two terms: if $t_{i,y_i}$ is very small, the sample is treated as uncertain and the second term enforces the selective classifier to learn to abstain from this sample; if $t_{i,y_i}$ is close to 1, the loss recovers the standard cross entropy minimization and enforces the selective classifier to make perfect predictions.
The code was based on the \textit{pytorch} implementation available \href{https://github.com/LayneH/SAT-selective-cls}{here}.
We employed the same batch sizes as for \textsc{SelNet} and we set up training details as in the original paper of \textsc{SAT}: we used the loss in \ref{eq:sat_loss} for 300 epochs and as an optimizer Stochastic Gradient Descent, setting $\textit{learning-rate=.1, momentum=.9,Nesterov=True}$ and a decay of $.5$ every 25 epochs. The training was performed over 90\% of training set instances while we used the remaining 10\% to calibrate the selection function.

\subsection{Other metrics considered}

We report results for selective accuracy and positive rate in Table \ref{tab:metrics2}. As discussed in the main paper, \textsc{AUCross} and \textsc{PlugInAUC} do not guarantee improvements in terms of accuracy whenever the target coverage decreases. Interestingly, they are able to maintain the positive rate more stable than the compared approaches. Finally, we report training times for all the methods in Table \ref{tab:running_times}. \textsc{PlugInAUC} and \textsc{PlugIn} are clear winners over tabular datases as they can exploit fast classifiers. Both \textsc{AUCross} and \textsc{SCross} pay a factor proportional to the number of folds $K$ used in the cross-fitting part of the algorithm. This extra cost can be potentially mitigated on tabular datasets by parallelizing the cross-fitting procedure. Finally, notice that \textsc{SelNet} is the only approach which require a separate run for each target coverage $c$. 

\begin{table*}[ht!]
\centering
    \caption{\small{Performance metrics (1,000 bootstrap runs over the test set, results as mean $\pm$ stdev).}}
\resizebox{\textwidth}{!}{%
\begin{tabular}{c|c|cccccc|cccccc}
\multicolumn{1}{c}{} & \multicolumn{1}{c}{} & \multicolumn{6}{c}{\textbf{Selective Accuracy}} & \multicolumn{6}{c}{\textbf{Positive Rate}} \\
      & \boldmath{}\textbf{$c$}\unboldmath{} & \textbf{\textsc{AUCross}} & \textbf{\textsc{PlugIn}} & \textbf{\textsc{PlugInAUC}} & \textbf{\textsc{SCross}} & \textbf{\textsc{SelNet}} & \textbf{\textsc{SAT}} & \textbf{\textsc{AUCross}} & \textbf{\textsc{PlugIn}} & \textbf{\textsc{PlugInAUC}} & \textbf{\textsc{SCross}} & \textbf{\textsc{SelNet}} & \textbf{\textsc{SAT}} \\
\midrule
\multirow{6}{*}{\rotatebox[origin=c]{90}{\textbf{Adult}}}  & .99   & .870 $\pm$ .003 & \boldmath{}\textbf{.872 $\pm$ .003}\unboldmath{} & .870 $\pm$ .003 & .871 $\pm$ .003 & .848 $\pm$ .003 & .845 $\pm$ .003 & \boldmath{}\textbf{.246 $\pm$ .004}\unboldmath{} & .245 $\pm$ .004 & \boldmath{}\textbf{.246 $\pm$ .004}\unboldmath{} & .245 $\pm$ .004 & .242 $\pm$ .004 & \boldmath{}\textbf{.246 $\pm$ .004}\unboldmath{} \\
      & .95   & .875 $\pm$ .003 & \boldmath{}\textbf{.888 $\pm$ .003}\unboldmath{} & .874 $\pm$ .003 & \boldmath{}\textbf{.888 $\pm$ .003}\unboldmath{} & .858 $\pm$ .003 & .845 $\pm$ .003 & \boldmath{}\textbf{.246 $\pm$ .004}\unboldmath{} & .234 $\pm$ .004 & .247 $\pm$ .004 & .234 $\pm$ .004 & .235 $\pm$ .004 & .247 $\pm$ .004 \\
      & .90   & .882 $\pm$ .003 & \boldmath{}\textbf{.903 $\pm$ .003}\unboldmath{} & .880 $\pm$ .003 & .902 $\pm$ .003 & .872 $\pm$ .003 & .845 $\pm$ .003 & \boldmath{}\textbf{.246 $\pm$ .004}\unboldmath{} & .221 $\pm$ .004 & .248 $\pm$ .004 & .220 $\pm$ .004 & .225 $\pm$ .004 & .249 $\pm$ .004 \\
      & .85   & .888 $\pm$ .003 & \boldmath{}\textbf{.920 $\pm$ .003}\unboldmath{} & .887 $\pm$ .003 & .919 $\pm$ .003 & .886 $\pm$ .003 & .846 $\pm$ .003 & \boldmath{}\textbf{.247 $\pm$ .004}\unboldmath{} & .203 $\pm$ .004 & .249 $\pm$ .004 & .206 $\pm$ .004 & .115 $\pm$ .003 & .250 $\pm$ .004 \\
      & .80   & .899 $\pm$ .003 & \boldmath{}\textbf{.936 $\pm$ .003}\unboldmath{} & .897 $\pm$ .003 & .934 $\pm$ .003 & .901 $\pm$ .003 & .846 $\pm$ .003 & \boldmath{}\textbf{.245 $\pm$ .004}\unboldmath{} & .185 $\pm$ .004 & .249 $\pm$ .004 & .190 $\pm$ .004 & .180 $\pm$ .004 & .251 $\pm$ .004 \\
      & .75   & .907 $\pm$ .003 & \boldmath{}\textbf{.950 $\pm$ .003}\unboldmath{} & .905 $\pm$ .003 & \boldmath{}\textbf{.950 $\pm$ .003}\unboldmath{} & .908 $\pm$ .003 & .845 $\pm$ .004 & \boldmath{}\textbf{.245 $\pm$ .005}\unboldmath{} & .169 $\pm$ .004 & .250 $\pm$ .005 & .170 $\pm$ .004 & .149 $\pm$ .004 & .252 $\pm$ .004 \\
\midrule
\midrule
\multirow{6}{*}{\rotatebox[origin=c]{90}{\textbf{Lending}}}  & .99   & \boldmath{}\textbf{.899 $\pm$ .001}\unboldmath{} & \boldmath{}\textbf{.899 $\pm$ .001}\unboldmath{} & \boldmath{}\textbf{.899 $\pm$ .001}\unboldmath{} & \boldmath{}\textbf{.899 $\pm$ .001}\unboldmath{} & .870 $\pm$ .001 & .754 $\pm$ .003 & \boldmath{}\textbf{.224 $\pm$ .001}\unboldmath{} & .222 $\pm$ .001 & .224 $\pm$ .001 & \boldmath{}\textbf{.222 $\pm$ .001}\unboldmath{} & .223 $\pm$ .001 & .249 $\pm$ .003 \\
      & .95   & .908 $\pm$ .001 & .909 $\pm$ .001 & .909 $\pm$ .001 & \boldmath{}\textbf{.910 $\pm$ .001}\unboldmath{} & .873 $\pm$ .001 & .754 $\pm$ .003 & \boldmath{}\textbf{.217 $\pm$ .001}\unboldmath{} & .206 $\pm$ .001 & \boldmath{}\textbf{.217 $\pm$ .001}\unboldmath{} & .205 $\pm$ .001 & .215 $\pm$ .001 & .249 $\pm$ .003 \\
      & .90   & .921 $\pm$ .001 & .923 $\pm$ .001 & .921 $\pm$ .001 & \boldmath{}\textbf{.924 $\pm$ .001}\unboldmath{} & .912 $\pm$ .001 & .754 $\pm$ .003 & \boldmath{}\textbf{.207 $\pm$ .001}\unboldmath{} & .184 $\pm$ .001 & \boldmath{}\textbf{.207 $\pm$ .001}\unboldmath{} & .183 $\pm$ .001 & .191 $\pm$ .001 & .249 $\pm$ .003 \\
      & .85   & .934 $\pm$ .001 & .937 $\pm$ .001 & .934 $\pm$ .001 & \boldmath{}\textbf{.938 $\pm$ .001}\unboldmath{} & .885 $\pm$ .001 & .754 $\pm$ .003 & .197 $\pm$ .001 & .162 $\pm$ .001 & .197 $\pm$ .001 & .161 $\pm$ .001 & .134 $\pm$ .001 & \boldmath{}\textbf{.249 $\pm$ .003}\unboldmath{} \\
      & .80   & .948 $\pm$ .001 & \boldmath{}\textbf{.951 $\pm$ .001}\unboldmath{} & .949 $\pm$ .001 & \boldmath{}\textbf{.951 $\pm$ .001}\unboldmath{} & .926 $\pm$ .001 & .754 $\pm$ .003 & .186 $\pm$ .001 & .140 $\pm$ .001 & .186 $\pm$ .001 & .139 $\pm$ .001 & .157 $\pm$ .001 & \boldmath{}\textbf{.249 $\pm$ .003}\unboldmath{} \\
      & .75   & .964 $\pm$ .001 & .963 $\pm$ .001 & \boldmath{}\textbf{.965 $\pm$ .001}\unboldmath{} & .963 $\pm$ .001 & .924 $\pm$ .001 & .754 $\pm$ .003 & .174 $\pm$ .001 & .118 $\pm$ .001 & .174 $\pm$ .001 & .116 $\pm$ .001 & .143 $\pm$ .001 & \boldmath{}\textbf{.249 $\pm$ .003}\unboldmath{} \\
\midrule
\midrule
\multirow{6}{*}{\rotatebox[origin=c]{90}{\textbf{GiveMe}}}  & .99   & .938 $\pm$ .002 & .942 $\pm$ .002 & .938 $\pm$ .002 & .942 $\pm$ .002 & \boldmath{}\textbf{.954 $\pm$ .002}\unboldmath{} & .916 $\pm$ .002 & .068 $\pm$ .002 & .063 $\pm$ .002 & \boldmath{}\textbf{.067 $\pm$ .002}\unboldmath{} & .063 $\pm$ .002 & .047 $\pm$ .002 & .085 $\pm$ .002 \\
      & .95   & .937 $\pm$ .002 & \boldmath{}\textbf{.956 $\pm$ .002}\unboldmath{} & .937 $\pm$ .002 & \boldmath{}\textbf{.956 $\pm$ .002}\unboldmath{} & .954 $\pm$ .002 & .915 $\pm$ .002 & \boldmath{}\textbf{.068 $\pm$ .002}\unboldmath{} & .046 $\pm$ .002 & \boldmath{}\textbf{.068 $\pm$ .002}\unboldmath{} & .047 $\pm$ .002 & .047 $\pm$ .002 & .086 $\pm$ .002 \\
      & .90   & .938 $\pm$ .002 & \boldmath{}\textbf{.967 $\pm$ .001}\unboldmath{} & .937 $\pm$ .002 & \boldmath{}\textbf{.967 $\pm$ .001}\unboldmath{} & .961 $\pm$ .001 & .913 $\pm$ .002 & \boldmath{}\textbf{.067 $\pm$ .002}\unboldmath{} & .034 $\pm$ .001 & \boldmath{}\textbf{.068 $\pm$ .002}\unboldmath{} & .034 $\pm$ .001 & .040 $\pm$ .001 & .088 $\pm$ .002 \\
      & .85   & .939 $\pm$ .002 & \boldmath{}\textbf{.973 $\pm$ .001}\unboldmath{} & .938 $\pm$ .002 & \boldmath{}\textbf{.973 $\pm$ .001}\unboldmath{} & .967 $\pm$ .001 & .913 $\pm$ .002 & \boldmath{}\textbf{.067 $\pm$ .002}\unboldmath{} & .028 $\pm$ .001 & \boldmath{}\textbf{.068 $\pm$ .002}\unboldmath{} & .028 $\pm$ .001 & .034 $\pm$ .001 & .088 $\pm$ .002 \\
      & .80   & .939 $\pm$ .002 & .977 $\pm$ .001 & .938 $\pm$ .002 & \boldmath{}\textbf{.978 $\pm$ .001}\unboldmath{} & .969 $\pm$ .001 & .912 $\pm$ .002 & \boldmath{}\textbf{.067 $\pm$ .002}\unboldmath{} & .024 $\pm$ .001 & \boldmath{}\textbf{.067 $\pm$ .002}\unboldmath{} & .023 $\pm$ .001 & .032 $\pm$ .001 & .089 $\pm$ .002 \\
      & .75   & .939 $\pm$ .002 & .980 $\pm$ .001 & .939 $\pm$ .002 & \boldmath{}\textbf{.981 $\pm$ .001}\unboldmath{} & .971 $\pm$ .001 & .912 $\pm$ .002 & \boldmath{}\textbf{.067 $\pm$ .002}\unboldmath{} & .021 $\pm$ .001 & \boldmath{}\textbf{.068 $\pm$ .002}\unboldmath{} & .020 $\pm$ .001 & .030 $\pm$ .001 & .089 $\pm$ .002 \\
\midrule
\midrule
\multirow{6}{*}{\rotatebox[origin=c]{90}{\textbf{UCICredit}}}  & .99   & .811 $\pm$ .005 & .814 $\pm$ .005 & .812 $\pm$ .005 & .814 $\pm$ .005 & \boldmath{}\textbf{.815 $\pm$ .005}\unboldmath{} & .812 $\pm$ .005 & \boldmath{}\textbf{.222 $\pm$ .005}\unboldmath{} & .220 $\pm$ .005 & \boldmath{}\textbf{.222 $\pm$ .005}\unboldmath{} & .220 $\pm$ .005 & .220 $\pm$ .005 & \boldmath{}\textbf{.222 $\pm$ .005}\unboldmath{} \\
      & .95   & .810 $\pm$ .005 & .826 $\pm$ .005 & .811 $\pm$ .005 & \boldmath{}\textbf{.827 $\pm$ .005}\unboldmath{} & .824 $\pm$ .005 & .812 $\pm$ .005 & .225 $\pm$ .006 & .210 $\pm$ .005 & .224 $\pm$ .005 & .209 $\pm$ .005 & .180 $\pm$ .005 & \boldmath{}\textbf{.222 $\pm$ .005}\unboldmath{} \\
      & .90   & .809 $\pm$ .005 & .838 $\pm$ .005 & .811 $\pm$ .005 & \boldmath{}\textbf{.839 $\pm$ .005}\unboldmath{} & .830 $\pm$ .005 & .813 $\pm$ .005 & .228 $\pm$ .006 & .194 $\pm$ .005 & .226 $\pm$ .006 & .195 $\pm$ .005 & .176 $\pm$ .005 & \boldmath{}\textbf{.221 $\pm$ .005}\unboldmath{} \\
      & .85   & .806 $\pm$ .005 & .849 $\pm$ .005 & .811 $\pm$ .005 & \boldmath{}\textbf{.855 $\pm$ .005}\unboldmath{} & .845 $\pm$ .005 & .813 $\pm$ .005 & .233 $\pm$ .006 & .179 $\pm$ .005 & .229 $\pm$ .006 & .176 $\pm$ .005 & .156 $\pm$ .005 & \boldmath{}\textbf{.221 $\pm$ .005}\unboldmath{} \\
      & .80   & .806 $\pm$ .006 & .863 $\pm$ .005 & .810 $\pm$ .006 & \boldmath{}\textbf{.867 $\pm$ .005}\unboldmath{} & .858 $\pm$ .005 & .813 $\pm$ .005 & .236 $\pm$ .006 & .161 $\pm$ .005 & .232 $\pm$ .006 & .158 $\pm$ .005 & .143 $\pm$ .005 & \boldmath{}\textbf{.221 $\pm$ .005}\unboldmath{} \\
      & .75   & .806 $\pm$ .006 & .872 $\pm$ .005 & .808 $\pm$ .006 & \boldmath{}\textbf{.875 $\pm$ .005}\unboldmath{} & .871 $\pm$ .005 & .813 $\pm$ .005 & .239 $\pm$ .006 & .143 $\pm$ .005 & .237 $\pm$ .006 & .141 $\pm$ .005 & .130 $\pm$ .005 & \boldmath{}\textbf{.221 $\pm$ .005}\unboldmath{} \\
\midrule
\midrule
\multirow{6}{*}{\rotatebox[origin=c]{90}{\textbf{CSDS1}}}  & .99   & .857 $\pm$ .002 & \boldmath{}\textbf{.863 $\pm$ .002}\unboldmath{} & .857 $\pm$ .002 & \boldmath{}\textbf{.863 $\pm$ .002}\unboldmath{} & .861 $\pm$ .002 & .862 $\pm$ .002 & .145 $\pm$ .002 & .139 $\pm$ .002 & \boldmath{}\textbf{.144 $\pm$ .002}\unboldmath{} & .138 $\pm$ .002 & .140 $\pm$ .002 & .139 $\pm$ .002 \\
      & .95   & .856 $\pm$ .002 & \boldmath{}\textbf{.875 $\pm$ .002}\unboldmath{} & .856 $\pm$ .002 & \boldmath{}\textbf{.875 $\pm$ .002}\unboldmath{} & .873 $\pm$ .002 & .871 $\pm$ .002 & \boldmath{}\textbf{.146 $\pm$ .002}\unboldmath{} & .126 $\pm$ .002 & \boldmath{}\textbf{.146 $\pm$ .002}\unboldmath{} & .126 $\pm$ .002 & .128 $\pm$ .002 & .130 $\pm$ .002 \\
      & .90   & .854 $\pm$ .002 & \boldmath{}\textbf{.885 $\pm$ .002}\unboldmath{} & .855 $\pm$ .002 & \boldmath{}\textbf{.885 $\pm$ .002}\unboldmath{} & .882 $\pm$ .002 & .883 $\pm$ .002 & \boldmath{}\textbf{.147 $\pm$ .002}\unboldmath{} & .116 $\pm$ .002 & \boldmath{}\textbf{.147 $\pm$ .002}\unboldmath{} & .116 $\pm$ .002 & .119 $\pm$ .002 & .118 $\pm$ .002 \\
      & .85   & .853 $\pm$ .002 & \boldmath{}\textbf{.892 $\pm$ .002}\unboldmath{} & .853 $\pm$ .002 & \boldmath{}\textbf{.892 $\pm$ .002}\unboldmath{} & .890 $\pm$ .002 & .890 $\pm$ .002 & \boldmath{}\textbf{.149 $\pm$ .002}\unboldmath{} & .109 $\pm$ .002 & \boldmath{}\textbf{.149 $\pm$ .002}\unboldmath{} & .109 $\pm$ .002 & .111 $\pm$ .002 & .111 $\pm$ .002 \\
      & .80   & .852 $\pm$ .002 & .898 $\pm$ .002 & .852 $\pm$ .002 & \boldmath{}\textbf{.899 $\pm$ .002}\unboldmath{} & .897 $\pm$ .002 & .897 $\pm$ .002 & \boldmath{}\textbf{.150 $\pm$ .002}\unboldmath{} & .103 $\pm$ .002 & \boldmath{}\textbf{.150 $\pm$ .002}\unboldmath{} & .102 $\pm$ .002 & .104 $\pm$ .002 & .104 $\pm$ .002 \\
      & .75   & .850 $\pm$ .002 & \boldmath{}\textbf{.904 $\pm$ .002}\unboldmath{} & .850 $\pm$ .002 & \boldmath{}\textbf{.904 $\pm$ .002}\unboldmath{} & .902 $\pm$ .002 & .902 $\pm$ .002 & \boldmath{}\textbf{.152 $\pm$ .002}\unboldmath{} & .097 $\pm$ .002 & \boldmath{}\textbf{.152 $\pm$ .002}\unboldmath{} & .097 $\pm$ .002 & .099 $\pm$ .002 & .099 $\pm$ .002 \\
\midrule
\midrule
\multirow{6}{*}{\rotatebox[origin=c]{90}{\textbf{CSDS2}}}  & .99   & .982 $\pm$ .002 & .982 $\pm$ .002 & .982 $\pm$ .002 & .982 $\pm$ .002 & \boldmath{}\textbf{.983 $\pm$ .002}\unboldmath{} & .982 $\pm$ .002 & \boldmath{}\textbf{.019 $\pm$ .002}\unboldmath{} & \boldmath{}\textbf{.019 $\pm$ .002}\unboldmath{} & \boldmath{}\textbf{.019 $\pm$ .002}\unboldmath{} & \boldmath{}\textbf{.019 $\pm$ .002}\unboldmath{} & .018 $\pm$ .002 & \boldmath{}\textbf{.019 $\pm$ .002}\unboldmath{} \\
      & .95   & .982 $\pm$ .002 & .983 $\pm$ .002 & .982 $\pm$ .002 & .985 $\pm$ .002 & \boldmath{}\textbf{.984 $\pm$ .002}\unboldmath{} & .982 $\pm$ .002 & .019 $\pm$ .002 & \boldmath{}\textbf{.018 $\pm$ .002}\unboldmath{} & .019 $\pm$ .002 & .016 $\pm$ .002 & .017 $\pm$ .002 & .019 $\pm$ .002 \\
      & .90   & .982 $\pm$ .002 & .984 $\pm$ .002 & .982 $\pm$ .002 & .985 $\pm$ .002 & \boldmath{}\textbf{.986 $\pm$ .002}\unboldmath{} & .985 $\pm$ .002 & \boldmath{}\textbf{.019 $\pm$ .002}\unboldmath{} & .017 $\pm$ .002 & \boldmath{}\textbf{.019 $\pm$ .002}\unboldmath{} & .016 $\pm$ .002 & .015 $\pm$ .002 & .016 $\pm$ .002 \\
      & .85   & .982 $\pm$ .002 & .985 $\pm$ .002 & .981 $\pm$ .002 & .986 $\pm$ .002 & \boldmath{}\textbf{.986 $\pm$ .002}\unboldmath{} & .985 $\pm$ .002 & \boldmath{}\textbf{.019 $\pm$ .002}\unboldmath{} & .016 $\pm$ .002 & \boldmath{}\textbf{.019 $\pm$ .002}\unboldmath{} & .015 $\pm$ .002 & .015 $\pm$ .002 & .016 $\pm$ .002 \\
      & .80   & .982 $\pm$ .002 & .986 $\pm$ .002 & .981 $\pm$ .002 & .987 $\pm$ .002 & \boldmath{}\textbf{.987 $\pm$ .002}\unboldmath{} & .986 $\pm$ .002 & \boldmath{}\textbf{.019 $\pm$ .002}\unboldmath{} & .015 $\pm$ .002 & \boldmath{}\textbf{.019 $\pm$ .002}\unboldmath{} & .014 $\pm$ .002 & .014 $\pm$ .002 & .015 $\pm$ .002 \\
      & .75   & .981 $\pm$ .002 & .986 $\pm$ .002 & .981 $\pm$ .002 & .987 $\pm$ .002 & \boldmath{}\textbf{.988 $\pm$ .002}\unboldmath{} & .987 $\pm$ .002 & \boldmath{}\textbf{.019 $\pm$ .002}\unboldmath{} & .015 $\pm$ .002 & .020 $\pm$ .002 & .014 $\pm$ .002 & .013 $\pm$ .002 & .014 $\pm$ .002 \\
\midrule
\midrule
\multirow{6}{*}{\rotatebox[origin=c]{90}{\textbf{CSDS3}}}  & .99   & .814 $\pm$ .003 & \boldmath{}\textbf{.816 $\pm$ .003}\unboldmath{} & .813 $\pm$ .003 & \boldmath{}\textbf{.816 $\pm$ .003}\unboldmath{} & .810 $\pm$ .003 & .809 $\pm$ .003 & \boldmath{}\textbf{.253 $\pm$ .003}\unboldmath{} & .252 $\pm$ .003 & .254 $\pm$ .003 & .251 $\pm$ .003 & .251 $\pm$ .003 & .252 $\pm$ .003 \\
      & .95   & .817 $\pm$ .003 & .826 $\pm$ .003 & .816 $\pm$ .003 & \boldmath{}\textbf{.827 $\pm$ .003}\unboldmath{} & .822 $\pm$ .003 & .820 $\pm$ .003 & \boldmath{}\textbf{.254 $\pm$ .003}\unboldmath{} & .242 $\pm$ .003 & \boldmath{}\textbf{.254 $\pm$ .003}\unboldmath{} & .242 $\pm$ .003 & .244 $\pm$ .003 & .243 $\pm$ .003 \\
      & .90   & .820 $\pm$ .003 & \boldmath{}\textbf{.841 $\pm$ .003}\unboldmath{} & .819 $\pm$ .003 & \boldmath{}\textbf{.841 $\pm$ .003}\unboldmath{} & .830 $\pm$ .003 & .835 $\pm$ .003 & \boldmath{}\textbf{.254 $\pm$ .003}\unboldmath{} & .228 $\pm$ .003 & .255 $\pm$ .003 & .229 $\pm$ .003 & .233 $\pm$ .003 & .232 $\pm$ .003 \\
      & .85   & .825 $\pm$ .003 & \boldmath{}\textbf{.856 $\pm$ .003}\unboldmath{} & .823 $\pm$ .003 & .855 $\pm$ .003 & .844 $\pm$ .003 & .850 $\pm$ .003 & \boldmath{}\textbf{.254 $\pm$ .003}\unboldmath{} & .214 $\pm$ .003 & .255 $\pm$ .003 & .215 $\pm$ .003 & .219 $\pm$ .003 & .219 $\pm$ .003 \\
      & .80   & .829 $\pm$ .003 & \boldmath{}\textbf{.871 $\pm$ .003}\unboldmath{} & .826 $\pm$ .003 & .869 $\pm$ .003 & .860 $\pm$ .003 & .861 $\pm$ .003 & \boldmath{}\textbf{.254 $\pm$ .004}\unboldmath{} & .199 $\pm$ .003 & .256 $\pm$ .004 & .200 $\pm$ .003 & .206 $\pm$ .003 & .207 $\pm$ .003 \\
      & .75   & .835 $\pm$ .003 & \boldmath{}\textbf{.884 $\pm$ .003}\unboldmath{} & .830 $\pm$ .003 & .883 $\pm$ .003 & .871 $\pm$ .003 & .872 $\pm$ .003 & \boldmath{}\textbf{.254 $\pm$ .004}\unboldmath{} & .182 $\pm$ .003 & .257 $\pm$ .004 & .185 $\pm$ .003 & .178 $\pm$ .003 & .188 $\pm$ .003 \\
\midrule
\midrule
\multirow{6}{*}{\rotatebox[origin=c]{90}{\textbf{CatsVsDogs}}}  & .99   & \boldmath{}\textbf{.954 $\pm$ .004}\unboldmath{} & .950 $\pm$ .004 & .944 $\pm$ .004 & .956 $\pm$ .003 & .941 $\pm$ .004 & .950 $\pm$ .004 & \boldmath{}\textbf{.500 $\pm$ .008}\unboldmath{} & \boldmath{}\textbf{.505 $\pm$ .008}\unboldmath{} & .503 $\pm$ .008 & \boldmath{}\textbf{.500 $\pm$ .008}\unboldmath{} & \boldmath{}\textbf{.500 $\pm$ .008}\unboldmath{} & \boldmath{}\textbf{.500 $\pm$ .008}\unboldmath{} \\
      & .95   & \boldmath{}\textbf{.968 $\pm$ .003}\unboldmath{} & .949 $\pm$ .004 & .953 $\pm$ .004 & .969 $\pm$ .003 & .964 $\pm$ .003 & .964 $\pm$ .003 & \boldmath{}\textbf{.498 $\pm$ .008}\unboldmath{} & .514 $\pm$ .008 & .512 $\pm$ .008 & .495 $\pm$ .008 & .494 $\pm$ .008 & .497 $\pm$ .008 \\
      & .90   & .977 $\pm$ .003 & .947 $\pm$ .004 & .971 $\pm$ .003 & .979 $\pm$ .003 & \boldmath{}\textbf{.978 $\pm$ .003}\unboldmath{} & .977 $\pm$ .003 & .494 $\pm$ .008 & .531 $\pm$ .008 & .518 $\pm$ .008 & .490 $\pm$ .008 & .482 $\pm$ .008 & \boldmath{}\textbf{.496 $\pm$ .008}\unboldmath{} \\
      & .85   & \boldmath{}\textbf{.984 $\pm$ .002}\unboldmath{} & .944 $\pm$ .004 & .982 $\pm$ .003 & .985 $\pm$ .002 & .983 $\pm$ .002 & \boldmath{}\textbf{.984 $\pm$ .002}\unboldmath{} & .488 $\pm$ .008 & .546 $\pm$ .008 & .524 $\pm$ .008 & .488 $\pm$ .008 & .485 $\pm$ .008 & \boldmath{}\textbf{.494 $\pm$ .008}\unboldmath{} \\
      & .80   & .987 $\pm$ .002 & .942 $\pm$ .004 & .986 $\pm$ .002 & .989 $\pm$ .002 & \boldmath{}\textbf{.990 $\pm$ .002}\unboldmath{} & .985 $\pm$ .002 & .473 $\pm$ .008 & .556 $\pm$ .008 & .528 $\pm$ .008 & \boldmath{}\textbf{.493 $\pm$ .008}\unboldmath{} & .476 $\pm$ .008 & .482 $\pm$ .008 \\
      & .75   & .988 $\pm$ .002 & .938 $\pm$ .004 & .988 $\pm$ .002 & .990 $\pm$ .002 & \boldmath{}\textbf{.990 $\pm$ .002}\unboldmath{} & .987 $\pm$ .002 & .445 $\pm$ .008 & .561 $\pm$ .009 & .529 $\pm$ .009 & \boldmath{}\textbf{.500 $\pm$ .009}\unboldmath{} & .472 $\pm$ .008 & .464 $\pm$ .008 \\
\midrule
\midrule
\multirow{6}{*}{\rotatebox[origin=c]{90}{\textbf{CIFAR10-CAT}}}  & .99   & .952 $\pm$ .003 & .952 $\pm$ .003 & .953 $\pm$ .003 & .956 $\pm$ .002 & .952 $\pm$ .003 & \boldmath{}\textbf{.958 $\pm$ .002}\unboldmath{} & \boldmath{}\textbf{.101 $\pm$ .004}\unboldmath{} & .092 $\pm$ .003 & .102 $\pm$ .004 & .097 $\pm$ .004 & .098 $\pm$ .004 & .096 $\pm$ .004 \\
      & .95   & .952 $\pm$ .003 & .952 $\pm$ .003 & .951 $\pm$ .003 & .969 $\pm$ .002 & .967 $\pm$ .002 & \boldmath{}\textbf{.969 $\pm$ .002}\unboldmath{} & \boldmath{}\textbf{.100 $\pm$ .004}\unboldmath{} & .072 $\pm$ .003 & .106 $\pm$ .004 & .078 $\pm$ .003 & .084 $\pm$ .003 & .075 $\pm$ .003 \\
      & .90   & .953 $\pm$ .003 & .950 $\pm$ .003 & .950 $\pm$ .003 & \boldmath{}\textbf{.977 $\pm$ .002}\unboldmath{} & .977 $\pm$ .002 & .973 $\pm$ .002 & \boldmath{}\textbf{.100 $\pm$ .004}\unboldmath{} & .074 $\pm$ .003 & .111 $\pm$ .004 & .049 $\pm$ .003 & .064 $\pm$ .003 & .044 $\pm$ .003 \\
      & .85   & .954 $\pm$ .003 & .947 $\pm$ .003 & .948 $\pm$ .003 & \boldmath{}\textbf{.979 $\pm$ .002}\unboldmath{} & .977 $\pm$ .002 & .976 $\pm$ .002 & \boldmath{}\textbf{.099 $\pm$ .004}\unboldmath{} & .077 $\pm$ .003 & .115 $\pm$ .004 & .040 $\pm$ .003 & .024 $\pm$ .002 & .025 $\pm$ .002 \\
      & .80   & .959 $\pm$ .003 & .945 $\pm$ .003 & .946 $\pm$ .003 & \boldmath{}\textbf{.980 $\pm$ .002}\unboldmath{} & .984 $\pm$ .002 & .975 $\pm$ .002 & \boldmath{}\textbf{.095 $\pm$ .004}\unboldmath{} & .080 $\pm$ .004 & .122 $\pm$ .004 & .037 $\pm$ .003 & .017 $\pm$ .002 & .026 $\pm$ .002 \\
      & .75   & .971 $\pm$ .002 & .942 $\pm$ .003 & .943 $\pm$ .003 & .980 $\pm$ .002 & \boldmath{}\textbf{.986 $\pm$ .002}\unboldmath{} & .975 $\pm$ .002 & \boldmath{}\textbf{.084 $\pm$ .003}\unboldmath{} & \boldmath{}\textbf{.084 $\pm$ .004}\unboldmath{} & .129 $\pm$ .004 & .036 $\pm$ .003 & .015 $\pm$ .002 & .026 $\pm$ .002 \\
\midrule
\midrule
      & \#    & 4/54  & 21/54 & 2/54  & 29/54 & 12/54 & 3/54  & 39/54 & 4/54  & 20/54 & 5/54  & 1/54  & 14/54 \\
\end{tabular}%

}
\mbox{}\\[2ex]

    \label{tab:metrics2}
\end{table*}

\begin{table}[ht!]
    \centering
        \caption{Time required for running different methods (in seconds).}
    \resizebox{.6\textwidth}{!}{
\begin{tabular}{c|c|cccccc}
\multicolumn{1}{c}{} & \multicolumn{1}{c}{} & \multicolumn{6}{c}{\textbf{Training Time (seconds)}} \\
      & \boldmath{}\textbf{$c$}\unboldmath{} & \textbf{\textsc{AUCross}} & \textbf{\textsc{Plug-In}} & \textbf{\textsc{Plug-In-AUC}} & \textbf{\textsc{SCross}} & \textbf{\textsc{SelNet}} & \multicolumn{1}{c|}{\textbf{\textsc{SAT}}} \\
\midrule
\multirow{6}{*}{\rotatebox[origin=c]{90}{\textbf{Adult}}}  & .99   & 1.43  & \textbf{0.22} & \textbf{0.22} & 1.15  & 1787.93 & 2319.66 \\
      & .95   & 1.43  & \textbf{0.22} & \textbf{0.22} & 1.15  & 1779.74 & 2319.66 \\
      & .90   & 1.43  & \textbf{0.22} & \textbf{0.22} & 1.15  & 1786.99 & 2319.66 \\
      & .85   & 1.43  & \textbf{0.22} & \textbf{0.22} & 1.15  & 1781.64 & 2319.66 \\
      & .80   & 1.43  & \textbf{0.22} & \textbf{0.22} & 1.15  & 1789.44 & 2319.66 \\
      & .75   & 1.43  & \textbf{0.22} & \textbf{0.22} & 1.15  & 1788.53 & 2319.66 \\
\midrule
\midrule
\multirow{6}{*}{\rotatebox[origin=c]{90}{\textbf{Lending}}}  & .99   & 13.16 & \textbf{2.55} & \textbf{2.55} & 12.10 & 9168.31 & 14182.26 \\
      & .95   & 13.16 & \textbf{2.55} & \textbf{2.55} & 12.10 & 9198.68 & 14182.26 \\
      & .90   & 13.16 & \textbf{2.55} & \textbf{2.55} & 12.10 & 9187.33 & 14182.26 \\
      & .85   & 13.16 & \textbf{2.55} & \textbf{2.55} & 12.10 & 9204.72 & 14182.26 \\
      & .80   & 13.16 & \textbf{2.55} & \textbf{2.55} & 12.10 & 9165.67 & 14182.26 \\
      & .75   & 13.16 & \textbf{2.55} & \textbf{2.55} & 12.10 & 9199.28 & 14182.26 \\
\midrule
\midrule
\multirow{6}{*}{\rotatebox[origin=c]{90}{\textbf{GiveMe}}}  & .99   & 1.60  & \textbf{0.27} & \textbf{0.27} & 1.39  & 2341.00 & 3104.83 \\
      & .95   & 1.60  & \textbf{0.27} & \textbf{0.27} & 1.39  & 2323.07 & 3104.83 \\
      & .90   & 1.60  & \textbf{0.27} & \textbf{0.27} & 1.39  & 2332.69 & 3104.83 \\
      & .85   & 1.60  & \textbf{0.27} & \textbf{0.27} & 1.39  & 2332.97 & 3104.83 \\
      & .80   & 1.60  & \textbf{0.27} & \textbf{0.27} & 1.39  & 2326.58 & 3104.83 \\
      & .75   & 1.60  & \textbf{0.27} & \textbf{0.27} & 1.39  & 2333.06 & 3104.83 \\
\midrule
\midrule
\multirow{6}{*}{\rotatebox[origin=c]{90}{\textbf{UCICredit}}}  & .99   & 1.26  & \textbf{0.20} & \textbf{0.20} & 1.11  & 1547.44 & 2070.54 \\
      & .95   & 1.26  & \textbf{0.20} & \textbf{0.20} & 1.11  & 1549.75 & 2070.54 \\
      & .90   & 1.26  & \textbf{0.20} & \textbf{0.20} & 1.11  & 1527.84 & 2070.54 \\
      & .85   & 1.26  & \textbf{0.20} & \textbf{0.20} & 1.11  & 1547.15 & 2070.54 \\
      & .80   & 1.26  & \textbf{0.20} & \textbf{0.20} & 1.11  & 1548.82 & 2070.54 \\
      & .75   & 1.26  & \textbf{0.20} & \textbf{0.20} & 1.11  & 1541.74 & 2070.54 \\
\midrule
\midrule
\multirow{6}{*}{\rotatebox[origin=c]{90}{\textbf{CSDS1}}}  & .99   & 5.60  & \textbf{1.02} & \textbf{1.02} & 5.31  & 4815.42 & 5524.00 \\
      & .95   & 5.60  & \textbf{1.02} & \textbf{1.02} & 5.31  & 4812.96 & 5524.00 \\
      & .90   & 5.60  & \textbf{1.02} & \textbf{1.02} & 5.31  & 4813.66 & 5524.00 \\
      & .85   & 5.60  & \textbf{1.02} & \textbf{1.02} & 5.31  & 4817.26 & 5524.00 \\
      & .80   & 5.60  & \textbf{1.02} & \textbf{1.02} & 5.31  & 4818.25 & 5524.00 \\
      & .75   & 5.60  & \textbf{1.02} & \textbf{1.02} & 5.31  & 4812.89 & 5524.00 \\
\midrule
\midrule
\multirow{6}{*}{\rotatebox[origin=c]{90}{\textbf{CSDS2}}}  & .99   & 1.23  & \textbf{0.21} & \textbf{0.21} & 1.41  & 609.99 & 357.92 \\
      & .95   & 1.23  & \textbf{0.21} & \textbf{0.21} & 1.41  & 609.32 & 357.92 \\
      & .90   & 1.23  & \textbf{0.21} & \textbf{0.21} & 1.41  & 606.45 & 357.92 \\
      & .85   & 1.23  & \textbf{0.21} & \textbf{0.21} & 1.41  & 606.47 & 357.92 \\
      & .80   & 1.23  & \textbf{0.21} & \textbf{0.21} & 1.41  & 610.34 & 357.92 \\
      & .75   & 1.23  & \textbf{0.21} & \textbf{0.21} & 1.41  & 605.20 & 357.92 \\
\midrule
\midrule
\multirow{6}{*}{\rotatebox[origin=c]{90}{\textbf{CSDS3}}}  & .99   & 5.07  & \textbf{0.80} & \textbf{0.80} & 4.60  & 1188.31 & 748.88 \\
      & .95   & 5.07  & \textbf{0.80} & \textbf{0.80} & 4.60  & 1193.35 & 748.88 \\
      & .90   & 5.07  & \textbf{0.80} & \textbf{0.80} & 4.60  & 1192.50 & 748.88 \\
      & .85   & 5.07  & \textbf{0.80} & \textbf{0.80} & 4.60  & 1190.79 & 748.88 \\
      & .80   & 5.07  & \textbf{0.80} & \textbf{0.80} & 4.60  & 1191.03 & 748.88 \\
      & .75   & 5.07  & \textbf{0.80} & \textbf{0.80} & 4.60  & 1191.56 & 748.88 \\
\midrule
\midrule
\multirow{6}{*}{\rotatebox[origin=c]{90}{\textbf{CatsVsDogs}}}  & .99   & 14668.55 & \textbf{2624.68} & \textbf{2624.68} & 14475.64 & 2700.46 & 2735.28 \\
      & .95   & 14668.55 & \textbf{2624.68} & \textbf{2624.68} & 14475.64 & 2699.13 & 2735.28 \\
      & .90   & 14668.55 & \textbf{2624.68} & \textbf{2624.68} & 14475.64 & 2733.35 & 2735.28 \\
      & .85   & 14668.55 & \textbf{2624.68} & \textbf{2624.68} & 14475.64 & 2769.44 & 2735.28 \\
      & .80   & 14668.55 & \textbf{2624.68} & \textbf{2624.68} & 14475.64 & 2766.80 & 2735.28 \\
      & .75   & 14668.55 & \textbf{2624.68} & \textbf{2624.68} & 14475.64 & 2762.34 & 2735.28 \\
\midrule
\midrule
\multirow{6}{*}{\rotatebox[origin=c]{90}{\textbf{CIFAR10-CAT}}}  & .99   & 17401.61 & \textbf{3026.98} & \textbf{3026.98} & 16924.30 & 3317.14 & 3239.50 \\
      & .95   & 17401.61 & \textbf{3026.98} & \textbf{3026.98} & 16924.30 & 3296.70 & 3239.50 \\
      & .90   & 17401.61 & \textbf{3026.98} & \textbf{3026.98} & 16924.30 & 3337.28 & 3239.50 \\
      & .85   & 17401.61 & \textbf{3026.98} & \textbf{3026.98} & 16924.30 & 3358.18 & 3239.50 \\
      & .80   & 17401.61 & \textbf{3026.98} & \textbf{3026.98} & 16924.30 & 3262.57 & 3239.50 \\
      & .75   & 17401.61 & \textbf{3026.98} & \textbf{3026.98} & 16924.30 & 3294.08 & 3239.50 \\
\midrule
\midrule
      & \#    & 0/54  & 54/54 & 54/54 & 0/54  & 0/54  & 0/54 \\
\end{tabular}%

    }

    \label{tab:running_times}
\end{table}

\subsection{Results for different classifiers}

We report in Tables \ref{tab:class_cov}-\ref{tab:class_pos} the results for \textsc{AUCross} and \textsc{PlugInAUC} using different classifiers over tabular datasets. Regarding coverage, we see the harshest violations for both \textsc{AUCross}-\textsc{RandForest} and \textsc{PlugInAUC}-\textsc{RandForest} over \textit{CSDS2} and for \textsc{AUCross}-\textsc{ResNet} over \textit{GiveMe}. Coverage violations occur also for \textit{Lending} dataset, independently of the classifier.
Regarding the AUC, both \textsc{AUCross} and \textsc{PlugInAUC} succeed in increasing it while target coverage drops, regardless of the considered base classifier. At the same time, we notice that for all the base classifiers lowering coverage does not guarantee selective accuracy to increase, highlighting once more the trade-off between these two metrics.
Finally, we see similar results across all the classifiers for positive rate.

\subsection{Results for different $K$'s}

We report empirical coverage and selective AUC for different choices of the parameter $K$ in Table \ref{tab:K_cross}. The default value $K=5$ shows a slightly better trade-off between empirical coverage and selectice AUC.

\begin{table}[!t]
    \centering
        \caption{Empirical coverage for \textsc{AUCross} and \textsc{PlugInAUC} using different classifiers (1,000 bootstrap runs over the test set, results as mean $\pm$ stdev).}
    \resizebox{\textwidth}{!}{
\begin{tabular}{r|c|llllllll}
\multicolumn{1}{c}{} & \multicolumn{1}{c}{} & \multicolumn{8}{c}{\textbf{Empirical Coverage}} \\
\multicolumn{1}{c}{} & \multicolumn{1}{c}{} & \multicolumn{4}{c|}{\textbf{\textsc{AUCross}}} & \multicolumn{4}{c}{\textbf{\textsc{PlugInAUC}}} \\
      & \boldmath{}\textbf{$c$}\unboldmath{} & \multicolumn{1}{c}{\textbf{\textsc{Logistic}}} & \multicolumn{1}{c}{\textbf{\textsc{RandForest}}} & \multicolumn{1}{c}{\textbf{\textsc{ResNet}}} & \multicolumn{1}{c|}{\textbf{\textsc{XGBoost}}} & \multicolumn{1}{c}{\textbf{\textsc{Logistic}}} & \multicolumn{1}{c}{\textbf{\textsc{RandForest}}} & \multicolumn{1}{c}{\textbf{\textsc{ResNet}}} & \multicolumn{1}{c}{\textbf{\textsc{XGBoost}}} \\
\midrule
\multicolumn{1}{c|}{\multirow{6}{*}{\rotatebox[origin=c]{90}{\textbf{Adult}}} } & .99   & \multicolumn{1}{c}{.991 $\pm$ (.001)} & \multicolumn{1}{c}{.988 $\pm$ (.001)} & \multicolumn{1}{c}{.994 $\pm$ (.001)} & \multicolumn{1}{c|}{\boldmath{}\textbf{.990 $\pm$ (.001)}\unboldmath{}} & \multicolumn{1}{c}{.988 $\pm$ (.001)} & \multicolumn{1}{c}{\boldmath{}\textbf{.989 $\pm$ (.001)}\unboldmath{}} & \multicolumn{1}{c}{.988 $\pm$ (.001)} & \multicolumn{1}{c}{.992 $\pm$ (.001)} \\
      & .95   & \multicolumn{1}{c}{\boldmath{}\textbf{.950 $\pm$ (.002)}\unboldmath{}} & \multicolumn{1}{c}{.945 $\pm$ (.002)} & \multicolumn{1}{c}{.961 $\pm$ (.002)} & \multicolumn{1}{c|}{.951 $\pm$ (.002)} & \multicolumn{1}{c}{.947 $\pm$ (.002)} & \multicolumn{1}{c}{.949 $\pm$ (.002)} & \multicolumn{1}{c}{.946 $\pm$ (.002)} & \multicolumn{1}{c}{\boldmath{}\textbf{.951 $\pm$ (.002)}\unboldmath{}} \\
      & .90   & \multicolumn{1}{c}{\boldmath{}\textbf{.901 $\pm$ (.003)}\unboldmath{}} & \multicolumn{1}{c}{.893 $\pm$ (.003)} & \multicolumn{1}{c}{.912 $\pm$ (.003)} & \multicolumn{1}{c|}{.891 $\pm$ (.003)} & \multicolumn{1}{c}{.896 $\pm$ (.003)} & \multicolumn{1}{c}{.894 $\pm$ (.003)} & \multicolumn{1}{c}{.903 $\pm$ (.003)} & \multicolumn{1}{c}{\boldmath{}\textbf{.899 $\pm$ (.003)}\unboldmath{}} \\
      & .85   & \multicolumn{1}{c}{\boldmath{}\textbf{.850 $\pm$ (.003)}\unboldmath{}} & \multicolumn{1}{c}{.853 $\pm$ (.003)} & \multicolumn{1}{c}{.840 $\pm$ (.004)} & \multicolumn{1}{c|}{.846 $\pm$ (.003)} & \multicolumn{1}{c}{.848 $\pm$ (.003)} & \multicolumn{1}{c}{.836 $\pm$ (.004)} & \multicolumn{1}{c}{.849 $\pm$ (.003)} & \multicolumn{1}{c}{\boldmath{}\textbf{.850 $\pm$ (.003)}\unboldmath{}} \\
      & .80   & \multicolumn{1}{c}{\boldmath{}\textbf{.800 $\pm$ (.004)}\unboldmath{}} & \multicolumn{1}{c}{.804 $\pm$ (.004)} & \multicolumn{1}{c}{.792 $\pm$ (.004)} & \multicolumn{1}{c|}{.793 $\pm$ (.004)} & \multicolumn{1}{c}{.799 $\pm$ (.004)} & \multicolumn{1}{c}{.783 $\pm$ (.004)} & \multicolumn{1}{c}{.801 $\pm$ (.004)} & \multicolumn{1}{c}{\boldmath{}\textbf{.800 $\pm$ (.004)}\unboldmath{}} \\
      & .75   & \multicolumn{1}{c}{.747 $\pm$ (.004)} & \multicolumn{1}{c}{\boldmath{}\textbf{.752 $\pm$ (.004)}\unboldmath{}} & \multicolumn{1}{c}{.735 $\pm$ (.004)} & \multicolumn{1}{c|}{.744 $\pm$ (.004)} & \multicolumn{1}{c}{.744 $\pm$ (.004)} & \multicolumn{1}{c}{.743 $\pm$ (.004)} & \multicolumn{1}{c}{\boldmath{}\textbf{.749 $\pm$ (.004)}\unboldmath{}} & \multicolumn{1}{c}{.748 $\pm$ (.004)} \\
\midrule
\midrule
\multicolumn{1}{c|}{\multirow{6}{*}{\rotatebox[origin=c]{90}{\textbf{Lending}}} } & .99   & \multicolumn{1}{c}{.996 $\pm$ (.001)} & \multicolumn{1}{c}{\boldmath{}\textbf{.992 $\pm$ (.001)}\unboldmath{}} & \multicolumn{1}{c}{.995 $\pm$ (.001)} & \multicolumn{1}{c|}{.996 $\pm$ (.001)} & \multicolumn{1}{c}{.996 $\pm$ (.001)} & \multicolumn{1}{c}{\boldmath{}\textbf{.992 $\pm$ (.001)}\unboldmath{}} & \multicolumn{1}{c}{.995 $\pm$ (.001)} & \multicolumn{1}{c}{.996 $\pm$ (.001)} \\
      & .95   & \multicolumn{1}{c}{.976 $\pm$ (.001)} & \multicolumn{1}{c}{\boldmath{}\textbf{.971 $\pm$ (.001)}\unboldmath{}} & \multicolumn{1}{c}{.976 $\pm$ (.001)} & \multicolumn{1}{c|}{.980 $\pm$ (.001)} & \multicolumn{1}{c}{.976 $\pm$ (.001)} & \multicolumn{1}{c}{\boldmath{}\textbf{.971 $\pm$ (.001)}\unboldmath{}} & \multicolumn{1}{c}{.975 $\pm$ (.001)} & \multicolumn{1}{c}{.980 $\pm$ (.001)} \\
      & .90   & \multicolumn{1}{c}{.951 $\pm$ (.001)} & \multicolumn{1}{c}{\boldmath{}\textbf{.945 $\pm$ (.001)}\unboldmath{}} & \multicolumn{1}{c}{.951 $\pm$ (.001)} & \multicolumn{1}{c|}{.958 $\pm$ (.001)} & \multicolumn{1}{c}{.951 $\pm$ (.001)} & \multicolumn{1}{c}{\boldmath{}\textbf{.945 $\pm$ (.001)}\unboldmath{}} & \multicolumn{1}{c}{.949 $\pm$ (.001)} & \multicolumn{1}{c}{.958 $\pm$ (.001)} \\
      & .85   & \multicolumn{1}{c}{.926 $\pm$ (.001)} & \multicolumn{1}{c}{\boldmath{}\textbf{.918 $\pm$ (.001)}\unboldmath{}} & \multicolumn{1}{c}{.925 $\pm$ (.001)} & \multicolumn{1}{c|}{.936 $\pm$ (.001)} & \multicolumn{1}{c}{.926 $\pm$ (.001)} & \multicolumn{1}{c}{\boldmath{}\textbf{.917 $\pm$ (.001)}\unboldmath{}} & \multicolumn{1}{c}{.922 $\pm$ (.001)} & \multicolumn{1}{c}{.936 $\pm$ (.001)} \\
      & .80   & \multicolumn{1}{c}{.900 $\pm$ (.001)} & \multicolumn{1}{c}{\boldmath{}\textbf{.890 $\pm$ (.001)}\unboldmath{}} & \multicolumn{1}{c}{.898 $\pm$ (.001)} & \multicolumn{1}{c|}{.912 $\pm$ (.001)} & \multicolumn{1}{c}{.899 $\pm$ (.001)} & \multicolumn{1}{c}{\boldmath{}\textbf{.884 $\pm$ (.001)}\unboldmath{}} & \multicolumn{1}{c}{.894 $\pm$ (.001)} & \multicolumn{1}{c}{.912 $\pm$ (.001)} \\
      & .75   & \multicolumn{1}{c}{.871 $\pm$ (.001)} & \multicolumn{1}{c}{\boldmath{}\textbf{.856 $\pm$ (.001)}\unboldmath{}} & \multicolumn{1}{c}{.868 $\pm$ (.001)} & \multicolumn{1}{c|}{.886 $\pm$ (.001)} & \multicolumn{1}{c}{.871 $\pm$ (.001)} & \multicolumn{1}{c}{\boldmath{}\textbf{.855 $\pm$ (.001)}\unboldmath{}} & \multicolumn{1}{c}{.865 $\pm$ (.001)} & \multicolumn{1}{c}{.886 $\pm$ (.001)} \\
\midrule
\midrule
\multicolumn{1}{c|}{\multirow{6}{*}{\rotatebox[origin=c]{90}{\textbf{GiveMe}}} } & .99   & \multicolumn{1}{c}{\boldmath{}\textbf{.990 $\pm$ (.001)}\unboldmath{}} & \multicolumn{1}{c}{.942 $\pm$ (.002)} & \multicolumn{1}{c}{.944 $\pm$ (.002)} & \multicolumn{1}{c|}{\boldmath{}\textbf{.990 $\pm$ (.001)}\unboldmath{}} & \multicolumn{1}{c}{\boldmath{}\textbf{.991 $\pm$ (.001)}\unboldmath{}} & \multicolumn{1}{c}{.944 $\pm$ (.002)} & \multicolumn{1}{c}{.977 $\pm$ (.001)} & \multicolumn{1}{c}{\boldmath{}\textbf{.991 $\pm$ (.001)}\unboldmath{}} \\
      & .95   & \multicolumn{1}{c}{.953 $\pm$ (.002)} & \multicolumn{1}{c}{.942 $\pm$ (.002)} & \multicolumn{1}{c}{.755 $\pm$ (.003)} & \multicolumn{1}{c|}{\boldmath{}\textbf{.951 $\pm$ (.002)}\unboldmath{}} & \multicolumn{1}{c}{\boldmath{}\textbf{.950 $\pm$ (.002)}\unboldmath{}} & \multicolumn{1}{c}{.944 $\pm$ (.002)} & \multicolumn{1}{c}{.930 $\pm$ (.002)} & \multicolumn{1}{c}{\boldmath{}\textbf{.948 $\pm$ (.002)}\unboldmath{}} \\
      & .90   & \multicolumn{1}{c}{.904 $\pm$ (.002)} & \multicolumn{1}{c}{.858 $\pm$ (.002)} & \multicolumn{1}{c}{.640 $\pm$ (.003)} & \multicolumn{1}{c|}{\boldmath{}\textbf{.898 $\pm$ (.002)}\unboldmath{}} & \multicolumn{1}{c}{\boldmath{}\textbf{.898 $\pm$ (.002)}\unboldmath{}} & \multicolumn{1}{c}{.863 $\pm$ (.002)} & \multicolumn{1}{c}{.884 $\pm$ (.002)} & \multicolumn{1}{c}{\boldmath{}\textbf{.898 $\pm$ (.002)}\unboldmath{}} \\
      & .85   & \multicolumn{1}{c}{\boldmath{}\textbf{.855 $\pm$ (.002)}\unboldmath{}} & \multicolumn{1}{c}{.788 $\pm$ (.003)} & \multicolumn{1}{c}{.570 $\pm$ (.003)} & \multicolumn{1}{c|}{.849 $\pm$ (.002)} & \multicolumn{1}{c}{.844 $\pm$ (.002)} & \multicolumn{1}{c}{.794 $\pm$ (.003)} & \multicolumn{1}{c}{.841 $\pm$ (.002)} & \multicolumn{1}{c}{\boldmath{}\textbf{.849 $\pm$ (.002)}\unboldmath{}} \\
      & .80   & \multicolumn{1}{c}{.805 $\pm$ (.003)} & \multicolumn{1}{c}{.762 $\pm$ (.003)} & \multicolumn{1}{c}{.499 $\pm$ (.003)} & \multicolumn{1}{c|}{\boldmath{}\textbf{.799 $\pm$ (.003)}\unboldmath{}} & \multicolumn{1}{c}{.796 $\pm$ (.003)} & \multicolumn{1}{c}{.780 $\pm$ (.003)} & \multicolumn{1}{c}{\boldmath{}\textbf{.797 $\pm$ (.003)}\unboldmath{}} & \multicolumn{1}{c}{\boldmath{}\textbf{.797 $\pm$ (.003)}\unboldmath{}} \\
      & .75   & \multicolumn{1}{c}{.758 $\pm$ (.003)} & \multicolumn{1}{c}{.741 $\pm$ (.003)} & \multicolumn{1}{c}{.431 $\pm$ (.003)} & \multicolumn{1}{c|}{\boldmath{}\textbf{.750 $\pm$ (.003)}\unboldmath{}} & \multicolumn{1}{c}{.746 $\pm$ (.003)} & \multicolumn{1}{c}{.667 $\pm$ (.003)} & \multicolumn{1}{c}{.748 $\pm$ (.003)} & \multicolumn{1}{c}{\boldmath{}\textbf{.750 $\pm$ (.003)}\unboldmath{}} \\
\midrule
\midrule
\multicolumn{1}{c|}{\multirow{6}{*}{\rotatebox[origin=c]{90}{\textbf{UCICredit}}} } & .99   & \multicolumn{1}{c}{.991 $\pm$ (.002)} & \multicolumn{1}{c}{.949 $\pm$ (.003)} & \multicolumn{1}{c}{.987 $\pm$ (.002)} & \multicolumn{1}{c|}{\boldmath{}\textbf{.990 $\pm$ (.002)}\unboldmath{}} & \multicolumn{1}{c}{\boldmath{}\textbf{.990 $\pm$ (.002)}\unboldmath{}} & \multicolumn{1}{c}{.954 $\pm$ (.003)} & \multicolumn{1}{c}{\boldmath{}\textbf{.991 $\pm$ (.002)}\unboldmath{}} & \multicolumn{1}{c}{\boldmath{}\textbf{.990 $\pm$ (.002)}\unboldmath{}} \\
      & .95   & \multicolumn{1}{c}{\boldmath{}\textbf{.953 $\pm$ (.003)}\unboldmath{}} & \multicolumn{1}{c}{.949 $\pm$ (.003)} & \multicolumn{1}{c}{.939 $\pm$ (.003)} & \multicolumn{1}{c|}{.947 $\pm$ (.003)} & \multicolumn{1}{c}{.962 $\pm$ (.003)} & \multicolumn{1}{c}{.904 $\pm$ (.004)} & \multicolumn{1}{c}{.946 $\pm$ (.003)} & \multicolumn{1}{c}{\boldmath{}\textbf{.948 $\pm$ (.003)}\unboldmath{}} \\
      & .90   & \multicolumn{1}{c}{\boldmath{}\textbf{.903 $\pm$ (.004)}\unboldmath{}} & \multicolumn{1}{c}{.878 $\pm$ (.004)} & \multicolumn{1}{c}{.873 $\pm$ (.004)} & \multicolumn{1}{c|}{.894 $\pm$ (.004)} & \multicolumn{1}{c}{.913 $\pm$ (.004)} & \multicolumn{1}{c}{.857 $\pm$ (.005)} & \multicolumn{1}{c}{\boldmath{}\textbf{.901 $\pm$ (.004)}\unboldmath{}} & \multicolumn{1}{c}{.894 $\pm$ (.004)} \\
      & .85   & \multicolumn{1}{c}{\boldmath{}\textbf{.853 $\pm$ (.005)}\unboldmath{}} & \multicolumn{1}{c}{.827 $\pm$ (.005)} & \multicolumn{1}{c}{.814 $\pm$ (.005)} & \multicolumn{1}{c|}{.840 $\pm$ (.005)} & \multicolumn{1}{c}{.864 $\pm$ (.004)} & \multicolumn{1}{c}{.838 $\pm$ (.005)} & \multicolumn{1}{c}{\boldmath{}\textbf{.849 $\pm$ (.005)}\unboldmath{}} & \multicolumn{1}{c}{.848 $\pm$ (.005)} \\
      & .80   & \multicolumn{1}{c}{\boldmath{}\textbf{.808 $\pm$ (.005)}\unboldmath{}} & \multicolumn{1}{c}{.777 $\pm$ (.005)} & \multicolumn{1}{c}{.759 $\pm$ (.005)} & \multicolumn{1}{c|}{.784 $\pm$ (.005)} & \multicolumn{1}{c}{.816 $\pm$ (.005)} & \multicolumn{1}{c}{.775 $\pm$ (.005)} & \multicolumn{1}{c}{\boldmath{}\textbf{.796 $\pm$ (.005)}\unboldmath{}} & \multicolumn{1}{c}{.806 $\pm$ (.005)} \\
      & .75   & \multicolumn{1}{c}{\boldmath{}\textbf{.762 $\pm$ (.005)}\unboldmath{}} & \multicolumn{1}{c}{.712 $\pm$ (.006)} & \multicolumn{1}{c}{.706 $\pm$ (.006)} & \multicolumn{1}{c|}{.729 $\pm$ (.006)} & \multicolumn{1}{c}{.770 $\pm$ (.005)} & \multicolumn{1}{c}{.733 $\pm$ (.006)} & \multicolumn{1}{c}{.744 $\pm$ (.006)} & \multicolumn{1}{c}{\boldmath{}\textbf{.752 $\pm$ (.005)}\unboldmath{}} \\
\midrule
\midrule
\multicolumn{1}{c|}{\multirow{6}{*}{\rotatebox[origin=c]{90}{\textbf{CSDS1}}} } & .99   & \multicolumn{1}{c}{\boldmath{}\textbf{.991 $\pm$ (.001)}\unboldmath{}} & \multicolumn{1}{c}{.913 $\pm$ (.002)} & \multicolumn{1}{c}{\boldmath{}\textbf{.991 $\pm$ (.001)}\unboldmath{}} & \multicolumn{1}{c|}{\boldmath{}\textbf{.991 $\pm$ (.001)}\unboldmath{}} & \multicolumn{1}{c}{.991 $\pm$ (.001)} & \multicolumn{1}{c}{.961 $\pm$ (.001)} & \multicolumn{1}{c}{\boldmath{}\textbf{.990 $\pm$ (.001)}\unboldmath{}} & \multicolumn{1}{c}{.991 $\pm$ (.001)} \\
      & .95   & \multicolumn{1}{c}{\boldmath{}\textbf{.951 $\pm$ (.001)}\unboldmath{}} & \multicolumn{1}{c}{.913 $\pm$ (.002)} & \multicolumn{1}{c}{.953 $\pm$ (.001)} & \multicolumn{1}{c|}{\boldmath{}\textbf{.951 $\pm$ (.001)}\unboldmath{}} & \multicolumn{1}{c}{\boldmath{}\textbf{.950 $\pm$ (.001)}\unboldmath{}} & \multicolumn{1}{c}{.914 $\pm$ (.001)} & \multicolumn{1}{c}{.949 $\pm$ (.001)} & \multicolumn{1}{c}{\boldmath{}\textbf{.950 $\pm$ (.001)}\unboldmath{}} \\
      & .90   & \multicolumn{1}{c}{.902 $\pm$ (.002)} & \multicolumn{1}{c}{.870 $\pm$ (.002)} & \multicolumn{1}{c}{.905 $\pm$ (.002)} & \multicolumn{1}{c|}{\boldmath{}\textbf{.901 $\pm$ (.002)}\unboldmath{}} & \multicolumn{1}{c}{\boldmath{}\textbf{.900 $\pm$ (.002)}\unboldmath{}} & \multicolumn{1}{c}{.872 $\pm$ (.002)} & \multicolumn{1}{c}{.899 $\pm$ (.002)} & \multicolumn{1}{c}{.899 $\pm$ (.002)} \\
      & .85   & \multicolumn{1}{c}{.852 $\pm$ (.002)} & \multicolumn{1}{c}{.822 $\pm$ (.002)} & \multicolumn{1}{c}{.854 $\pm$ (.002)} & \multicolumn{1}{c|}{\boldmath{}\textbf{.851 $\pm$ (.002)}\unboldmath{}} & \multicolumn{1}{c}{.851 $\pm$ (.002)} & \multicolumn{1}{c}{.820 $\pm$ (.002)} & \multicolumn{1}{c}{.851 $\pm$ (.002)} & \multicolumn{1}{c}{\boldmath{}\textbf{.850 $\pm$ (.002)}\unboldmath{}} \\
      & .80   & \multicolumn{1}{c}{.803 $\pm$ (.002)} & \multicolumn{1}{c}{.784 $\pm$ (.002)} & \multicolumn{1}{c}{.797 $\pm$ (.002)} & \multicolumn{1}{c|}{\boldmath{}\textbf{.799 $\pm$ (.002)}\unboldmath{}} & \multicolumn{1}{c}{\boldmath{}\textbf{.800 $\pm$ (.002)}\unboldmath{}} & \multicolumn{1}{c}{.786 $\pm$ (.002)} & \multicolumn{1}{c}{.798 $\pm$ (.002)} & \multicolumn{1}{c}{\boldmath{}\textbf{.800 $\pm$ (.002)}\unboldmath{}} \\
      & .75   & \multicolumn{1}{c}{.752 $\pm$ (.002)} & \multicolumn{1}{c}{.729 $\pm$ (.002)} & \multicolumn{1}{c}{.744 $\pm$ (.002)} & \multicolumn{1}{c|}{\boldmath{}\textbf{.749 $\pm$ (.002)}\unboldmath{}} & \multicolumn{1}{c}{\boldmath{}\textbf{.749 $\pm$ (.002)}\unboldmath{}} & \multicolumn{1}{c}{.698 $\pm$ (.002)} & \multicolumn{1}{c}{.746 $\pm$ (.002)} & \multicolumn{1}{c}{.752 $\pm$ (.002)} \\
\midrule
\midrule
\multicolumn{1}{c|}{\multirow{6}{*}{\rotatebox[origin=c]{90}{\textbf{CSDS2}}} } & .99   & \multicolumn{1}{c}{.993 $\pm$ (.001)} & \multicolumn{1}{c}{.653 $\pm$ (.005)} & \multicolumn{1}{c}{\boldmath{}\textbf{.990 $\pm$ (.001)}\unboldmath{}} & \multicolumn{1}{c|}{\boldmath{}\textbf{.990 $\pm$ (.001)}\unboldmath{}} & \multicolumn{1}{c}{\boldmath{}\textbf{.990 $\pm$ (.001)}\unboldmath{}} & \multicolumn{1}{c}{.829 $\pm$ (.004)} & \multicolumn{1}{c}{.989 $\pm$ (.001)} & \multicolumn{1}{c}{.991 $\pm$ (.001)} \\
      & .95   & \multicolumn{1}{c}{.953 $\pm$ (.002)} & \multicolumn{1}{c}{.653 $\pm$ (.005)} & \multicolumn{1}{c}{.942 $\pm$ (.003)} & \multicolumn{1}{c|}{\boldmath{}\textbf{.949 $\pm$ (.002)}\unboldmath{}} & \multicolumn{1}{c}{\boldmath{}\textbf{.950 $\pm$ (.003)}\unboldmath{}} & \multicolumn{1}{c}{.829 $\pm$ (.004)} & \multicolumn{1}{c}{.947 $\pm$ (.003)} & \multicolumn{1}{c}{.954 $\pm$ (.002)} \\
      & .90   & \multicolumn{1}{c}{\boldmath{}\textbf{.903 $\pm$ (.003)}\unboldmath{}} & \multicolumn{1}{c}{.653 $\pm$ (.005)} & \multicolumn{1}{c}{.881 $\pm$ (.003)} & \multicolumn{1}{c|}{.890 $\pm$ (.003)} & \multicolumn{1}{c}{.894 $\pm$ (.003)} & \multicolumn{1}{c}{.829 $\pm$ (.004)} & \multicolumn{1}{c}{\boldmath{}\textbf{.902 $\pm$ (.003)}\unboldmath{}} & \multicolumn{1}{c}{.907 $\pm$ (.003)} \\
      & .85   & \multicolumn{1}{c}{\boldmath{}\textbf{.854 $\pm$ (.004)}\unboldmath{}} & \multicolumn{1}{c}{.653 $\pm$ (.005)} & \multicolumn{1}{c}{.824 $\pm$ (.004)} & \multicolumn{1}{c|}{.834 $\pm$ (.004)} & \multicolumn{1}{c}{.841 $\pm$ (.004)} & \multicolumn{1}{c}{.650 $\pm$ (.005)} & \multicolumn{1}{c}{.856 $\pm$ (.004)} & \multicolumn{1}{c}{\boldmath{}\textbf{.851 $\pm$ (.004)}\unboldmath{}} \\
      & .80   & \multicolumn{1}{c}{\boldmath{}\textbf{.803 $\pm$ (.004)}\unboldmath{}} & \multicolumn{1}{c}{.653 $\pm$ (.005)} & \multicolumn{1}{c}{.761 $\pm$ (.004)} & \multicolumn{1}{c|}{.781 $\pm$ (.004)} & \multicolumn{1}{c}{\boldmath{}\textbf{.798 $\pm$ (.004)}\unboldmath{}} & \multicolumn{1}{c}{.650 $\pm$ (.005)} & \multicolumn{1}{c}{.806 $\pm$ (.004)} & \multicolumn{1}{c}{.806 $\pm$ (.004)} \\
      & .75   & \multicolumn{1}{c}{\boldmath{}\textbf{.756 $\pm$ (.004)}\unboldmath{}} & \multicolumn{1}{c}{.653 $\pm$ (.005)} & \multicolumn{1}{c}{.703 $\pm$ (.005)} & \multicolumn{1}{c|}{.732 $\pm$ (.004)} & \multicolumn{1}{c}{\boldmath{}\textbf{.749 $\pm$ (.004)}\unboldmath{}} & \multicolumn{1}{c}{.516 $\pm$ (.005)} & \multicolumn{1}{c}{.756 $\pm$ (.004)} & \multicolumn{1}{c}{.754 $\pm$ (.004)} \\
\midrule
\midrule
\multicolumn{1}{c|}{\multirow{6}{*}{\rotatebox[origin=c]{90}{\textbf{CSDS3}}} } & .99   & \multicolumn{1}{c}{\boldmath{}\textbf{.990 $\pm$ (.001)}\unboldmath{}} & \multicolumn{1}{c}{.980 $\pm$ (.001)} & \multicolumn{1}{c}{.992 $\pm$ (.001)} & \multicolumn{1}{c|}{.989 $\pm$ (.001)} & \multicolumn{1}{c}{.992 $\pm$ (.001)} & \multicolumn{1}{c}{.989 $\pm$ (.001)} & \multicolumn{1}{c}{\boldmath{}\textbf{.991 $\pm$ (.001)}\unboldmath{}} & \multicolumn{1}{c}{\boldmath{}\textbf{.991 $\pm$ (.001)}\unboldmath{}} \\
      & .95   & \multicolumn{1}{c}{.948 $\pm$ (.002)} & \multicolumn{1}{c}{.933 $\pm$ (.002)} & \multicolumn{1}{c}{.954 $\pm$ (.002)} & \multicolumn{1}{c|}{\boldmath{}\textbf{.949 $\pm$ (.002)}\unboldmath{}} & \multicolumn{1}{c}{.955 $\pm$ (.002)} & \multicolumn{1}{c}{.941 $\pm$ (.002)} & \multicolumn{1}{c}{\boldmath{}\textbf{.952 $\pm$ (.002)}\unboldmath{}} & \multicolumn{1}{c}{.955 $\pm$ (.002)} \\
      & .90   & \multicolumn{1}{c}{\boldmath{}\textbf{.899 $\pm$ (.002)}\unboldmath{}} & \multicolumn{1}{c}{.887 $\pm$ (.003)} & \multicolumn{1}{c}{.907 $\pm$ (.002)} & \multicolumn{1}{c|}{.903 $\pm$ (.002)} & \multicolumn{1}{c}{.905 $\pm$ (.002)} & \multicolumn{1}{c}{.896 $\pm$ (.003)} & \multicolumn{1}{c}{\boldmath{}\textbf{.901 $\pm$ (.003)}\unboldmath{}} & \multicolumn{1}{c}{.906 $\pm$ (.002)} \\
      & .85   & \multicolumn{1}{c}{\boldmath{}\textbf{.851 $\pm$ (.003)}\unboldmath{}} & \multicolumn{1}{c}{.831 $\pm$ (.003)} & \multicolumn{1}{c}{.860 $\pm$ (.003)} & \multicolumn{1}{c|}{.854 $\pm$ (.003)} & \multicolumn{1}{c}{.857 $\pm$ (.003)} & \multicolumn{1}{c}{.838 $\pm$ (.003)} & \multicolumn{1}{c}{\boldmath{}\textbf{.850 $\pm$ (.003)}\unboldmath{}} & \multicolumn{1}{c}{.860 $\pm$ (.003)} \\
      & .80   & \multicolumn{1}{c}{\boldmath{}\textbf{.799 $\pm$ (.003)}\unboldmath{}} & \multicolumn{1}{c}{.795 $\pm$ (.003)} & \multicolumn{1}{c}{.789 $\pm$ (.003)} & \multicolumn{1}{c|}{.814 $\pm$ (.003)} & \multicolumn{1}{c}{.799 $\pm$ (.003)} & \multicolumn{1}{c}{\boldmath{}\textbf{.803 $\pm$ (.003)}\unboldmath{}} & \multicolumn{1}{c}{\boldmath{}\textbf{.803 $\pm$ (.003)}\unboldmath{}} & \multicolumn{1}{c}{.814 $\pm$ (.003)} \\
      & .75   & \multicolumn{1}{c}{\boldmath{}\textbf{.744 $\pm$ (.003)}\unboldmath{}} & \multicolumn{1}{c}{.738 $\pm$ (.003)} & \multicolumn{1}{c}{\boldmath{}\textbf{.744 $\pm$ (.003)}\unboldmath{}} & \multicolumn{1}{c|}{.760 $\pm$ (.003)} & \multicolumn{1}{c}{.746 $\pm$ (.003)} & \multicolumn{1}{c}{\boldmath{}\textbf{.748 $\pm$ (.003)}\unboldmath{}} & \multicolumn{1}{c}{.753 $\pm$ (.003)} & \multicolumn{1}{c}{.762 $\pm$ (.003)} \\
\midrule
\midrule
      & \#    & \multicolumn{1}{c}{22/42} & \multicolumn{1}{c}{7/42} & \multicolumn{1}{c}{3/42} & \multicolumn{1}{c|}{16/42} & \multicolumn{1}{c}{12/42} & \multicolumn{1}{c}{9/42} & \multicolumn{1}{c}{13/42} & \multicolumn{1}{c}{18/42} \\
\midrule
      & $V$   & .015  $\pm$ .034 & .064  $\pm$ .087 & .064  $\pm$ .108 & \multicolumn{1}{c|}{.019  $\pm$ .038} & .017  $\pm$ .034 & .056  $\pm$ .069 & .016  $\pm$ .032 & .017  $\pm$ .038 \\
\end{tabular}%

}

    \label{tab:class_cov}
\end{table}

\begin{table}[t]
    \centering
        \caption{Selective AUC for \textsc{AUCross} and \textsc{PlugInAUC} using different classifiers (1,000 bootstrap runs over the test set, results as mean $\pm$ stdev).}
    \resizebox{\textwidth}{!}{
\begin{tabular}{c|c|cccc|cccc}
\multicolumn{1}{r}{} & \multicolumn{1}{r}{} & \multicolumn{8}{c}{\textbf{Selective AUC}} \\
\multicolumn{1}{r}{} & \multicolumn{1}{r}{} & \multicolumn{4}{c}{\textbf{\textsc{AUCross}}} & \multicolumn{4}{c}{\textbf{\textsc{PlugInAUC}}} \\
      & \multicolumn{1}{c|}{\boldmath{}\textbf{$c$}\unboldmath{}} & \textbf{\textsc{Logistic}} & \textbf{\textsc{ RandForest}} & \textbf{\textsc{ResNet}} & \textbf{\textsc{XGBoost}} & \textbf{\textsc{Logistic}} & \textbf{\textsc{ RandForest}} & \textbf{\textsc{ResNet}} & \textbf{\textsc{XGBoost}} \\
\midrule
\multicolumn{1}{c|}{\multirow{6}{*}{\rotatebox[origin=c]{90}{\textbf{Adult}}} } & \multicolumn{1}{c|}{.99} & .903 $\pm$ .003 & .888 $\pm$ .003 & .901 $\pm$ .003 & \boldmath{}\textbf{.928 $\pm$ .003}\unboldmath{} & .904 $\pm$ .003 & .886 $\pm$ .003 & .906 $\pm$ .003 & \boldmath{}\textbf{.927 $\pm$ .003}\unboldmath{} \\
      & \multicolumn{1}{c|}{.95} & .910 $\pm$ .003 & .895 $\pm$ .003 & .907 $\pm$ .003 & \boldmath{}\textbf{.934 $\pm$ .003}\unboldmath{} & .911 $\pm$ .003 & .892 $\pm$ .003 & .912 $\pm$ .003 & \boldmath{}\textbf{.934 $\pm$ .003}\unboldmath{} \\
      & \multicolumn{1}{c|}{.90} & .918 $\pm$ .003 & .903 $\pm$ .003 & .914 $\pm$ .003 & \boldmath{}\textbf{.944 $\pm$ .002}\unboldmath{} & .918 $\pm$ .003 & .900 $\pm$ .003 & .919 $\pm$ .003 & \boldmath{}\textbf{.942 $\pm$ .003}\unboldmath{} \\
      & \multicolumn{1}{c|}{.85} & .926 $\pm$ .003 & .910 $\pm$ .003 & .925 $\pm$ .003 & \boldmath{}\textbf{.950 $\pm$ .002}\unboldmath{} & .925 $\pm$ .003 & .908 $\pm$ .003 & .927 $\pm$ .003 & \boldmath{}\textbf{.949 $\pm$ .002}\unboldmath{} \\
      & \multicolumn{1}{c|}{.80} & .934 $\pm$ .003 & .918 $\pm$ .003 & .933 $\pm$ .003 & \boldmath{}\textbf{.959 $\pm$ .002}\unboldmath{} & .932 $\pm$ .003 & .916 $\pm$ .003 & .934 $\pm$ .003 & \boldmath{}\textbf{.956 $\pm$ .002}\unboldmath{} \\
      & \multicolumn{1}{c|}{.75} & .941 $\pm$ .003 & .925 $\pm$ .003 & .942 $\pm$ .003 & \boldmath{}\textbf{.965 $\pm$ .002}\unboldmath{} & .939 $\pm$ .003 & .922 $\pm$ .003 & .942 $\pm$ .003 & \boldmath{}\textbf{.962 $\pm$ .002}\unboldmath{} \\
\midrule
\midrule
\multicolumn{1}{c|}{\multirow{6}{*}{\rotatebox[origin=c]{90}{\textbf{Lending}}} } & \multicolumn{1}{c|}{.99} & .956 $\pm$ .001 & .977 $\pm$ .001 & .981 $\pm$ .001 & \boldmath{}\textbf{.987 $\pm$ .001}\unboldmath{} & .956 $\pm$ .001 & .976 $\pm$ .001 & .975 $\pm$ .001 & \boldmath{}\textbf{.987 $\pm$ .001}\unboldmath{} \\
      & \multicolumn{1}{c|}{.95} & .958 $\pm$ .001 & .979 $\pm$ .001 & .983 $\pm$ .001 & \boldmath{}\textbf{.988 $\pm$ .001}\unboldmath{} & .958 $\pm$ .001 & .979 $\pm$ .001 & .978 $\pm$ .001 & \boldmath{}\textbf{.989 $\pm$ .001}\unboldmath{} \\
      & \multicolumn{1}{c|}{.90} & .962 $\pm$ .001 & .982 $\pm$ .001 & .986 $\pm$ .001 & \boldmath{}\textbf{.990 $\pm$ .001}\unboldmath{} & .962 $\pm$ .001 & .982 $\pm$ .001 & .981 $\pm$ .001 & \boldmath{}\textbf{.990 $\pm$ .001}\unboldmath{} \\
      & \multicolumn{1}{c|}{.85} & .965 $\pm$ .001 & .984 $\pm$ .001 & .988 $\pm$ .001 & \boldmath{}\textbf{.992 $\pm$ .001}\unboldmath{} & .965 $\pm$ .001 & .984 $\pm$ .001 & .985 $\pm$ .001 & \boldmath{}\textbf{.992 $\pm$ .001}\unboldmath{} \\
      & \multicolumn{1}{c|}{.80} & .968 $\pm$ .001 & .987 $\pm$ .001 & .991 $\pm$ .001 & \boldmath{}\textbf{.993 $\pm$ .001}\unboldmath{} & .968 $\pm$ .001 & .987 $\pm$ .001 & .988 $\pm$ .001 & \boldmath{}\textbf{.993 $\pm$ .001}\unboldmath{} \\
      & \multicolumn{1}{c|}{.75} & .971 $\pm$ .001 & .989 $\pm$ .001 & .993 $\pm$ .001 & \boldmath{}\textbf{.994 $\pm$ .001}\unboldmath{} & .971 $\pm$ .001 & .989 $\pm$ .001 & .991 $\pm$ .001 & \boldmath{}\textbf{.994 $\pm$ .001}\unboldmath{} \\
\midrule
\midrule
\multicolumn{1}{c|}{\multirow{6}{*}{\rotatebox[origin=c]{90}{\textbf{GiveMe}}} } & \multicolumn{1}{c|}{.99} & .699 $\pm$ .006 & .836 $\pm$ .005 & .670 $\pm$ .006 & \boldmath{}\textbf{.865 $\pm$ .004}\unboldmath{} & .698 $\pm$ .006 & .840 $\pm$ .005 & .834 $\pm$ .005 & \boldmath{}\textbf{.862 $\pm$ .004}\unboldmath{} \\
      & \multicolumn{1}{c|}{.95} & .703 $\pm$ .006 & .836 $\pm$ .005 & .688 $\pm$ .007 & \boldmath{}\textbf{.871 $\pm$ .004}\unboldmath{} & .702 $\pm$ .006 & .840 $\pm$ .005 & .840 $\pm$ .005 & \boldmath{}\textbf{.869 $\pm$ .004}\unboldmath{} \\
      & \multicolumn{1}{c|}{.90} & .708 $\pm$ .006 & .846 $\pm$ .005 & .705 $\pm$ .008 & \boldmath{}\textbf{.878 $\pm$ .004}\unboldmath{} & .708 $\pm$ .006 & .850 $\pm$ .005 & .847 $\pm$ .005 & \boldmath{}\textbf{.876 $\pm$ .004}\unboldmath{} \\
      & \multicolumn{1}{c|}{.85} & .713 $\pm$ .006 & .854 $\pm$ .005 & .720 $\pm$ .008 & \boldmath{}\textbf{.885 $\pm$ .004}\unboldmath{} & .714 $\pm$ .006 & .857 $\pm$ .005 & .853 $\pm$ .005 & \boldmath{}\textbf{.883 $\pm$ .004}\unboldmath{} \\
      & \multicolumn{1}{c|}{.80} & .720 $\pm$ .006 & .858 $\pm$ .005 & .732 $\pm$ .008 & \boldmath{}\textbf{.892 $\pm$ .004}\unboldmath{} & .720 $\pm$ .006 & .859 $\pm$ .005 & .859 $\pm$ .005 & \boldmath{}\textbf{.891 $\pm$ .004}\unboldmath{} \\
      & \multicolumn{1}{c|}{.75} & .726 $\pm$ .006 & .861 $\pm$ .005 & .749 $\pm$ .009 & \boldmath{}\textbf{.899 $\pm$ .004}\unboldmath{} & .727 $\pm$ .007 & .870 $\pm$ .005 & .866 $\pm$ .005 & \boldmath{}\textbf{.897 $\pm$ .004}\unboldmath{} \\
\midrule
\midrule
\multicolumn{1}{c|}{\multirow{6}{*}{\rotatebox[origin=c]{90}{\textbf{UCICredit}}} } & \multicolumn{1}{c|}{.99} & .701 $\pm$ .009 & .760 $\pm$ .008 & \boldmath{}\textbf{.768 $\pm$ .007}\unboldmath{} & .767 $\pm$ .008 & .702 $\pm$ .009 & .757 $\pm$ .008 & \boldmath{}\textbf{.768 $\pm$ .007}\unboldmath{} & .760 $\pm$ .008 \\
      & \multicolumn{1}{c|}{.95} & .704 $\pm$ .009 & .760 $\pm$ .008 & \boldmath{}\textbf{.774 $\pm$ .007}\unboldmath{} & .774 $\pm$ .008 & .704 $\pm$ .009 & .764 $\pm$ .008 & \boldmath{}\textbf{.774 $\pm$ .007}\unboldmath{} & .765 $\pm$ .008 \\
      & \multicolumn{1}{c|}{.90} & .709 $\pm$ .009 & .769 $\pm$ .008 & \boldmath{}\textbf{.783 $\pm$ .007}\unboldmath{} & .780 $\pm$ .008 & .708 $\pm$ .009 & .770 $\pm$ .008 & \boldmath{}\textbf{.779 $\pm$ .007}\unboldmath{} & .772 $\pm$ .008 \\
      & \multicolumn{1}{c|}{.85} & .714 $\pm$ .009 & .776 $\pm$ .008 & \boldmath{}\textbf{.792 $\pm$ .007}\unboldmath{} & .787 $\pm$ .008 & .713 $\pm$ .009 & .772 $\pm$ .008 & \boldmath{}\textbf{.785 $\pm$ .007}\unboldmath{} & .778 $\pm$ .008 \\
      & \multicolumn{1}{c|}{.80} & .719 $\pm$ .009 & .784 $\pm$ .008 & \boldmath{}\textbf{.799 $\pm$ .007}\unboldmath{} & .794 $\pm$ .008 & .718 $\pm$ .009 & .780 $\pm$ .008 & \boldmath{}\textbf{.793 $\pm$ .008}\unboldmath{} & .783 $\pm$ .008 \\
      & \multicolumn{1}{c|}{.75} & .724 $\pm$ .009 & .794 $\pm$ .008 & \boldmath{}\textbf{.807 $\pm$ .007}\unboldmath{} & .801 $\pm$ .008 & .723 $\pm$ .009 & .786 $\pm$ .008 & \boldmath{}\textbf{.802 $\pm$ .008}\unboldmath{} & .790 $\pm$ .008 \\
\midrule
\midrule
\multicolumn{1}{c|}{\multirow{6}{*}{\rotatebox[origin=c]{90}{\textbf{CSDS1}}} } & \multicolumn{1}{c|}{.99} & .676 $\pm$ .003 & .636 $\pm$ .004 & .679 $\pm$ .003 & \boldmath{}\textbf{.680 $\pm$ .003}\unboldmath{} & .676 $\pm$ .003 & .634 $\pm$ .003 & .679 $\pm$ .003 & \boldmath{}\textbf{.680 $\pm$ .003}\unboldmath{} \\
      & \multicolumn{1}{c|}{.95} & .679 $\pm$ .003 & .636 $\pm$ .004 & .682 $\pm$ .003 & \boldmath{}\textbf{.683 $\pm$ .003}\unboldmath{} & .679 $\pm$ .003 & .636 $\pm$ .004 & \boldmath{}\textbf{.683 $\pm$ .003}\unboldmath{} & \boldmath{}\textbf{.683 $\pm$ .003}\unboldmath{} \\
      & \multicolumn{1}{c|}{.90} & .682 $\pm$ .003 & .639 $\pm$ .004 & .685 $\pm$ .003 & \boldmath{}\textbf{.686 $\pm$ .003}\unboldmath{} & .682 $\pm$ .003 & .639 $\pm$ .004 & .686 $\pm$ .003 & \boldmath{}\textbf{.687 $\pm$ .003}\unboldmath{} \\
      & \multicolumn{1}{c|}{.85} & .685 $\pm$ .003 & .640 $\pm$ .004 & \boldmath{}\textbf{.690 $\pm$ .004}\unboldmath{} & \boldmath{}\textbf{.690 $\pm$ .003}\unboldmath{} & .685 $\pm$ .004 & .641 $\pm$ .004 & .690 $\pm$ .004 & \boldmath{}\textbf{.691 $\pm$ .003}\unboldmath{} \\
      & \multicolumn{1}{c|}{.80} & .689 $\pm$ .004 & .643 $\pm$ .004 & \boldmath{}\textbf{.694 $\pm$ .004}\unboldmath{} & \boldmath{}\textbf{.694 $\pm$ .004}\unboldmath{} & .689 $\pm$ .004 & .644 $\pm$ .004 & .693 $\pm$ .004 & \boldmath{}\textbf{.695 $\pm$ .003}\unboldmath{} \\
      & \multicolumn{1}{c|}{.75} & .693 $\pm$ .004 & .644 $\pm$ .004 & \boldmath{}\textbf{.698 $\pm$ .004}\unboldmath{} & .697 $\pm$ .004 & .693 $\pm$ .004 & .647 $\pm$ .004 & .697 $\pm$ .004 & \boldmath{}\textbf{.698 $\pm$ .004}\unboldmath{} \\
\midrule
\midrule
\multicolumn{1}{c|}{\multirow{6}{*}{\rotatebox[origin=c]{90}{\textbf{CSDS2}}} } & \multicolumn{1}{c|}{.99} & \boldmath{}\textbf{.615 $\pm$ .019}\unboldmath{} & .604 $\pm$ .023 & .615 $\pm$ .019 & .575 $\pm$ .021 & .616 $\pm$ .019 & .587 $\pm$ .021 & \boldmath{}\textbf{.624 $\pm$ .020}\unboldmath{} & .577 $\pm$ .019 \\
      & \multicolumn{1}{c|}{.95} & .615 $\pm$ .019 & .604 $\pm$ .023 & \boldmath{}\textbf{.622 $\pm$ .020}\unboldmath{} & .574 $\pm$ .021 & .621 $\pm$ .020 & .587 $\pm$ .021 & \boldmath{}\textbf{.626 $\pm$ .020}\unboldmath{} & .578 $\pm$ .019 \\
      & \multicolumn{1}{c|}{.90} & .619 $\pm$ .020 & .604 $\pm$ .023 & \boldmath{}\textbf{.623 $\pm$ .020}\unboldmath{} & .574 $\pm$ .021 & .627 $\pm$ .020 & .587 $\pm$ .021 & \boldmath{}\textbf{.628 $\pm$ .020}\unboldmath{} & .580 $\pm$ .019 \\
      & \multicolumn{1}{c|}{.85} & .622 $\pm$ .020 & .604 $\pm$ .023 & \boldmath{}\textbf{.628 $\pm$ .020}\unboldmath{} & .576 $\pm$ .022 & .631 $\pm$ .021 & .580 $\pm$ .023 & \boldmath{}\textbf{.633 $\pm$ .021}\unboldmath{} & .583 $\pm$ .020 \\
      & \multicolumn{1}{c|}{.80} & \boldmath{}\textbf{.626 $\pm$ .021}\unboldmath{} & .604 $\pm$ .023 & .619 $\pm$ .020 & .571 $\pm$ .022 & .636 $\pm$ .021 & .580 $\pm$ .023 & \boldmath{}\textbf{.639 $\pm$ .022}\unboldmath{} & .585 $\pm$ .020 \\
      & \multicolumn{1}{c|}{.75} & \boldmath{}\textbf{.633 $\pm$ .022}\unboldmath{} & .604 $\pm$ .023 & .616 $\pm$ .020 & .568 $\pm$ .022 & \boldmath{}\textbf{.644 $\pm$ .022}\unboldmath{} & .591 $\pm$ .025 & .639 $\pm$ .022 & .588 $\pm$ .021 \\
\midrule
\midrule
\multicolumn{1}{c|}{\multirow{6}{*}{\rotatebox[origin=c]{90}{\textbf{CSDS3}}} } & \multicolumn{1}{c|}{.99} & .840 $\pm$ .003 & .838 $\pm$ .003 & .840 $\pm$ .003 & \boldmath{}\textbf{.845 $\pm$ .003}\unboldmath{} & .840 $\pm$ .003 & .835 $\pm$ .003 & \boldmath{}\textbf{.845 $\pm$ .003}\unboldmath{} & .843 $\pm$ .003 \\
      & \multicolumn{1}{c|}{.95} & .846 $\pm$ .003 & .845 $\pm$ .003 & .845 $\pm$ .003 & \boldmath{}\textbf{.851 $\pm$ .003}\unboldmath{} & .844 $\pm$ .003 & .841 $\pm$ .003 & \boldmath{}\textbf{.850 $\pm$ .003}\unboldmath{} & .848 $\pm$ .003 \\
      & \multicolumn{1}{c|}{.90} & .852 $\pm$ .003 & .851 $\pm$ .003 & .851 $\pm$ .003 & \boldmath{}\textbf{.858 $\pm$ .003}\unboldmath{} & .851 $\pm$ .003 & .847 $\pm$ .003 & \boldmath{}\textbf{.857 $\pm$ .003}\unboldmath{} & .856 $\pm$ .003 \\
      & \multicolumn{1}{c|}{.85} & .859 $\pm$ .003 & .859 $\pm$ .003 & .858 $\pm$ .003 & \boldmath{}\textbf{.865 $\pm$ .003}\unboldmath{} & .856 $\pm$ .003 & .855 $\pm$ .004 & \boldmath{}\textbf{.864 $\pm$ .003}\unboldmath{} & .863 $\pm$ .003 \\
      & \multicolumn{1}{c|}{.80} & .866 $\pm$ .003 & .865 $\pm$ .004 & .869 $\pm$ .003 & \boldmath{}\textbf{.871 $\pm$ .003}\unboldmath{} & .865 $\pm$ .003 & .861 $\pm$ .004 & \boldmath{}\textbf{.870 $\pm$ .003}\unboldmath{} & .869 $\pm$ .003 \\
      & \multicolumn{1}{c|}{.75} & .873 $\pm$ .003 & .873 $\pm$ .004 & .875 $\pm$ .003 & \boldmath{}\textbf{.879 $\pm$ .003}\unboldmath{} & .872 $\pm$ .003 & .868 $\pm$ .004 & \boldmath{}\textbf{.878 $\pm$ .003}\unboldmath{} & .877 $\pm$ .003 \\
\midrule
\midrule
      & \#    & 3/42  & 0/42  & 12/42 & 29/42 & 1/42  & 0/42  & 18/42 & 24/42 \\
\end{tabular}%

}
    \label{tab:class_auc}
\end{table}

\begin{table}[t]
    \centering
        \caption{Selective accuracy for \textsc{AUCross} and \textsc{PlugInAUC} using different classifiers (1,000 bootstrap runs over the test set, results as mean $\pm$ stdev).}
    \resizebox{\textwidth}{!}{
\begin{tabular}{c|c|cccc|cccc}
\multicolumn{1}{r}{} & \multicolumn{1}{r}{} & \multicolumn{8}{c}{\textbf{Selective Accuracy}} \\
\multicolumn{1}{r}{} & \multicolumn{1}{r}{} & \multicolumn{4}{c}{\textbf{\textsc{AUCross}}} & \multicolumn{4}{c}{\textbf{\textsc{PlugInAUC}}} \\
      & \multicolumn{1}{c|}{\boldmath{}\textbf{$c$}\unboldmath{}} & \textbf{\textsc{Logistic}} & \textbf{\textsc{ RandForest}} & \textbf{\textsc{ResNet}} & \textbf{\textsc{XGBoost}} & \textbf{\textsc{Logistic}} & \textbf{\textsc{ RandForest}} & \textbf{\textsc{ResNet}} & \textbf{\textsc{XGBoost}} \\
\midrule
\multicolumn{1}{c|}{\multirow{6}{*}{\rotatebox[origin=c]{90}{\textbf{Adult}}} } & \multicolumn{1}{c|}{.99} & .846 $\pm$ .003 & .838 $\pm$ .004 & .844 $\pm$ .003 & \boldmath{}\textbf{.870 $\pm$ .003}\unboldmath{} & .847 $\pm$ .003 & .839 $\pm$ .004 & .844 $\pm$ .003 & \boldmath{}\textbf{.869 $\pm$ .003}\unboldmath{} \\
      & \multicolumn{1}{c|}{.95} & .851 $\pm$ .003 & .842 $\pm$ .004 & .847 $\pm$ .004 & \boldmath{}\textbf{.874 $\pm$ .003}\unboldmath{} & .851 $\pm$ .003 & .842 $\pm$ .004 & .847 $\pm$ .003 & \boldmath{}\textbf{.874 $\pm$ .003}\unboldmath{} \\
      & \multicolumn{1}{c|}{.90} & .856 $\pm$ .003 & .847 $\pm$ .004 & .853 $\pm$ .004 & \boldmath{}\textbf{.883 $\pm$ .003}\unboldmath{} & .855 $\pm$ .003 & .847 $\pm$ .004 & .853 $\pm$ .003 & \boldmath{}\textbf{.880 $\pm$ .003}\unboldmath{} \\
      & \multicolumn{1}{c|}{.85} & .863 $\pm$ .004 & .853 $\pm$ .004 & .861 $\pm$ .004 & \boldmath{}\textbf{.890 $\pm$ .003}\unboldmath{} & .861 $\pm$ .004 & .851 $\pm$ .004 & .860 $\pm$ .004 & \boldmath{}\textbf{.888 $\pm$ .003}\unboldmath{} \\
      & \multicolumn{1}{c|}{.80} & .870 $\pm$ .004 & .860 $\pm$ .004 & .870 $\pm$ .004 & \boldmath{}\textbf{.901 $\pm$ .003}\unboldmath{} & .868 $\pm$ .004 & .858 $\pm$ .004 & .866 $\pm$ .004 & \boldmath{}\textbf{.896 $\pm$ .003}\unboldmath{} \\
      & \multicolumn{1}{c|}{.75} & .879 $\pm$ .004 & .868 $\pm$ .004 & .883 $\pm$ .003 & \boldmath{}\textbf{.911 $\pm$ .003}\unboldmath{} & .875 $\pm$ .004 & .864 $\pm$ .004 & .874 $\pm$ .004 & \boldmath{}\textbf{.905 $\pm$ .003}\unboldmath{} \\
\midrule
\midrule
\multicolumn{1}{c|}{\multirow{6}{*}{\rotatebox[origin=c]{90}{\textbf{Lending}}} } & \multicolumn{1}{c|}{.99} & .848 $\pm$ .001 & .878 $\pm$ .001 & .883 $\pm$ .001 & \boldmath{}\textbf{.903 $\pm$ .001}\unboldmath{} & .848 $\pm$ .001 & .877 $\pm$ .001 & .874 $\pm$ .001 & \boldmath{}\textbf{.904 $\pm$ .001}\unboldmath{} \\
      & \multicolumn{1}{c|}{.95} & .857 $\pm$ .001 & .890 $\pm$ .001 & .894 $\pm$ .001 & \boldmath{}\textbf{.913 $\pm$ .001}\unboldmath{} & .857 $\pm$ .001 & .888 $\pm$ .001 & .884 $\pm$ .001 & \boldmath{}\textbf{.914 $\pm$ .001}\unboldmath{} \\
      & \multicolumn{1}{c|}{.90} & .869 $\pm$ .001 & .903 $\pm$ .001 & .909 $\pm$ .001 & \boldmath{}\textbf{.927 $\pm$ .001}\unboldmath{} & .869 $\pm$ .001 & .902 $\pm$ .001 & .896 $\pm$ .001 & \boldmath{}\textbf{.928 $\pm$ .001}\unboldmath{} \\
      & \multicolumn{1}{c|}{.85} & .882 $\pm$ .001 & .918 $\pm$ .001 & .924 $\pm$ .001 & \boldmath{}\textbf{.941 $\pm$ .001}\unboldmath{} & .882 $\pm$ .001 & .917 $\pm$ .001 & .910 $\pm$ .001 & \boldmath{}\textbf{.942 $\pm$ .001}\unboldmath{} \\
      & \multicolumn{1}{c|}{.80} & .896 $\pm$ .001 & .933 $\pm$ .001 & .940 $\pm$ .001 & \boldmath{}\textbf{.957 $\pm$ .001}\unboldmath{} & .895 $\pm$ .001 & .936 $\pm$ .001 & .925 $\pm$ .001 & \boldmath{}\textbf{.957 $\pm$ .001}\unboldmath{} \\
      & \multicolumn{1}{c|}{.75} & .911 $\pm$ .001 & .952 $\pm$ .001 & .957 $\pm$ .001 & \boldmath{}\textbf{.975 $\pm$ .001}\unboldmath{} & .911 $\pm$ .001 & .951 $\pm$ .001 & .941 $\pm$ .001 & \boldmath{}\textbf{.974 $\pm$ .001}\unboldmath{} \\
\midrule
\midrule
\multicolumn{1}{c|}{\multirow{6}{*}{\rotatebox[origin=c]{90}{\textbf{GiveMe}}} } & \multicolumn{1}{c|}{.99} & .934 $\pm$ .002 & .935 $\pm$ .002 & .933 $\pm$ .002 & \boldmath{}\textbf{.937 $\pm$ .002}\unboldmath{} & .934 $\pm$ .002 & .935 $\pm$ .002 & .933 $\pm$ .002 & \boldmath{}\textbf{.936 $\pm$ .002}\unboldmath{} \\
      & \multicolumn{1}{c|}{.95} & .934 $\pm$ .002 & .935 $\pm$ .002 & .929 $\pm$ .002 & \boldmath{}\textbf{.937 $\pm$ .002}\unboldmath{} & .934 $\pm$ .002 & .935 $\pm$ .002 & .932 $\pm$ .002 & \boldmath{}\textbf{.936 $\pm$ .002}\unboldmath{} \\
      & \multicolumn{1}{c|}{.90} & .934 $\pm$ .002 & .934 $\pm$ .002 & .928 $\pm$ .002 & \boldmath{}\textbf{.937 $\pm$ .002}\unboldmath{} & .934 $\pm$ .002 & .934 $\pm$ .002 & .931 $\pm$ .002 & \boldmath{}\textbf{.936 $\pm$ .002}\unboldmath{} \\
      & \multicolumn{1}{c|}{.85} & .933 $\pm$ .002 & .932 $\pm$ .002 & .928 $\pm$ .002 & \boldmath{}\textbf{.937 $\pm$ .002}\unboldmath{} & .934 $\pm$ .002 & .932 $\pm$ .002 & .930 $\pm$ .002 & \boldmath{}\textbf{.936 $\pm$ .002}\unboldmath{} \\
      & \multicolumn{1}{c|}{.80} & .934 $\pm$ .002 & .933 $\pm$ .002 & .926 $\pm$ .002 & \boldmath{}\textbf{.938 $\pm$ .002}\unboldmath{} & .935 $\pm$ .002 & .932 $\pm$ .002 & .929 $\pm$ .002 & \boldmath{}\textbf{.937 $\pm$ .002}\unboldmath{} \\
      & \multicolumn{1}{c|}{.75} & .934 $\pm$ .002 & .933 $\pm$ .002 & .923 $\pm$ .003 & \boldmath{}\textbf{.939 $\pm$ .002}\unboldmath{} & .935 $\pm$ .002 & .929 $\pm$ .002 & .929 $\pm$ .002 & \boldmath{}\textbf{.937 $\pm$ .002}\unboldmath{} \\
\midrule
\midrule
\multicolumn{1}{c|}{\multirow{6}{*}{\rotatebox[origin=c]{90}{\textbf{UCICredit}}} } & \multicolumn{1}{c|}{.99} & .784 $\pm$ .005 & .806 $\pm$ .005 & \boldmath{}\textbf{.814 $\pm$ .005}\unboldmath{} & .805 $\pm$ .005 & .785 $\pm$ .005 & .805 $\pm$ .005 & \boldmath{}\textbf{.814 $\pm$ .005}\unboldmath{} & .805 $\pm$ .005 \\
      & \multicolumn{1}{c|}{.95} & .780 $\pm$ .005 & .806 $\pm$ .005 & \boldmath{}\textbf{.812 $\pm$ .005}\unboldmath{} & .805 $\pm$ .005 & .783 $\pm$ .005 & .804 $\pm$ .005 & \boldmath{}\textbf{.812 $\pm$ .005}\unboldmath{} & .804 $\pm$ .005 \\
      & \multicolumn{1}{c|}{.90} & .777 $\pm$ .006 & .803 $\pm$ .005 & \boldmath{}\textbf{.809 $\pm$ .005}\unboldmath{} & .803 $\pm$ .005 & .777 $\pm$ .006 & .801 $\pm$ .006 & \boldmath{}\textbf{.810 $\pm$ .005}\unboldmath{} & .803 $\pm$ .005 \\
      & \multicolumn{1}{c|}{.85} & .773 $\pm$ .006 & .801 $\pm$ .006 & \boldmath{}\textbf{.808 $\pm$ .006}\unboldmath{} & .800 $\pm$ .006 & .774 $\pm$ .006 & .799 $\pm$ .006 & \boldmath{}\textbf{.806 $\pm$ .006}\unboldmath{} & .802 $\pm$ .005 \\
      & \multicolumn{1}{c|}{.80} & .769 $\pm$ .006 & .800 $\pm$ .006 & \boldmath{}\textbf{.806 $\pm$ .006}\unboldmath{} & .796 $\pm$ .006 & .770 $\pm$ .006 & .795 $\pm$ .006 & \boldmath{}\textbf{.804 $\pm$ .006}\unboldmath{} & .800 $\pm$ .006 \\
      & \multicolumn{1}{c|}{.75} & .765 $\pm$ .006 & .799 $\pm$ .006 & \boldmath{}\textbf{.802 $\pm$ .006}\unboldmath{} & .794 $\pm$ .006 & .765 $\pm$ .006 & .793 $\pm$ .006 & \boldmath{}\textbf{.806 $\pm$ .006}\unboldmath{} & .798 $\pm$ .006 \\
\midrule
\midrule
\multicolumn{1}{c|}{\multirow{6}{*}{\rotatebox[origin=c]{90}{\textbf{CSDS1}}} } & \multicolumn{1}{c|}{.99} & \boldmath{}\textbf{.857 $\pm$ .002}\unboldmath{} & .846 $\pm$ .002 & \boldmath{}\textbf{.857 $\pm$ .002}\unboldmath{} & .856 $\pm$ .002 & \boldmath{}\textbf{.857 $\pm$ .002}\unboldmath{} & .847 $\pm$ .002 & \boldmath{}\textbf{.857 $\pm$ .002}\unboldmath{} & .856 $\pm$ .002 \\
      & \multicolumn{1}{c|}{.95} & \boldmath{}\textbf{.856 $\pm$ .002}\unboldmath{} & .846 $\pm$ .002 & \boldmath{}\textbf{.856 $\pm$ .002}\unboldmath{} & .855 $\pm$ .002 & \boldmath{}\textbf{.856 $\pm$ .002}\unboldmath{} & .845 $\pm$ .002 & \boldmath{}\textbf{.856 $\pm$ .002}\unboldmath{} & .854 $\pm$ .002 \\
      & \multicolumn{1}{c|}{.90} & \boldmath{}\textbf{.854 $\pm$ .002}\unboldmath{} & .844 $\pm$ .002 & \boldmath{}\textbf{.854 $\pm$ .002}\unboldmath{} & .854 $\pm$ .002 & \boldmath{}\textbf{.855 $\pm$ .002}\unboldmath{} & .844 $\pm$ .002 & \boldmath{}\textbf{.854 $\pm$ .002}\unboldmath{} & .853 $\pm$ .002 \\
      & \multicolumn{1}{c|}{.85} & \boldmath{}\textbf{.852 $\pm$ .002}\unboldmath{} & .842 $\pm$ .002 & \boldmath{}\textbf{.853 $\pm$ .002}\unboldmath{} & .852 $\pm$ .002 & \boldmath{}\textbf{.852 $\pm$ .002}\unboldmath{} & .841 $\pm$ .002 & \boldmath{}\textbf{.853 $\pm$ .002}\unboldmath{} & .851 $\pm$ .002 \\
      & \multicolumn{1}{c|}{.80} & \boldmath{}\textbf{.851 $\pm$ .002}\unboldmath{} & .840 $\pm$ .002 & \boldmath{}\textbf{.851 $\pm$ .002}\unboldmath{} & .850 $\pm$ .002 & \boldmath{}\textbf{.851 $\pm$ .002}\unboldmath{} & .840 $\pm$ .002 & \boldmath{}\textbf{.851 $\pm$ .002}\unboldmath{} & .850 $\pm$ .002 \\
      & \multicolumn{1}{c|}{.75} & \boldmath{}\textbf{.850 $\pm$ .002}\unboldmath{} & .836 $\pm$ .002 & .849 $\pm$ .002 & .848 $\pm$ .002 & \boldmath{}\textbf{.850 $\pm$ .002}\unboldmath{} & .834 $\pm$ .002 & .849 $\pm$ .002 & .848 $\pm$ .002 \\
\midrule
\midrule
\multicolumn{1}{c|}{\multirow{6}{*}{\rotatebox[origin=c]{90}{\textbf{CSDS2}}} } & \multicolumn{1}{c|}{.99} & \boldmath{}\textbf{.982 $\pm$ .002}\unboldmath{} & .980 $\pm$ .002 & \boldmath{}\textbf{.982 $\pm$ .002}\unboldmath{} & \boldmath{}\textbf{.982 $\pm$ .002}\unboldmath{} & \boldmath{}\textbf{.982 $\pm$ .002}\unboldmath{} & .981 $\pm$ .002 & \boldmath{}\textbf{.982 $\pm$ .002}\unboldmath{} & \boldmath{}\textbf{.982 $\pm$ .002}\unboldmath{} \\
      & \multicolumn{1}{c|}{.95} & \boldmath{}\textbf{.982 $\pm$ .002}\unboldmath{} & .980 $\pm$ .002 & \boldmath{}\textbf{.982 $\pm$ .002}\unboldmath{} & \boldmath{}\textbf{.982 $\pm$ .002}\unboldmath{} & \boldmath{}\textbf{.982 $\pm$ .002}\unboldmath{} & .981 $\pm$ .002 & \boldmath{}\textbf{.982 $\pm$ .002}\unboldmath{} & \boldmath{}\textbf{.982 $\pm$ .002}\unboldmath{} \\
      & \multicolumn{1}{c|}{.90} & \boldmath{}\textbf{.982 $\pm$ .002}\unboldmath{} & .980 $\pm$ .002 & \boldmath{}\textbf{.982 $\pm$ .002}\unboldmath{} & .981 $\pm$ .002 & \boldmath{}\textbf{.982 $\pm$ .002}\unboldmath{} & .981 $\pm$ .002 & \boldmath{}\textbf{.982 $\pm$ .002}\unboldmath{} & \boldmath{}\textbf{.982 $\pm$ .002}\unboldmath{} \\
      & \multicolumn{1}{c|}{.85} & \boldmath{}\textbf{.982 $\pm$ .002}\unboldmath{} & .980 $\pm$ .002 & .981 $\pm$ .002 & .981 $\pm$ .002 & \boldmath{}\textbf{.982 $\pm$ .002}\unboldmath{} & .980 $\pm$ .002 & \boldmath{}\textbf{.982 $\pm$ .002}\unboldmath{} & \boldmath{}\textbf{.982 $\pm$ .002}\unboldmath{} \\
      & \multicolumn{1}{c|}{.80} & \boldmath{}\textbf{.982 $\pm$ .002}\unboldmath{} & .980 $\pm$ .002 & .980 $\pm$ .002 & .980 $\pm$ .002 & \boldmath{}\textbf{.982 $\pm$ .002}\unboldmath{} & .980 $\pm$ .002 & \boldmath{}\textbf{.982 $\pm$ .002}\unboldmath{} & \boldmath{}\textbf{.982 $\pm$ .002}\unboldmath{} \\
      & \multicolumn{1}{c|}{.75} & \boldmath{}\textbf{.982 $\pm$ .002}\unboldmath{} & .980 $\pm$ .002 & .980 $\pm$ .002 & .980 $\pm$ .002 & \boldmath{}\textbf{.982 $\pm$ .002}\unboldmath{} & .979 $\pm$ .002 & \boldmath{}\textbf{.982 $\pm$ .002}\unboldmath{} & \boldmath{}\textbf{.982 $\pm$ .002}\unboldmath{} \\
\midrule
\midrule
\multicolumn{1}{c|}{\multirow{6}{*}{\rotatebox[origin=c]{90}{\textbf{CSDS3}}} } & \multicolumn{1}{c|}{.99} & .808 $\pm$ .003 & .808 $\pm$ .003 & .806 $\pm$ .003 & \boldmath{}\textbf{.811 $\pm$ .003}\unboldmath{} & \boldmath{}\textbf{.809 $\pm$ .003}\unboldmath{} & .804 $\pm$ .003 & .808 $\pm$ .003 & .808 $\pm$ .003 \\
      & \multicolumn{1}{c|}{.95} & .810 $\pm$ .003 & .810 $\pm$ .003 & .807 $\pm$ .003 & \boldmath{}\textbf{.813 $\pm$ .003}\unboldmath{} & .809 $\pm$ .003 & .805 $\pm$ .003 & .809 $\pm$ .003 & \boldmath{}\textbf{.810 $\pm$ .003}\unboldmath{} \\
      & \multicolumn{1}{c|}{.90} & .811 $\pm$ .003 & .813 $\pm$ .003 & .808 $\pm$ .003 & \boldmath{}\textbf{.815 $\pm$ .003}\unboldmath{} & .809 $\pm$ .003 & .807 $\pm$ .003 & .811 $\pm$ .003 & \boldmath{}\textbf{.814 $\pm$ .003}\unboldmath{} \\
      & \multicolumn{1}{c|}{.85} & .812 $\pm$ .003 & .817 $\pm$ .003 & .810 $\pm$ .003 & \boldmath{}\textbf{.818 $\pm$ .003}\unboldmath{} & .810 $\pm$ .003 & .811 $\pm$ .003 & .813 $\pm$ .003 & \boldmath{}\textbf{.817 $\pm$ .003}\unboldmath{} \\
      & \multicolumn{1}{c|}{.80} & .814 $\pm$ .003 & .820 $\pm$ .003 & .817 $\pm$ .003 & \boldmath{}\textbf{.822 $\pm$ .003}\unboldmath{} & .812 $\pm$ .003 & .815 $\pm$ .003 & .816 $\pm$ .003 & \boldmath{}\textbf{.821 $\pm$ .003}\unboldmath{} \\
      & \multicolumn{1}{c|}{.75} & .817 $\pm$ .003 & .826 $\pm$ .003 & .820 $\pm$ .003 & \boldmath{}\textbf{.828 $\pm$ .003}\unboldmath{} & .815 $\pm$ .003 & .819 $\pm$ .003 & .821 $\pm$ .003 & \boldmath{}\textbf{.825 $\pm$ .003}\unboldmath{} \\
\midrule
\midrule
      & \#    & 12/42 & 0/42  & 14/42 & 26/42 & 13/42 & 0/42  & 17/42 & 29/42 \\
\end{tabular}%

    }

    \label{tab:class_acc}
\end{table}

\begin{table}[t]
    \centering
        \caption{Positive rate for \textsc{AUCross} and \textsc{PlugInAUC} using different classifiers (1,000 bootstrap runs over the test set, results as mean $\pm$ stdev).}
    \resizebox{\textwidth}{!}{
\begin{tabular}{c|c|cccc|cccc}
\multicolumn{1}{c}{} & \multicolumn{1}{c}{} & \multicolumn{8}{c}{\textbf{Positive Rate}} \\
\multicolumn{1}{c}{} & \multicolumn{1}{c}{} & \multicolumn{4}{c}{\textbf{\textsc{AUCross}}} & \multicolumn{4}{c}{\textbf{\textsc{PlugInAUC}}} \\
      & \boldmath{}\textbf{$c$}\unboldmath{} & \textbf{\textsc{Logistic}} & \textbf{\textsc{ RandForest}} & \textbf{\textsc{ResNet}} & \textbf{\textsc{XGBoost}} & \textbf{\textsc{Logistic}} & \textbf{\textsc{ RandForest}} & \textbf{\textsc{ResNet}} & \textbf{\textsc{XGBoost}} \\
\midrule
\multirow{6}{*}{\rotatebox[origin=c]{90}{\textbf{Adult}}}  & .99   & \boldmath{}\textbf{.246 $\pm$ .004}\unboldmath{} & \boldmath{}\textbf{.246 $\pm$ .004}\unboldmath{} & \boldmath{}\textbf{.246 $\pm$ .004}\unboldmath{} & \boldmath{}\textbf{.246 $\pm$ .004}\unboldmath{} & \boldmath{}\textbf{.246 $\pm$ .004}\unboldmath{} & \boldmath{}\textbf{.246 $\pm$ .004}\unboldmath{} & \boldmath{}\textbf{.246 $\pm$ .004}\unboldmath{} & \boldmath{}\textbf{.246 $\pm$ .004}\unboldmath{} \\
      & .95   & .245 $\pm$ .004 & \boldmath{}\textbf{.246 $\pm$ .004}\unboldmath{} & \boldmath{}\textbf{.246 $\pm$ .004}\unboldmath{} & \boldmath{}\textbf{.246 $\pm$ .004}\unboldmath{} & .246 $\pm$ .004 & \boldmath{}\textbf{.247 $\pm$ .004}\unboldmath{} & \boldmath{}\textbf{.247 $\pm$ .004}\unboldmath{} & .246 $\pm$ .004 \\
      & .90   & .245 $\pm$ .004 & \boldmath{}\textbf{.246 $\pm$ .004}\unboldmath{} & .245 $\pm$ .004 & \boldmath{}\textbf{.246 $\pm$ .004}\unboldmath{} & .247 $\pm$ .004 & \boldmath{}\textbf{.248 $\pm$ .004}\unboldmath{} & .246 $\pm$ .004 & .248 $\pm$ .004 \\
      & .85   & .245 $\pm$ .004 & .245 $\pm$ .004 & .246 $\pm$ .004 & \boldmath{}\textbf{.246 $\pm$ .004}\unboldmath{} & .248 $\pm$ .004 & \boldmath{}\textbf{.250 $\pm$ .004}\unboldmath{} & .246 $\pm$ .004 & .246 $\pm$ .004 \\
      & .80   & \boldmath{}\textbf{.244 $\pm$ .004}\unboldmath{} & \boldmath{}\textbf{.244 $\pm$ .004}\unboldmath{} & .242 $\pm$ .004 & \boldmath{}\textbf{.244 $\pm$ .004}\unboldmath{} & .247 $\pm$ .004 & \boldmath{}\textbf{.250 $\pm$ .004}\unboldmath{} & .246 $\pm$ .004 & .247 $\pm$ .004 \\
      & .75   & \boldmath{}\textbf{.243 $\pm$ .004}\unboldmath{} & \boldmath{}\textbf{.243 $\pm$ .005}\unboldmath{} & .238 $\pm$ .005 & \boldmath{}\textbf{.243 $\pm$ .005}\unboldmath{} & .248 $\pm$ .004 & \boldmath{}\textbf{.250 $\pm$ .005}\unboldmath{} & .245 $\pm$ .005 & .248 $\pm$ .005 \\
\midrule
\midrule
\multirow{6}{*}{\rotatebox[origin=c]{90}{\textbf{Lending}}}  & .99   & \boldmath{}\textbf{.224 $\pm$ .001}\unboldmath{} & .222 $\pm$ .001 & .223 $\pm$ .001 & \boldmath{}\textbf{.224 $\pm$ .001}\unboldmath{} & \boldmath{}\textbf{.224 $\pm$ .001}\unboldmath{} & .222 $\pm$ .001 & \boldmath{}\textbf{.224 $\pm$ .001}\unboldmath{} & \boldmath{}\textbf{.224 $\pm$ .001}\unboldmath{} \\
      & .95   & \boldmath{}\textbf{.216 $\pm$ .001}\unboldmath{} & .213 $\pm$ .001 & .214 $\pm$ .001 & \boldmath{}\textbf{.216 $\pm$ .001}\unboldmath{} & \boldmath{}\textbf{.217 $\pm$ .001}\unboldmath{} & .213 $\pm$ .001 & .216 $\pm$ .001 & .216 $\pm$ .001 \\
      & .90   & \boldmath{}\textbf{.206 $\pm$ .001}\unboldmath{} & .202 $\pm$ .001 & .202 $\pm$ .001 & .205 $\pm$ .001 & \boldmath{}\textbf{.207 $\pm$ .001}\unboldmath{} & .202 $\pm$ .001 & \boldmath{}\textbf{.207 $\pm$ .001}\unboldmath{} & .205 $\pm$ .001 \\
      & .85   & \boldmath{}\textbf{.196 $\pm$ .001}\unboldmath{} & .190 $\pm$ .001 & .190 $\pm$ .001 & .193 $\pm$ .001 & \boldmath{}\textbf{.196 $\pm$ .001}\unboldmath{} & .190 $\pm$ .001 & \boldmath{}\textbf{.196 $\pm$ .001}\unboldmath{} & .195 $\pm$ .001 \\
      & .80   & \boldmath{}\textbf{.184 $\pm$ .001}\unboldmath{} & .179 $\pm$ .001 & .177 $\pm$ .001 & .181 $\pm$ .001 & \boldmath{}\textbf{.185 $\pm$ .001}\unboldmath{} & .175 $\pm$ .001 & .184 $\pm$ .001 & .182 $\pm$ .001 \\
      & .75   & \boldmath{}\textbf{.171 $\pm$ .001}\unboldmath{} & .165 $\pm$ .001 & .164 $\pm$ .001 & .168 $\pm$ .001 & \boldmath{}\textbf{.172 $\pm$ .001}\unboldmath{} & .163 $\pm$ .001 & .171 $\pm$ .001 & .170 $\pm$ .001 \\
\midrule
\midrule
\multirow{6}{*}{\rotatebox[origin=c]{90}{\textbf{GiveMe}}}  & .99   & .067 $\pm$ .002 & \boldmath{}\textbf{.068 $\pm$ .002}\unboldmath{} & \boldmath{}\textbf{.068 $\pm$ .002}\unboldmath{} & .067 $\pm$ .002 & \boldmath{}\textbf{.068 $\pm$ .002}\unboldmath{} & \boldmath{}\textbf{.068 $\pm$ .002}\unboldmath{} & \boldmath{}\textbf{.068 $\pm$ .002}\unboldmath{} & \boldmath{}\textbf{.068 $\pm$ .002}\unboldmath{} \\
      & .95   & .067 $\pm$ .002 & .068 $\pm$ .002 & \boldmath{}\textbf{.072 $\pm$ .002}\unboldmath{} & .068 $\pm$ .002 & .067 $\pm$ .002 & .068 $\pm$ .002 & \boldmath{}\textbf{.069 $\pm$ .002}\unboldmath{} & .068 $\pm$ .002 \\
      & .90   & .068 $\pm$ .002 & .070 $\pm$ .002 & \boldmath{}\textbf{.072 $\pm$ .002}\unboldmath{} & .068 $\pm$ .002 & .067 $\pm$ .002 & .069 $\pm$ .002 & \boldmath{}\textbf{.070 $\pm$ .002}\unboldmath{} & .068 $\pm$ .002 \\
      & .85   & .068 $\pm$ .002 & .072 $\pm$ .002 & \boldmath{}\textbf{.073 $\pm$ .002}\unboldmath{} & .068 $\pm$ .002 & .067 $\pm$ .002 & \boldmath{}\textbf{.071 $\pm$ .002}\unboldmath{} & \boldmath{}\textbf{.071 $\pm$ .002}\unboldmath{} & .068 $\pm$ .002 \\
      & .80   & .067 $\pm$ .002 & .071 $\pm$ .002 & \boldmath{}\textbf{.075 $\pm$ .002}\unboldmath{} & .068 $\pm$ .002 & .067 $\pm$ .002 & .071 $\pm$ .002 & \boldmath{}\textbf{.072 $\pm$ .002}\unboldmath{} & .068 $\pm$ .002 \\
      & .75   & .067 $\pm$ .002 & .070 $\pm$ .002 & \boldmath{}\textbf{.077 $\pm$ .003}\unboldmath{} & .067 $\pm$ .002 & .066 $\pm$ .002 & \boldmath{}\textbf{.075 $\pm$ .002}\unboldmath{} & .073 $\pm$ .002 & .067 $\pm$ .002 \\
\midrule
\midrule
\multirow{6}{*}{\rotatebox[origin=c]{90}{\textbf{UCICredit}}}  & .99   & .222 $\pm$ .005 & \boldmath{}\textbf{.226 $\pm$ .006}\unboldmath{} & .223 $\pm$ .005 & .222 $\pm$ .005 & .222 $\pm$ .005 & \boldmath{}\textbf{.224 $\pm$ .005}\unboldmath{} & .222 $\pm$ .005 & .222 $\pm$ .005 \\
      & .95   & \boldmath{}\textbf{.227 $\pm$ .006}\unboldmath{} & .226 $\pm$ .006 & .226 $\pm$ .006 & .223 $\pm$ .005 & .225 $\pm$ .005 & \boldmath{}\textbf{.227 $\pm$ .006}\unboldmath{} & .226 $\pm$ .006 & .224 $\pm$ .005 \\
      & .90   & .231 $\pm$ .006 & \boldmath{}\textbf{.232 $\pm$ .006}\unboldmath{} & \boldmath{}\textbf{.232 $\pm$ .006}\unboldmath{} & .227 $\pm$ .006 & .230 $\pm$ .006 & \boldmath{}\textbf{.232 $\pm$ .006}\unboldmath{} & .229 $\pm$ .006 & .227 $\pm$ .006 \\
      & .85   & .235 $\pm$ .006 & .235 $\pm$ .006 & \boldmath{}\textbf{.237 $\pm$ .006}\unboldmath{} & .232 $\pm$ .006 & .234 $\pm$ .006 & .234 $\pm$ .006 & \boldmath{}\textbf{.236 $\pm$ .006}\unboldmath{} & .230 $\pm$ .006 \\
      & .80   & .239 $\pm$ .006 & .239 $\pm$ .006 & \boldmath{}\textbf{.242 $\pm$ .006}\unboldmath{} & .238 $\pm$ .006 & .238 $\pm$ .006 & \boldmath{}\textbf{.241 $\pm$ .006}\unboldmath{} & .240 $\pm$ .006 & .233 $\pm$ .006 \\
      & .75   & .243 $\pm$ .006 & .243 $\pm$ .007 & \boldmath{}\textbf{.249 $\pm$ .007}\unboldmath{} & .242 $\pm$ .006 & .244 $\pm$ .006 & \boldmath{}\textbf{.245 $\pm$ .006}\unboldmath{} & .241 $\pm$ .006 & .238 $\pm$ .006 \\
\midrule
\midrule
\multirow{6}{*}{\rotatebox[origin=c]{90}{\textbf{CSDS1}}}  & .99   & .144 $\pm$ .002 & \boldmath{}\textbf{.147 $\pm$ .002}\unboldmath{} & .145 $\pm$ .002 & .145 $\pm$ .002 & .144 $\pm$ .002 & \boldmath{}\textbf{.146 $\pm$ .002}\unboldmath{} & .144 $\pm$ .002 & .144 $\pm$ .002 \\
      & .95   & .145 $\pm$ .002 & \boldmath{}\textbf{.147 $\pm$ .002}\unboldmath{} & .146 $\pm$ .002 & .146 $\pm$ .002 & .145 $\pm$ .002 & \boldmath{}\textbf{.147 $\pm$ .002}\unboldmath{} & .145 $\pm$ .002 & .146 $\pm$ .002 \\
      & .90   & .147 $\pm$ .002 & \boldmath{}\textbf{.148 $\pm$ .002}\unboldmath{} & .147 $\pm$ .002 & .147 $\pm$ .002 & .147 $\pm$ .002 & \boldmath{}\textbf{.148 $\pm$ .002}\unboldmath{} & .147 $\pm$ .002 & .147 $\pm$ .002 \\
      & .85   & .149 $\pm$ .002 & \boldmath{}\textbf{.150 $\pm$ .002}\unboldmath{} & .148 $\pm$ .002 & .149 $\pm$ .002 & .149 $\pm$ .002 & \boldmath{}\textbf{.150 $\pm$ .002}\unboldmath{} & .148 $\pm$ .002 & .149 $\pm$ .002 \\
      & .80   & .150 $\pm$ .002 & \boldmath{}\textbf{.151 $\pm$ .002}\unboldmath{} & .150 $\pm$ .002 & \boldmath{}\textbf{.151 $\pm$ .002}\unboldmath{} & .150 $\pm$ .002 & \boldmath{}\textbf{.151 $\pm$ .002}\unboldmath{} & .150 $\pm$ .002 & .150 $\pm$ .002 \\
      & .75   & .152 $\pm$ .002 & \boldmath{}\textbf{.154 $\pm$ .002}\unboldmath{} & .152 $\pm$ .002 & .153 $\pm$ .002 & .151 $\pm$ .002 & \boldmath{}\textbf{.155 $\pm$ .002}\unboldmath{} & .152 $\pm$ .002 & .153 $\pm$ .002 \\
\midrule
\midrule
\multirow{6}{*}{\rotatebox[origin=c]{90}{\textbf{CSDS2}}}  & .99   & .019 $\pm$ .002 & \boldmath{}\textbf{.021 $\pm$ .002}\unboldmath{} & .019 $\pm$ .002 & .019 $\pm$ .002 & .019 $\pm$ .002 & \boldmath{}\textbf{.020 $\pm$ .002}\unboldmath{} & .019 $\pm$ .002 & .019 $\pm$ .002 \\
      & .95   & .019 $\pm$ .002 & \boldmath{}\textbf{.021 $\pm$ .002}\unboldmath{} & .019 $\pm$ .002 & .019 $\pm$ .002 & .019 $\pm$ .002 & \boldmath{}\textbf{.020 $\pm$ .002}\unboldmath{} & .019 $\pm$ .002 & .019 $\pm$ .002 \\
      & .90   & .019 $\pm$ .002 & \boldmath{}\textbf{.021 $\pm$ .002}\unboldmath{} & .019 $\pm$ .002 & .019 $\pm$ .002 & .019 $\pm$ .002 & \boldmath{}\textbf{.020 $\pm$ .002}\unboldmath{} & .019 $\pm$ .002 & .019 $\pm$ .002 \\
      & .85   & .019 $\pm$ .002 & \boldmath{}\textbf{.021 $\pm$ .002}\unboldmath{} & .020 $\pm$ .002 & .019 $\pm$ .002 & .019 $\pm$ .002 & \boldmath{}\textbf{.021 $\pm$ .002}\unboldmath{} & .019 $\pm$ .002 & .018 $\pm$ .002 \\
      & .80   & .019 $\pm$ .002 & \boldmath{}\textbf{.021 $\pm$ .002}\unboldmath{} & \boldmath{}\textbf{.021 $\pm$ .002}\unboldmath{} & .020 $\pm$ .002 & .019 $\pm$ .002 & \boldmath{}\textbf{.021 $\pm$ .002}\unboldmath{} & .019 $\pm$ .002 & .019 $\pm$ .002 \\
      & .75   & .019 $\pm$ .002 & \boldmath{}\textbf{.021 $\pm$ .002}\unboldmath{} & \boldmath{}\textbf{.021 $\pm$ .002}\unboldmath{} & \boldmath{}\textbf{.021 $\pm$ .002}\unboldmath{} & .019 $\pm$ .002 & \boldmath{}\textbf{.022 $\pm$ .002}\unboldmath{} & .019 $\pm$ .002 & .019 $\pm$ .002 \\
\midrule
\midrule
\multirow{6}{*}{\rotatebox[origin=c]{90}{\textbf{CSDS3}}}  & .99   & \boldmath{}\textbf{.254 $\pm$ .003}\unboldmath{} & \boldmath{}\textbf{.254 $\pm$ .003}\unboldmath{} & \boldmath{}\textbf{.254 $\pm$ .003}\unboldmath{} & \boldmath{}\textbf{.254 $\pm$ .003}\unboldmath{} & \boldmath{}\textbf{.254 $\pm$ .003}\unboldmath{} & \boldmath{}\textbf{.254 $\pm$ .003}\unboldmath{} & \boldmath{}\textbf{.254 $\pm$ .003}\unboldmath{} & \boldmath{}\textbf{.254 $\pm$ .003}\unboldmath{} \\
      & .95   & \boldmath{}\textbf{.255 $\pm$ .003}\unboldmath{} & \boldmath{}\textbf{.255 $\pm$ .003}\unboldmath{} & \boldmath{}\textbf{.255 $\pm$ .003}\unboldmath{} & .254 $\pm$ .003 & \boldmath{}\textbf{.256 $\pm$ .003}\unboldmath{} & .255 $\pm$ .003 & \boldmath{}\textbf{.256 $\pm$ .003}\unboldmath{} & .255 $\pm$ .003 \\
      & .90   & \boldmath{}\textbf{.258 $\pm$ .003}\unboldmath{} & .256 $\pm$ .003 & .257 $\pm$ .003 & .255 $\pm$ .003 & \boldmath{}\textbf{.259 $\pm$ .003}\unboldmath{} & .256 $\pm$ .003 & .258 $\pm$ .003 & .254 $\pm$ .003 \\
      & .85   & \boldmath{}\textbf{.261 $\pm$ .003}\unboldmath{} & .257 $\pm$ .004 & .259 $\pm$ .003 & .256 $\pm$ .003 & \boldmath{}\textbf{.263 $\pm$ .003}\unboldmath{} & .257 $\pm$ .004 & .260 $\pm$ .003 & .255 $\pm$ .004 \\
      & .80   & \boldmath{}\textbf{.263 $\pm$ .004}\unboldmath{} & .257 $\pm$ .004 & .258 $\pm$ .004 & .256 $\pm$ .004 & \boldmath{}\textbf{.265 $\pm$ .004}\unboldmath{} & .256 $\pm$ .004 & .261 $\pm$ .004 & .255 $\pm$ .004 \\
      & .75   & \boldmath{}\textbf{.266 $\pm$ .004}\unboldmath{} & .257 $\pm$ .004 & .259 $\pm$ .004 & .255 $\pm$ .004 & \boldmath{}\textbf{.268 $\pm$ .004}\unboldmath{} & .257 $\pm$ .004 & .261 $\pm$ .004 & .256 $\pm$ .004 \\
\midrule
\midrule
      & \#    & 16/42 & 22/42 & 16/42 & 11/42 & 14/42 & 27/42 & 13/42 & 4/42 \\
\end{tabular}%

    }

    \label{tab:class_pos}
\end{table}

\begin{table}[t]
    \centering
        \caption{Performance metrics for \textsc{AUCross} using different number of folds $K$ over tabular datasets (1,000 bootstrap runs over the test set, results as mean $\pm$ stdev).}
    \resizebox{\textwidth}{!}{
\begin{tabular}{c|c|ccccc|ccccc}
\multicolumn{1}{c}{} & \multicolumn{1}{c}{} & \multicolumn{5}{c}{\textbf{Empirical Coverage}} & \multicolumn{5}{c}{\textbf{Selective AUC}} \\
      & \boldmath{}\textbf{$c$}\unboldmath{} & \boldmath{}\textbf{$K=2$}\unboldmath{} & \boldmath{}\textbf{$K=3$}\unboldmath{} & \boldmath{}\textbf{$K=5$}\unboldmath{} & \boldmath{}\textbf{$K=7$}\unboldmath{} & \boldmath{}\textbf{$K=10$}\unboldmath{} & \boldmath{}\textbf{$K=2$}\unboldmath{} & \boldmath{}\textbf{$K=3$}\unboldmath{} & \boldmath{}\textbf{$K=5$}\unboldmath{} & \boldmath{}\textbf{$K=7$}\unboldmath{} & \boldmath{}\textbf{$K=10$}\unboldmath{} \\
\midrule
\multirow{6}{*}{\rotatebox[origin=c]{90}{\textbf{Adult}}}  & .99   & \boldmath{}\textbf{.990 $\pm$ .001}\unboldmath{} & .991 $\pm$ .001 & .989 $\pm$ .001 & .991 $\pm$ .001 & .991 $\pm$ .001 & \boldmath{}\textbf{.929 $\pm$ .003}\unboldmath{} & .928 $\pm$ .003 & \boldmath{}\textbf{.929 $\pm$ .003}\unboldmath{} & .928 $\pm$ .003 & .928 $\pm$ .003 \\
      & .95   & .952 $\pm$ .002 & .951 $\pm$ .002 & \boldmath{}\textbf{.950 $\pm$ .002}\unboldmath{} & .949 $\pm$ .002 & .947 $\pm$ .002 & .935 $\pm$ .003 & .935 $\pm$ .003 & .935 $\pm$ .003 & .935 $\pm$ .003 & \boldmath{}\textbf{.936 $\pm$ .003}\unboldmath{} \\
      & .90   & .896 $\pm$ .003 & .897 $\pm$ .003 & .896 $\pm$ .003 & .896 $\pm$ .003 & \boldmath{}\textbf{.900 $\pm$ .003}\unboldmath{} & \boldmath{}\textbf{.943 $\pm$ .002}\unboldmath{} & \boldmath{}\textbf{.943 $\pm$ .002}\unboldmath{} & \boldmath{}\textbf{.943 $\pm$ .002}\unboldmath{} & \boldmath{}\textbf{.943 $\pm$ .002}\unboldmath{} & \boldmath{}\textbf{.943 $\pm$ .002}\unboldmath{} \\
      & .85   & .848 $\pm$ .003 & \boldmath{}\textbf{.850 $\pm$ .003}\unboldmath{} & .849 $\pm$ .003 & .848 $\pm$ .003 & .849 $\pm$ .003 & \boldmath{}\textbf{.950 $\pm$ .002}\unboldmath{} & \boldmath{}\textbf{.950 $\pm$ .002}\unboldmath{} & \boldmath{}\textbf{.950 $\pm$ .002}\unboldmath{} & \boldmath{}\textbf{.950 $\pm$ .002}\unboldmath{} & \boldmath{}\textbf{.950 $\pm$ .002}\unboldmath{} \\
      & .80   & .796 $\pm$ .004 & .797 $\pm$ .004 & .797 $\pm$ .004 & .799 $\pm$ .004 & \boldmath{}\textbf{.800 $\pm$ .004}\unboldmath{} & \boldmath{}\textbf{.958 $\pm$ .002}\unboldmath{} & \boldmath{}\textbf{.958 $\pm$ .002}\unboldmath{} & \boldmath{}\textbf{.958 $\pm$ .002}\unboldmath{} & \boldmath{}\textbf{.957 $\pm$ .002}\unboldmath{} & \boldmath{}\textbf{.957 $\pm$ .002}\unboldmath{} \\
      & .75   & \boldmath{}\textbf{.750 $\pm$ .004}\unboldmath{} & .752 $\pm$ .004 & \boldmath{}\textbf{.750 $\pm$ .004}\unboldmath{} & .753 $\pm$ .004 & .751 $\pm$ .004 & .963 $\pm$ .002 & .963 $\pm$ .002 & .963 $\pm$ .002 & \boldmath{}\textbf{.964 $\pm$ .002}\unboldmath{} & \boldmath{}\textbf{.964 $\pm$ .002}\unboldmath{} \\
\midrule
\midrule
\multirow{6}{*}{\rotatebox[origin=c]{90}{\textbf{Lending}}}  & .99   & \boldmath{}\textbf{.996 $\pm$ .001}\unboldmath{} & \boldmath{}\textbf{.996 $\pm$ .001}\unboldmath{} & \boldmath{}\textbf{.996 $\pm$ .001}\unboldmath{} & \boldmath{}\textbf{.996 $\pm$ .001}\unboldmath{} & \boldmath{}\textbf{.996 $\pm$ .001}\unboldmath{} & .983 $\pm$ .001 & .983 $\pm$ .001 & .983 $\pm$ .001 & .983 $\pm$ .001 & .983 $\pm$ .001 \\
      & .95   & \boldmath{}\textbf{.980 $\pm$ .001}\unboldmath{} & \boldmath{}\textbf{.980 $\pm$ .001}\unboldmath{} & \boldmath{}\textbf{.980 $\pm$ .001}\unboldmath{} & \boldmath{}\textbf{.980 $\pm$ .001}\unboldmath{} & \boldmath{}\textbf{.980 $\pm$ .001}\unboldmath{} & .985 $\pm$ .001 & .985 $\pm$ .001 & .985 $\pm$ .001 & .985 $\pm$ .001 & .985 $\pm$ .001 \\
      & .90   & .958 $\pm$ .001 & \boldmath{}\textbf{.959 $\pm$ .001}\unboldmath{} & \boldmath{}\textbf{.959 $\pm$ .001}\unboldmath{} & \boldmath{}\textbf{.959 $\pm$ .001}\unboldmath{} & \boldmath{}\textbf{.959 $\pm$ .001}\unboldmath{} & .987 $\pm$ .001 & .987 $\pm$ .001 & .987 $\pm$ .001 & .987 $\pm$ .001 & .987 $\pm$ .001 \\
      & .85   & .935 $\pm$ .001 & \boldmath{}\textbf{.936 $\pm$ .001}\unboldmath{} & \boldmath{}\textbf{.936 $\pm$ .001}\unboldmath{} & \boldmath{}\textbf{.936 $\pm$ .001}\unboldmath{} & \boldmath{}\textbf{.936 $\pm$ .001}\unboldmath{} & .989 $\pm$ .001 & .989 $\pm$ .001 & .989 $\pm$ .001 & .989 $\pm$ .001 & .989 $\pm$ .001 \\
      & .80   & \boldmath{}\textbf{.912 $\pm$ .001}\unboldmath{} & \boldmath{}\textbf{.912 $\pm$ .001}\unboldmath{} & \boldmath{}\textbf{.912 $\pm$ .001}\unboldmath{} & \boldmath{}\textbf{.912 $\pm$ .001}\unboldmath{} & \boldmath{}\textbf{.912 $\pm$ .001}\unboldmath{} & .990 $\pm$ .001 & .990 $\pm$ .001 & .990 $\pm$ .001 & .990 $\pm$ .001 & .990 $\pm$ .001 \\
      & .75   & \boldmath{}\textbf{.886 $\pm$ .001}\unboldmath{} & \boldmath{}\textbf{.886 $\pm$ .001}\unboldmath{} & \boldmath{}\textbf{.886 $\pm$ .001}\unboldmath{} & \boldmath{}\textbf{.886 $\pm$ .001}\unboldmath{} & \boldmath{}\textbf{.886 $\pm$ .001}\unboldmath{} & .992 $\pm$ .001 & .992 $\pm$ .001 & .992 $\pm$ .001 & .992 $\pm$ .001 & .992 $\pm$ .001 \\
\midrule
\midrule
\multirow{6}{*}{\rotatebox[origin=c]{90}{\textbf{GiveMe}}}  & .99   & \boldmath{}\textbf{.990 $\pm$ .001}\unboldmath{} & \boldmath{}\textbf{.990 $\pm$ .001}\unboldmath{} & \boldmath{}\textbf{.990 $\pm$ .001}\unboldmath{} & \boldmath{}\textbf{.990 $\pm$ .001}\unboldmath{} & .991 $\pm$ .001 & \boldmath{}\textbf{.868 $\pm$ .004}\unboldmath{} & \boldmath{}\textbf{.868 $\pm$ .004}\unboldmath{} & \boldmath{}\textbf{.868 $\pm$ .004}\unboldmath{} & \boldmath{}\textbf{.868 $\pm$ .004}\unboldmath{} & .867 $\pm$ .004 \\
      & .95   & .951 $\pm$ .002 & \boldmath{}\textbf{.950 $\pm$ .002}\unboldmath{} & \boldmath{}\textbf{.950 $\pm$ .002}\unboldmath{} & \boldmath{}\textbf{.950 $\pm$ .002}\unboldmath{} & \boldmath{}\textbf{.950 $\pm$ .002}\unboldmath{} & \boldmath{}\textbf{.874 $\pm$ .004}\unboldmath{} & \boldmath{}\textbf{.874 $\pm$ .004}\unboldmath{} & \boldmath{}\textbf{.874 $\pm$ .004}\unboldmath{} & \boldmath{}\textbf{.874 $\pm$ .004}\unboldmath{} & \boldmath{}\textbf{.874 $\pm$ .004}\unboldmath{} \\
      & .90   & \boldmath{}\textbf{.900 $\pm$ .002}\unboldmath{} & .899 $\pm$ .002 & .898 $\pm$ .002 & .899 $\pm$ .002 & .898 $\pm$ .002 & .882 $\pm$ .004 & .882 $\pm$ .004 & \boldmath{}\textbf{.883 $\pm$ .004}\unboldmath{} & \boldmath{}\textbf{.883 $\pm$ .004}\unboldmath{} & \boldmath{}\textbf{.883 $\pm$ .004}\unboldmath{} \\
      & .85   & \boldmath{}\textbf{.850 $\pm$ .002}\unboldmath{} & .849 $\pm$ .002 & .848 $\pm$ .002 & .849 $\pm$ .002 & .849 $\pm$ .002 & \boldmath{}\textbf{.890 $\pm$ .004}\unboldmath{} & \boldmath{}\textbf{.890 $\pm$ .004}\unboldmath{} & \boldmath{}\textbf{.890 $\pm$ .004}\unboldmath{} & \boldmath{}\textbf{.890 $\pm$ .004}\unboldmath{} & \boldmath{}\textbf{.890 $\pm$ .004}\unboldmath{} \\
      & .80   & .794 $\pm$ .003 & .795 $\pm$ .003 & .795 $\pm$ .003 & \boldmath{}\textbf{.796 $\pm$ .003}\unboldmath{} & \boldmath{}\textbf{.796 $\pm$ .003}\unboldmath{} & \boldmath{}\textbf{.898 $\pm$ .004}\unboldmath{} & .897 $\pm$ .004 & \boldmath{}\textbf{.898 $\pm$ .004}\unboldmath{} & \boldmath{}\textbf{.898 $\pm$ .004}\unboldmath{} & .897 $\pm$ .004 \\
      & .75   & .747 $\pm$ .003 & .749 $\pm$ .003 & .748 $\pm$ .003 & .749 $\pm$ .003 & \boldmath{}\textbf{.750 $\pm$ .003}\unboldmath{} & \boldmath{}\textbf{.904 $\pm$ .004}\unboldmath{} & .903 $\pm$ .004 & \boldmath{}\textbf{.904 $\pm$ .004}\unboldmath{} & \boldmath{}\textbf{.904 $\pm$ .004}\unboldmath{} & .903 $\pm$ .004 \\
\midrule
\midrule
\multirow{6}{*}{\rotatebox[origin=c]{90}{\textbf{UCICredit}}}  & .99   & .989 $\pm$ .002 & .989 $\pm$ .002 & .988 $\pm$ .002 & .988 $\pm$ .002 & \boldmath{}\textbf{.990 $\pm$ .002}\unboldmath{} & .772 $\pm$ .008 & .772 $\pm$ .007 & .772 $\pm$ .008 & .772 $\pm$ .008 & .772 $\pm$ .008 \\
      & .95   & .944 $\pm$ .003 & \boldmath{}\textbf{.946 $\pm$ .003}\unboldmath{} & .944 $\pm$ .003 & .945 $\pm$ .003 & \boldmath{}\textbf{.946 $\pm$ .003}\unboldmath{} & .778 $\pm$ .008 & .778 $\pm$ .008 & .778 $\pm$ .008 & .778 $\pm$ .008 & .778 $\pm$ .008 \\
      & .90   & .886 $\pm$ .004 & .894 $\pm$ .004 & .896 $\pm$ .004 & .895 $\pm$ .004 & \boldmath{}\textbf{.900 $\pm$ .004}\unboldmath{} & \boldmath{}\textbf{.786 $\pm$ .008}\unboldmath{} & .785 $\pm$ .008 & .785 $\pm$ .008 & .784 $\pm$ .008 & .784 $\pm$ .008 \\
      & .85   & .830 $\pm$ .005 & .838 $\pm$ .005 & .840 $\pm$ .005 & .840 $\pm$ .005 & \boldmath{}\textbf{.843 $\pm$ .005}\unboldmath{} & \boldmath{}\textbf{.794 $\pm$ .008}\unboldmath{} & .793 $\pm$ .007 & .792 $\pm$ .008 & .792 $\pm$ .008 & .792 $\pm$ .008 \\
      & .80   & .775 $\pm$ .005 & .789 $\pm$ .005 & .789 $\pm$ .005 & .788 $\pm$ .005 & \boldmath{}\textbf{.795 $\pm$ .005}\unboldmath{} & \boldmath{}\textbf{.802 $\pm$ .008}\unboldmath{} & .800 $\pm$ .008 & .800 $\pm$ .008 & .800 $\pm$ .008 & .800 $\pm$ .008 \\
      & .75   & .720 $\pm$ .006 & .737 $\pm$ .006 & .737 $\pm$ .006 & .735 $\pm$ .006 & \boldmath{}\textbf{.745 $\pm$ .006}\unboldmath{} & \boldmath{}\textbf{.811 $\pm$ .008}\unboldmath{} & .808 $\pm$ .008 & .809 $\pm$ .008 & .809 $\pm$ .008 & .807 $\pm$ .008 \\
\midrule
\midrule
\multirow{6}{*}{\rotatebox[origin=c]{90}{\textbf{CSDS1}}}  & .99   & .991 $\pm$ .001 & \boldmath{}\textbf{.990 $\pm$ .001}\unboldmath{} & \boldmath{}\textbf{.990 $\pm$ .001}\unboldmath{} & \boldmath{}\textbf{.990 $\pm$ .001}\unboldmath{} & \boldmath{}\textbf{.990 $\pm$ .001}\unboldmath{} & .686 $\pm$ .003 & .686 $\pm$ .003 & .686 $\pm$ .003 & .686 $\pm$ .003 & .686 $\pm$ .003 \\
      & .95   & \boldmath{}\textbf{.950 $\pm$ .001}\unboldmath{} & \boldmath{}\textbf{.950 $\pm$ .001}\unboldmath{} & \boldmath{}\textbf{.950 $\pm$ .001}\unboldmath{} & .949 $\pm$ .001 & \boldmath{}\textbf{.950 $\pm$ .001}\unboldmath{} & .689 $\pm$ .003 & .689 $\pm$ .003 & .689 $\pm$ .003 & .689 $\pm$ .003 & .689 $\pm$ .003 \\
      & .90   & .898 $\pm$ .002 & \boldmath{}\textbf{.899 $\pm$ .002}\unboldmath{} & .898 $\pm$ .002 & .898 $\pm$ .002 & .898 $\pm$ .002 & .693 $\pm$ .003 & .693 $\pm$ .003 & .693 $\pm$ .003 & .693 $\pm$ .003 & .693 $\pm$ .003 \\
      & .85   & .848 $\pm$ .002 & \boldmath{}\textbf{.849 $\pm$ .002}\unboldmath{} & \boldmath{}\textbf{.849 $\pm$ .002}\unboldmath{} & .848 $\pm$ .002 & .848 $\pm$ .002 & .697 $\pm$ .003 & .697 $\pm$ .003 & .697 $\pm$ .003 & .697 $\pm$ .003 & .697 $\pm$ .003 \\
      & .80   & .796 $\pm$ .002 & \boldmath{}\textbf{.799 $\pm$ .002}\unboldmath{} & \boldmath{}\textbf{.799 $\pm$ .002}\unboldmath{} & .797 $\pm$ .002 & .797 $\pm$ .002 & .702 $\pm$ .004 & .702 $\pm$ .004 & .702 $\pm$ .004 & .702 $\pm$ .004 & .702 $\pm$ .004 \\
      & .75   & .746 $\pm$ .002 & \boldmath{}\textbf{.748 $\pm$ .002}\unboldmath{} & \boldmath{}\textbf{.748 $\pm$ .002}\unboldmath{} & .747 $\pm$ .002 & .747 $\pm$ .002 & .706 $\pm$ .004 & .706 $\pm$ .004 & .706 $\pm$ .004 & .706 $\pm$ .004 & .706 $\pm$ .004 \\
\midrule
\midrule
\multirow{6}{*}{\rotatebox[origin=c]{90}{\textbf{CSDS2}}}  & .99   & \boldmath{}\textbf{.990 $\pm$ .001}\unboldmath{} & .992 $\pm$ .001 & \boldmath{}\textbf{.990 $\pm$ .001}\unboldmath{} & \boldmath{}\textbf{.990 $\pm$ .001}\unboldmath{} & .992 $\pm$ .001 & .615 $\pm$ .020 & \boldmath{}\textbf{.618 $\pm$ .020}\unboldmath{} & .616 $\pm$ .020 & \boldmath{}\textbf{.618 $\pm$ .020}\unboldmath{} & .617 $\pm$ .020 \\
      & .95   & .952 $\pm$ .002 & \boldmath{}\textbf{.951 $\pm$ .002}\unboldmath{} & .952 $\pm$ .002 & .956 $\pm$ .002 & .956 $\pm$ .002 & \boldmath{}\textbf{.622 $\pm$ .020}\unboldmath{} & .619 $\pm$ .020 & .619 $\pm$ .020 & .619 $\pm$ .020 & .621 $\pm$ .020 \\
      & .90   & \boldmath{}\textbf{.904 $\pm$ .003}\unboldmath{} & \boldmath{}\textbf{.904 $\pm$ .003}\unboldmath{} & \boldmath{}\textbf{.904 $\pm$ .003}\unboldmath{} & .909 $\pm$ .003 & .911 $\pm$ .003 & .620 $\pm$ .020 & \boldmath{}\textbf{.626 $\pm$ .020}\unboldmath{} & .620 $\pm$ .020 & .625 $\pm$ .020 & .624 $\pm$ .020 \\
      & .85   & \boldmath{}\textbf{.854 $\pm$ .004}\unboldmath{} & \boldmath{}\textbf{.854 $\pm$ .004}\unboldmath{} & .860 $\pm$ .004 & .861 $\pm$ .003 & .866 $\pm$ .003 & .622 $\pm$ .020 & .629 $\pm$ .021 & \boldmath{}\textbf{.631 $\pm$ .021}\unboldmath{} & .626 $\pm$ .021 & .626 $\pm$ .021 \\
      & .80   & .808 $\pm$ .004 & \boldmath{}\textbf{.804 $\pm$ .004}\unboldmath{} & .811 $\pm$ .004 & .814 $\pm$ .004 & .819 $\pm$ .004 & .628 $\pm$ .021 & \boldmath{}\textbf{.636 $\pm$ .021}\unboldmath{} & .631 $\pm$ .021 & .632 $\pm$ .021 & .630 $\pm$ .021 \\
      & .75   & .760 $\pm$ .004 & \boldmath{}\textbf{.754 $\pm$ .004}\unboldmath{} & .760 $\pm$ .004 & .765 $\pm$ .004 & .770 $\pm$ .004 & .628 $\pm$ .021 & \boldmath{}\textbf{.637 $\pm$ .021}\unboldmath{} & .635 $\pm$ .021 & .634 $\pm$ .021 & .630 $\pm$ .021 \\
\midrule
\midrule
\multirow{6}{*}{\rotatebox[origin=c]{90}{\textbf{CSDS3}}}  & .99   & \boldmath{}\textbf{.991 $\pm$ .001}\unboldmath{} & .992 $\pm$ .001 & \boldmath{}\textbf{.991 $\pm$ .001}\unboldmath{} & \boldmath{}\textbf{.991 $\pm$ .001}\unboldmath{} & \boldmath{}\textbf{.991 $\pm$ .001}\unboldmath{} & .851 $\pm$ .003 & .851 $\pm$ .003 & .851 $\pm$ .003 & .851 $\pm$ .003 & .851 $\pm$ .003 \\
      & .95   & .948 $\pm$ .002 & \boldmath{}\textbf{.950 $\pm$ .002}\unboldmath{} & .948 $\pm$ .002 & \boldmath{}\textbf{.950 $\pm$ .002}\unboldmath{} & .951 $\pm$ .002 & \boldmath{}\textbf{.858 $\pm$ .003}\unboldmath{} & .857 $\pm$ .003 & \boldmath{}\textbf{.858 $\pm$ .003}\unboldmath{} & .857 $\pm$ .003 & .857 $\pm$ .003 \\
      & .90   & .902 $\pm$ .002 & \boldmath{}\textbf{.901 $\pm$ .002}\unboldmath{} & \boldmath{}\textbf{.901 $\pm$ .002}\unboldmath{} & .902 $\pm$ .002 & \boldmath{}\textbf{.901 $\pm$ .002}\unboldmath{} & .865 $\pm$ .003 & .865 $\pm$ .003 & .865 $\pm$ .003 & .865 $\pm$ .003 & .865 $\pm$ .003 \\
      & .85   & \boldmath{}\textbf{.850 $\pm$ .003}\unboldmath{} & .853 $\pm$ .003 & .849 $\pm$ .003 & \boldmath{}\textbf{.850 $\pm$ .003}\unboldmath{} & \boldmath{}\textbf{.850 $\pm$ .003}\unboldmath{} & .873 $\pm$ .003 & .873 $\pm$ .003 & .873 $\pm$ .003 & .873 $\pm$ .003 & .873 $\pm$ .003 \\
      & .80   & .809 $\pm$ .003 & .810 $\pm$ .003 & \boldmath{}\textbf{.805 $\pm$ .003}\unboldmath{} & \boldmath{}\textbf{.805 $\pm$ .003}\unboldmath{} & \boldmath{}\textbf{.805 $\pm$ .003}\unboldmath{} & .879 $\pm$ .003 & .879 $\pm$ .003 & \boldmath{}\textbf{.880 $\pm$ .003}\unboldmath{} & \boldmath{}\textbf{.880 $\pm$ .003}\unboldmath{} & \boldmath{}\textbf{.880 $\pm$ .003}\unboldmath{} \\
      & .75   & .760 $\pm$ .003 & .759 $\pm$ .003 & .759 $\pm$ .003 & \boldmath{}\textbf{.758 $\pm$ .003}\unboldmath{} & \boldmath{}\textbf{.758 $\pm$ .003}\unboldmath{} & .886 $\pm$ .003 & \boldmath{}\textbf{.887 $\pm$ .003}\unboldmath{} & \boldmath{}\textbf{.887 $\pm$ .003}\unboldmath{} & \boldmath{}\textbf{.887 $\pm$ .003}\unboldmath{} & \boldmath{}\textbf{.887 $\pm$ .003}\unboldmath{} \\
\midrule
\midrule
      & \#    & 15/42 & 23/42 & {20/42} & {16/42} & 24/42 & 29/42 & 28/42 & 31/42 & 30/42 & 27/42 \\
\midrule
      & $V$   & .019  $\pm$ .038 & .017  $\pm$ .038 & .018  $\pm$ .038 & .018  $\pm$ .038 & .018  $\pm$ .038 &       &       &       &       &  \\
\end{tabular}%

    }

    \label{tab:K_cross}
\end{table}


\end{document}